\documentclass{article} 
\usepackage{iclr2026_conference,times}

\usepackage{amsmath,amsfonts,bm}

\def\eqref#1{equation~\ref{#1}}

\def\1{\bm{1}}

\DeclareMathAlphabet{\mathsfit}{\encodingdefault}{\sfdefault}{m}{sl}
\SetMathAlphabet{\mathsfit}{bold}{\encodingdefault}{\sfdefault}{bx}{n}

\usepackage{amsmath,amssymb,amsthm}
\usepackage{url}
\usepackage{algorithm}
\usepackage{algpseudocode}
\usepackage{enumitem}

\newtheorem{proposition}{Proposition}

\newtheorem{remark}{Remark}

\usepackage{booktabs} 
\usepackage{subcaption}
\usepackage{bigstrut, tabularx, multirow, makecell, diagbox}
\usepackage[export]{adjustbox}
\usepackage{xcolor}
\usepackage[colorlinks=true,linkcolor=black,citecolor=blue,urlcolor=blue]{hyperref}
\definecolor{lightgray}{gray}{0.75}
\usepackage{thmtools,thm-restate}
\usepackage{subcaption}
\usepackage{soul}
\usepackage{fontawesome5}
\definecolor{GitHubBlack}{HTML}{24292E}

\title{FAST‑DIPS: Adjoint‑Free Analytic Steps and Hard‑Constrained Likelihood Correction for Diffusion‑Prior Inverse Problems}

\iclrfinalcopy
\author{
    Minwoo Kim\thanks{These authors contributed equally to this work}\;, \;
    Seunghyeok Shin\footnotemark[1]\;, \;
    Hongki Lim\thanks{Corresponding author}\\
  \parbox{\linewidth}{\mdseries
    Department of Electrical and Computer Engineering, Inha University\\
    Incheon, 22212, South Korea\\
    \texttt{\{ququlza1520, ssh8642\}@inha.edu, hklim@inha.ac.kr}
  }
}

\begin{document}

\maketitle

\vspace{-1.8em}
\begin{figure}[!hb]
\centering
\begin{subfigure}[t]{0.45\textwidth}
  \centering
  \includegraphics[width=\linewidth]{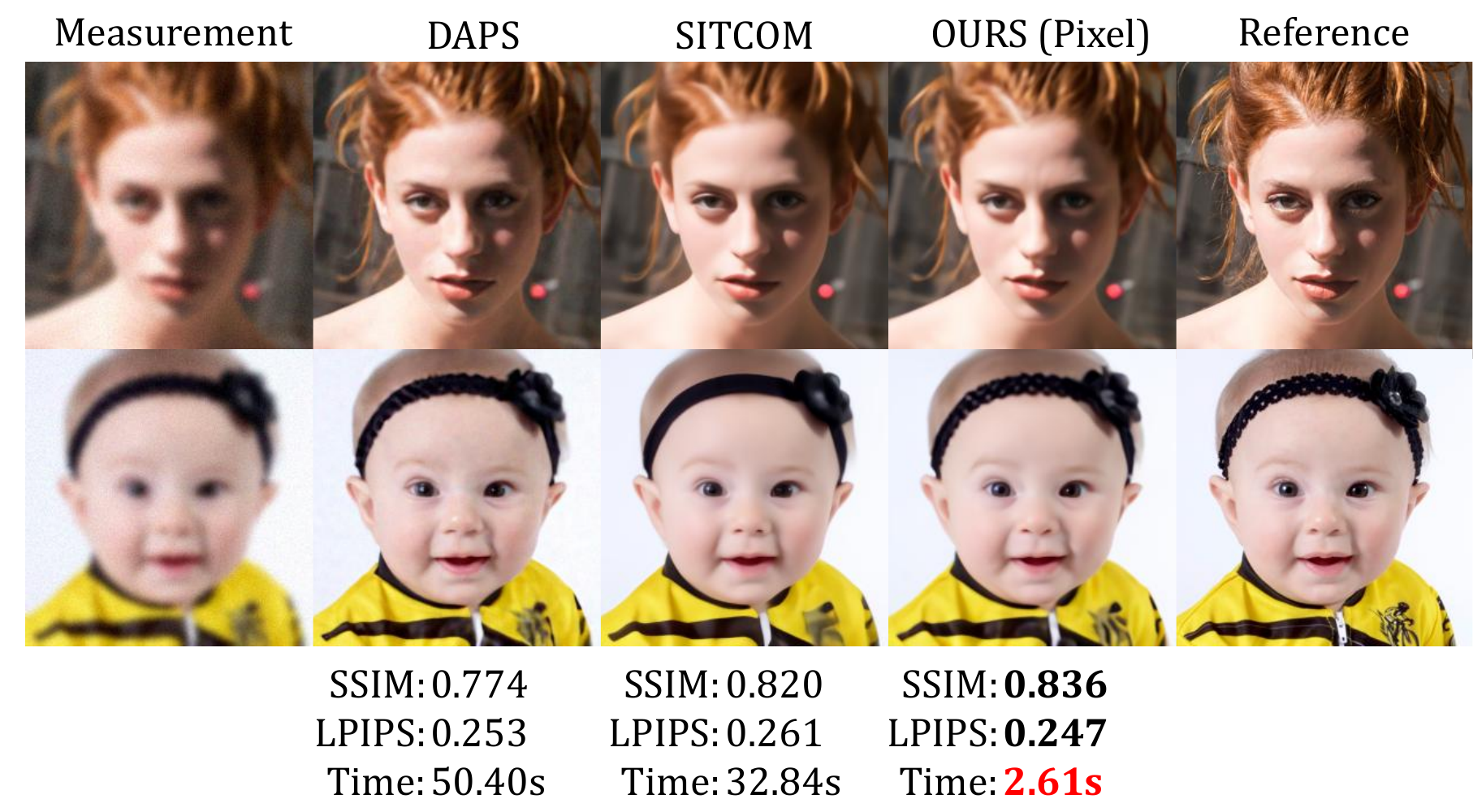}
  \vspace{-1.5em}
  \caption{Gaussian deblurring}\label{fig:main:a}
\end{subfigure}
\hfill
\begin{subfigure}[t]{0.45\textwidth}
  \centering
  \includegraphics[width=\linewidth]{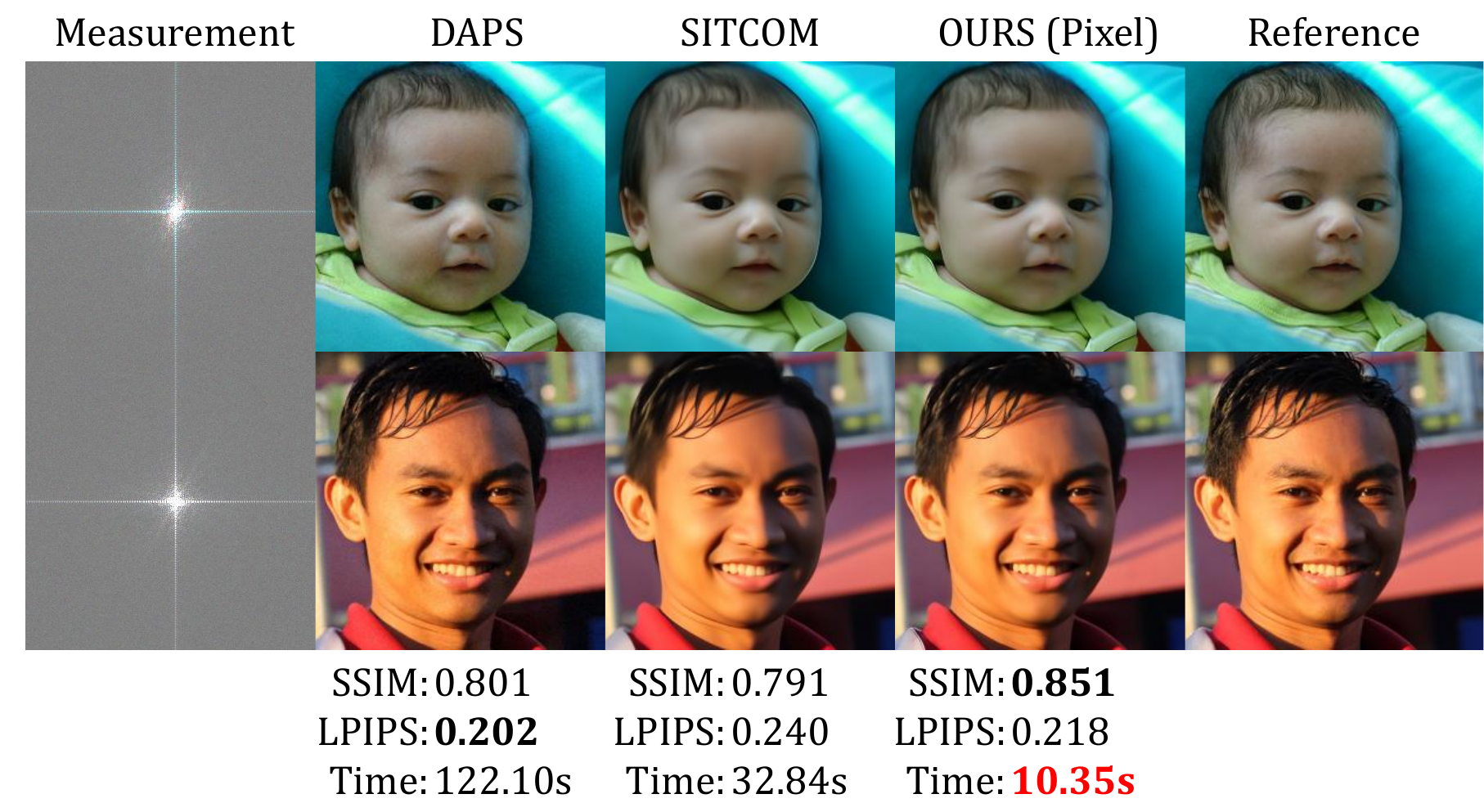}
  \vspace{-1.5em}
  \caption{Phase Retrieval}\label{fig:main:b}
\end{subfigure}
\vspace{0.3em}
\begin{subfigure}[t]{0.45\textwidth}
  \centering
  \includegraphics[width=\linewidth]{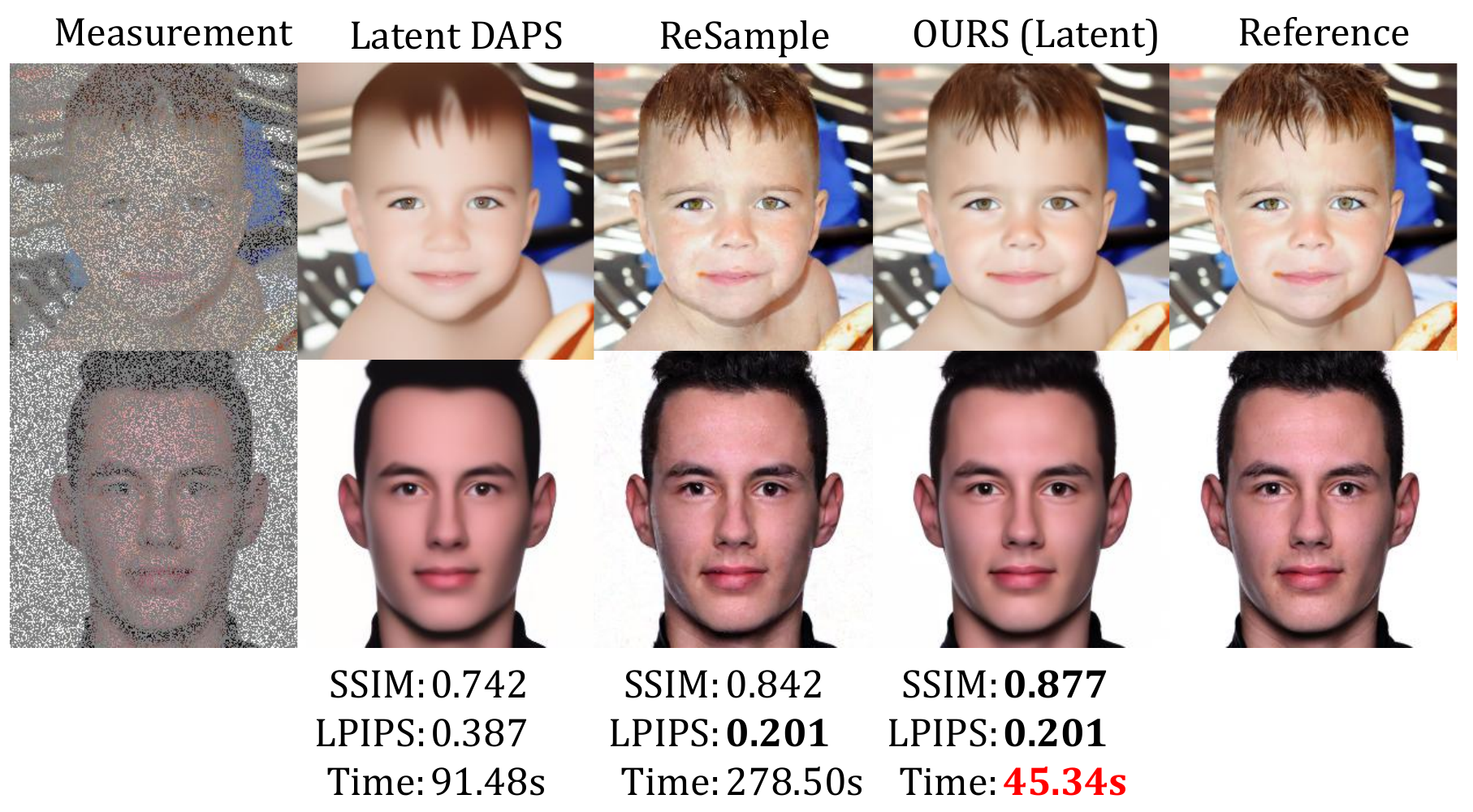}
  \vspace{-1.5em}
  \caption{Inpaint (random)}\label{fig:main:c}
\end{subfigure}
\hfill
\begin{subfigure}[t]{0.45\textwidth}
  \centering
  \includegraphics[width=\linewidth]{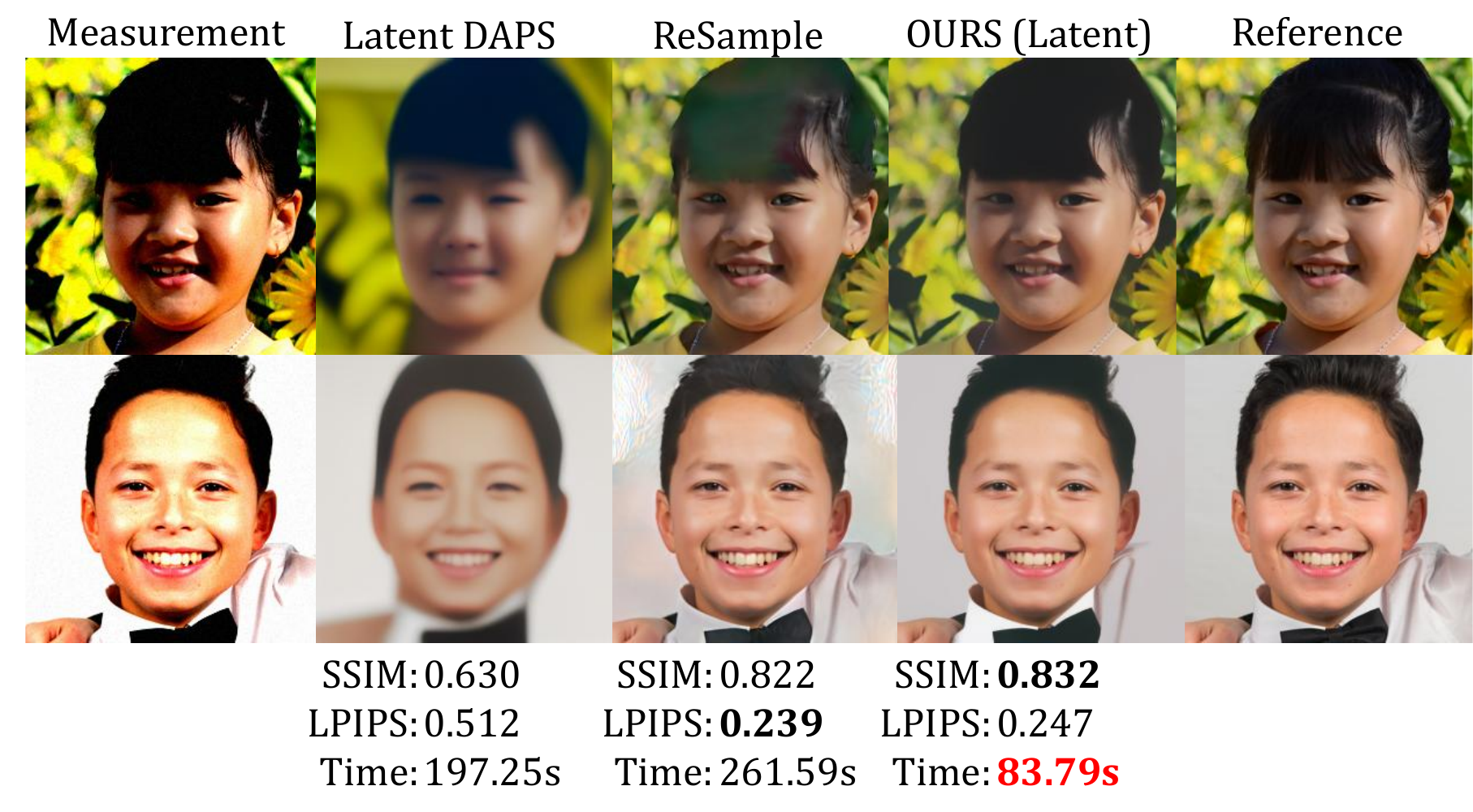}
  \vspace{-1.5em}
  \caption{High Dynamic Range}\label{fig:main:d}
\end{subfigure}
\vspace{-0.6em}
\caption{FFHQ results on four inverse problems: (a) Gaussian deblurring, (b) phase retrieval, (c) random inpainting, (d) HDR. Each panel shows the measurement, baselines, our \textsc{\textsc{fast-dips}} output, and the reference. SSIM/LPIPS and average per-image run-time (s) are overlaid; \textsc{\textsc{fast-dips}} attains comparable or higher quality with markedly lower run-time.}
\label{fig:main}
\end{figure}

\begin{abstract}
Training-free diffusion priors enable inverse-problem solvers without retraining, but for nonlinear forward operators data consistency often relies on repeated derivatives or inner optimization/MCMC loops with conservative step sizes, incurring many iterations and denoiser/score evaluations. We propose a training-free solver that replaces these inner loops with a hard measurement-space feasibility constraint (closed-form projection) and an analytic, model-optimal step size, enabling a small, fixed compute budget per noise level. Anchored at the denoiser prediction, the correction is approximated via an adjoint-free, ADMM-style splitting with projection and a few steepest-descent updates, using one VJP and either one JVP or a forward-difference probe, followed by backtracking and decoupled re-annealing. We prove local model optimality and descent under backtracking for the step-size rule, and derive an explicit KL bound for mode-substitution re-annealing under a local Gaussian conditional surrogate. We also develop a latent variant and a one-parameter pixel\(\rightarrow\)latent hybrid schedule. Experiments achieve competitive PSNR/SSIM/LPIPS with up to \(19.5\times\) speedup, without hand-coded adjoints or inner MCMC. Code and data: {\hypersetup{urlcolor=GitHubBlack}%
\href{https://github.com/ququlza/FAST-DIPS}{\faGithub}}
\end{abstract}
\section{Introduction}

Inverse problems seek to recover an unknown signal $\mathbf{x}$ from partial and noisy measurements $\mathbf{y}=\mathcal{A}(\mathbf{x})+\mathbf{n}$. Such problems are ubiquitous in science and engineering, yet they are often ill‑posed: distinct $\mathbf{x}$ can produce similar $\mathbf{y}$ due to the structure of the operator $\mathcal{A}$ and measurement noise $\mathbf{n}$. The Bayesian viewpoint constrains the solution via a prior and asks to sample from $p(\mathbf{x}\mid \mathbf{y})\propto p(\mathbf{y}\mid \mathbf{x})\,p(\mathbf{x})$.
 
Diffusion models have emerged as a powerful class of learned priors for modeling complex data distributions, including natural images (\cite{ho2020denoising,song2020improved,songdenoising,song2021score,dhariwal2021diffusion,karras2022elucidating, song2023consistency,lu2025simplifying}). Through reverse-time dynamics, they progressively transform simple noise into samples from the target distribution. This generative mechanism offers a natural framework for inverse problems, where the reverse-time SDE is guided by measurements to draw posterior samples.
 
Diffusion-based inverse problem solvers span both task-specific conditional models and training-free posterior samplers. Task-specific conditional diffusion/bridge models are trained for particular restoration or image-to-image tasks~(\cite{saharia2022image, liu2023i2sb}). In contrast, many training-free solvers start from an unconditional pretrained prior and impose data consistency at sampling time, including inference-time inpainting (\cite{lugmayr2022repaint}), linear-operator frameworks~(\cite{kawar2022ddrm, wang2023ddnm}), and decoupled/posterior-aware updates~(\cite{chung2023dps,chung2023direct,dou2024fps,zhang2025daps}). Other lines formulate plug‑and‑play optimization with diffusion denoisers (\cite{zhu2023diffpir,rout2024stsl,wu2024pnpdm,xu2024dpnp,mardani2024reddiff,wang2024dmplug,Yang2025layopt}), Monte‑Carlo guidance (\cite{cardoso2024mcgdiff}), or aim for faster sampling via preconditioning, parallelization, or fast/few-step samplers (\cite{garber2024ddpg,cao2024deqir,liu2024ssd,ye2024dds}). 

A central practical question is how data consistency is enforced. Many training-free designs inject
measurements through an operator-aware data-consistency (likelihood) update
(\cite{kawar2022ddrm,wang2023ddnm,rout2023psld,liu2024ssd,pandey2024pigdm,cao2024deqir,garber2024ddpg,dou2024fps,cardoso2024mcgdiff,ye2024dds}).
For some linear degradations, this update admits efficient closed-form structure by exploiting explicit adjoints and/or pseudo-inverses
(\cite{kawar2022ddrm,wang2023ddnm}).
When such closed-form operator primitives are unavailable (e.g., complex, ill-conditioned, or simulator-based forward models implemented in autodiff frameworks but lacking closed-form adjoints/pseudo-inverses),
the likelihood step is often implemented via inner iterative solvers (e.g., multiple gradient steps on a data-fidelity objective) or MCMC/Langevin subloops with conservative step sizes, which increases wall-clock cost due to many inner iterations and repeated score/denoiser calls
(\cite{zhu2023diffpir,wu2024pnpdm,xu2024dpnp,mardani2024reddiff,wang2024dmplug,zhang2025daps}).

A complementary design axis is latent vs. pixel execution. Latent diffusion models reduce dimensionality and sampling cost, and many recent posterior samplers therefore operate in latent space
(\cite{song2024resample,rout2024stsl,zhang2025daps}),
with an encoder $\mathcal{E}$ and decoder $\mathcal{D}$. When fidelity is defined in pixel space, latent-space likelihood updates require backpropagating through the decoder, e.g., $\nabla_{\mathbf{z}}\!\|\mathcal{A}(\mathcal{D}(\mathbf{z}))-\mathbf{y}\|^2$, which can be a throughput bottleneck. A pixel-space correction instead decodes $\mathbf{z}\!\to\!\mathbf{x}$, applies a data-fidelity correction in image space, then re-encodes $\mathbf{x}\!\to\!\mathbf{z}$; this avoids decoder backprop during correction, but can be sensitive to how Jacobian--vector products (JVPs) are computed for highly nonlinear $\mathcal{A}$.

Taken together, these considerations motivate training-free solvers that (i) impose data consistency without operator-specific primitives (e.g., hand-coded adjoints, pseudo-inverses, or SVDs), (ii) keep the per-noise-level correction lightweight by using a small, fixed compute budget (few inner steps and few operator/gradient calls) while delivering competitive reconstruction quality, and (iii) balance pixel- and latent-space computation to trade early-time throughput for late-time fidelity.
We propose \textsc{fast-dips} (Fast And STable Diffusion-prior Inverse Problem Solver), which centers each noise-level update at the denoiser prediction and applies a measurement-space feasibility correction with an interpretable residual budget under additive white Gaussian noise (AWGN), followed by decoupled re-annealing. Our implementation is adjoint-free and uses only a small fixed number of inexpensive autodiff primitives per level, avoiding hand-coded adjoints and inner MCMC while remaining applicable to nonlinear forward operators. We also develop a latent counterpart and a one-switch pixel\(\rightarrow\)latent hybrid schedule that corrects in pixels early (cheap) and in latents late (manifold-faithful).

\textsc{fast-dips} differs from PnP--ADMM (\cite{chan2016plug,venkatakrishnan2013plug}) in that the denoiser is not a (surrogate) proximal operator; it only provides an anchor, while an ADMM-style splitting enforces an explicit hard measurement-space constraint via projection. Unlike quadratic penalties $\lambda\|\mathcal{A}(\mathbf{x})-\mathbf{y}\|^2$ that require tuning and can be noise-sensitive, we use an indicator confidence-set likelihood with an interpretable residual budget. Unlike coupled DPS-style guidance (\cite{chung2023dps}), we keep traversal decoupled~(\cite{zhang2025daps}) and apply diffusion-kernel transport after each correction via mode-substitution re-annealing. Our pixel$\to$latent hybrid reduces decoder-Jacobian cost early while preserving manifold fidelity late. Finally, while we use autodiff for VJPs (and one JVP or a forward-difference surrogate)~(\cite{baydin2018ad}), we avoid hand-crafted adjoints and closed-form Jacobians/pseudo-inverses of $\mathcal{A}$.
\begin{figure*}
\centering
  \makebox[\textwidth][c]{%
    \includegraphics[width=0.92\linewidth]{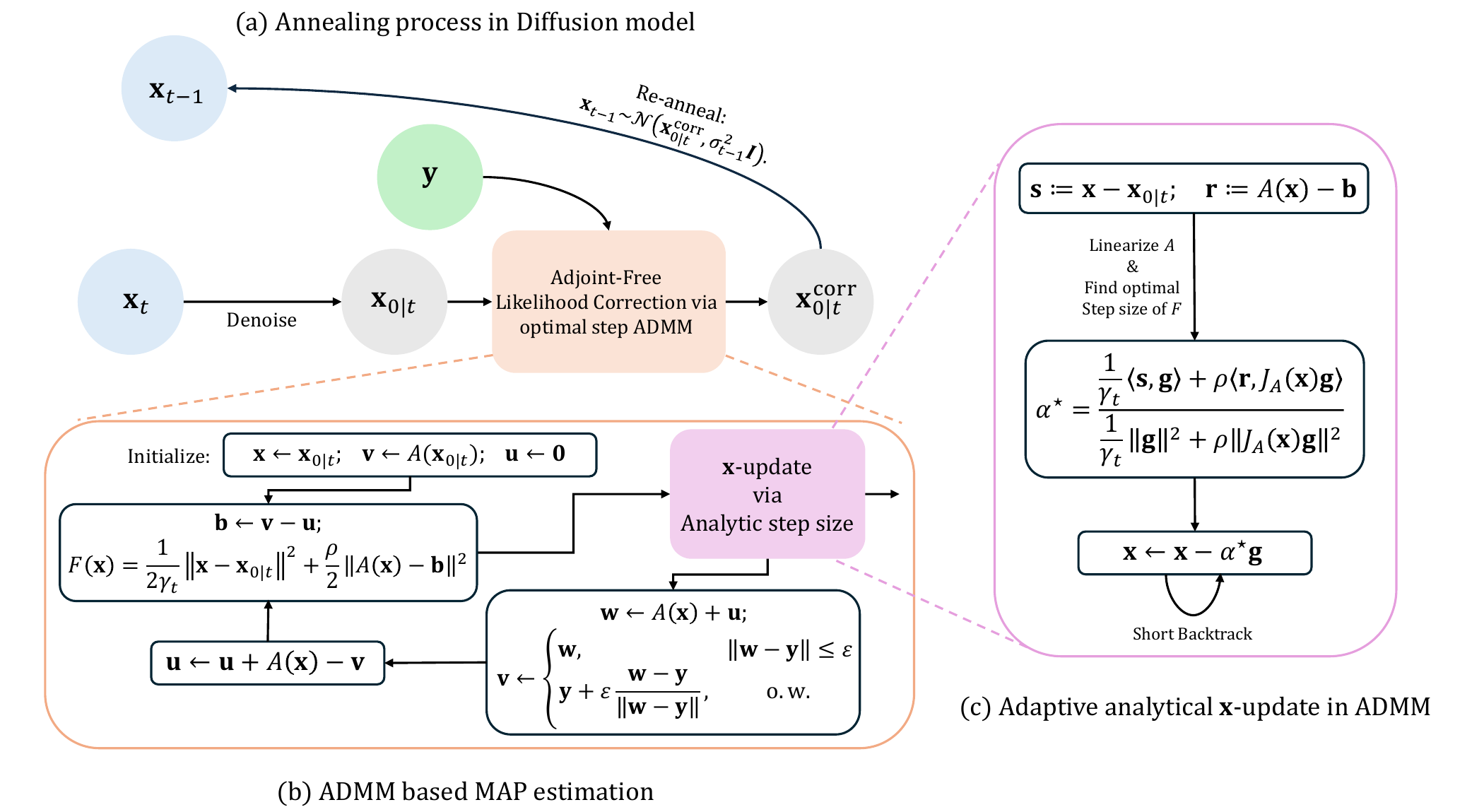}
  }
\caption{\textbf{\textsc{fast-dips} method sketch.} At each noise level $t$, we (1) compute a denoiser anchor $\mathbf{x}_{0\mid t}$, (2) perform a hard-constrained measurement-space correction under an $\ell_2$ feasibility ball via a few-step ADMM-style splitting (closed-form projection + a few descent steps with analytic $\alpha^\star$ using one VJP and either one JVP or a forward-difference probe), and (3) re-anneal to obtain $\mathbf{x}_{t-1}$.
}
\label{fig:method}
\end{figure*}

Our contributions can be summarized as follows:
\begin{itemize}
  \item \textbf{Adjoint-free feasibility correction.} At each diffusion step, we enforce measurement-space feasibility around the denoiser prediction with a hard residual budget and a schedule-aware trust region, using an adjoint-free splitting solver with a closed-form projection and a small fixed inner compute budget, avoiding hand-crafted adjoints and inner MCMC.
  \item \textbf{Theory for step size and re-annealing.} We prove the proposed step size is locally model-optimal and, with backtracking, yields descent for the current subproblem (Proposition~\ref{prop:alphastar-method}), and we provide an explicit KL bound for the per-step Gaussian-injected distribution induced by mode-substitution re-annealing under a local Gaussian conditional surrogate (Proposition~\ref{prop:kl-bound-method}).
  \item \textbf{Latent and hybrid execution with speed gains.} We derive a latent counterpart via \(\mathcal{A}\!\circ\!\mathcal{D}\) and a one-switch pixel\(\rightarrow\)latent hybrid that balances early-time efficiency and late-time fidelity; across eight linear and nonlinear tasks, we achieve competitive quality and up to \(19.5\times\) speedup on FFHQ pixel-space tasks with robust default hyperparameters.
\end{itemize}
 
Orthogonal to our contributions, fast/few-step samplers and parallel sampling schemes can reduce the diffusion sampling cost (\cite{zhao2024cosign,cao2024deqir,liu2024ssd,ye2024dds}). \textsc{fast-dips} complements such advances by minimizing inner correction cost and adjoint engineering while preserving an explicit measurement-space feasibility set via projection in the split variable, so these techniques are composable with our approach.
\section{Method}
\label{sec:method}

\subsection{High-level overview and design choices}
\label{sec:overview}

We briefly summarize the goals and main design choices of \textsc{fast-dips} before
introducing the detailed derivations.

\paragraph{Setting and practical issues.}
We use a pretrained diffusion model as a prior for inverse problems
$\mathbf{y}=\mathcal{A}(\mathbf{x}_0)+\mathbf{n}$, where $\mathcal{A}$ may
be nonlinear. Existing training-free solvers often enforce data consistency
via many gradient steps or inner Langevin/MCMC chains at each noise level,
and control their influence through a soft data-fidelity weight. This can be
computationally heavy (many evaluations of $\mathcal{A}$ and its gradient)
and sensitive to step sizes and noise miscalibration.

\paragraph{Per-level update in \textsc{fast-dips}.}
At each diffusion level $t$, \textsc{fast-dips} performs a single
correct--then--noise update with three ingredients:

\begin{enumerate}[leftmargin=*]
  \item \textbf{Local hard-constrained MAP.}
  The denoiser output $\mathbf{x}_{0\mid t}$ is treated as the center of a
  local Gaussian prior, and data consistency is enforced via a simple
  measurement-space constraint
  $\|\mathcal{A}(\mathbf{x})-\mathbf{y}\|\le\varepsilon$. This defines a
  constrained proximal problem whose solution
  $\mathbf{x}^{\mathrm{corr}}_{0\mid t}$ is the likelihood correction at
  level $t$.

  \item \textbf{Adjoint-free ADMM with analytic step size.} 
  We solve the constrained problem with a small, fixed number of deterministic ADMM-style iterations. The split measurement variable is updated by a closed-form projection onto the constraint set, while the image variable is updated by a few steepest-descent steps whose step size is computed per step by minimizing a local 1D quadratic model (and refined by backtracking). This keeps the number of evaluations of $\mathcal{A}$ and its derivatives small. No inner MCMC chain and no hand-coded adjoint operator are required: VJPs/JVPs are obtained from autodiff, and when JVPs are unavailable we use a single finite-difference probe (Remark~\ref{rem:fd}). This leads to substantially fewer total function calls than methods based on long gradient/MCMC inner loops.
  \item \textbf{Re-annealing.}
  After the correction, we re-anneal by adding Gaussian noise with the next
  diffusion variance, decoupling the
  measurement-aware update from the diffusion noise.
\end{enumerate}

\paragraph{Theoretical support.}
The inner $\mathbf{x}$-update uses an analytic step size that minimizes the local quadratic model,
and Armijo backtracking ensures an accepted step decreases the current $\mathbf{x}$-subproblem objective under mild regularity (Proposition~\ref{prop:alphastar-method}; proof in Appendix~\ref{app:theory}). For completeness, Appendix~\ref{app:theory} also includes (i) a fixed-point $\Rightarrow$ KKT characterization for the exact scaled-ADMM mapping (Proposition~\ref{prop:kkt-method}), and (ii) a KL bound for the Gaussian-injected distribution induced by mode-substitution re-annealing under a local Gaussian conditional approximation (Proposition~\ref{prop:kl-bound-method}). With nonlinear $\mathcal{A}$ the per-level problem is generally nonconvex,
and we do not claim global convergence.

\paragraph{Practical per-step recipe.}
Figure~\ref{fig:method} and Algorithms~\ref{alg:fast_dips_pixel}--\ref{alg:fast_dips_latent} give the full procedure. Let $\mathbf{x}_{\mathrm{den}}$ denote the pretrained denoiser. One reverse step of \textsc{fast-dips} at level $t$ proceeds as:
(i) \emph{Anchor and initialization:} compute the denoiser anchor $\mathbf{x}_{0\mid t}=\mathbf{x}_{\mathrm{den}}(\mathbf{x}_t,\sigma_t)$ and set
$\mathbf{x}^{(0)}\!\leftarrow\!\mathbf{x}_{0\mid t}$, $\mathbf{v}^{(0)}\!\leftarrow\!\mathcal{A}(\mathbf{x}^{(0)})$, $\mathbf{u}^{(0)}\!\leftarrow\!\mathbf{0}$.
(ii) \emph{Few-step ADMM-style correction:} for $k=0,\ldots,K\!-\!1$,
update $\mathbf{x}$ by applying a few successive steepest-descent steps on $F$ in Equation~\ref{eq:F-def} using the gradient in Equation~\ref{eq:F-grad},
with the step size computed as $\alpha^\star$ in Equation~\ref{eq:alphastar-method} (via one VJP and one JVP, or the finite-difference probe in Equation~\ref{eq:Jg-approx}/Remark~\ref{rem:fd}) and then refined by backtracking.
Then update $\mathbf{v}^{(k+1)}$ by the projection in Equation~\ref{eq:vsub}--Equation~\ref{eq:projection-ball} and $\mathbf{u}^{(k+1)}$ by Equation~\ref{eq:usub}.
After $K$ iterations, set $\mathbf{x}^{\mathrm{corr}}_{0\mid t}\!\leftarrow\!\mathbf{x}^{(K)}$.
(iii) \emph{Re-anneal:} sample $\mathbf{x}_{t-1}=\mathbf{x}^{\mathrm{corr}}_{0\mid t}+\sigma_{t-1}\boldsymbol{\xi}$ with $\boldsymbol{\xi}\sim\mathcal{N}(\mathbf{0},I)$.

\subsection{Problem setup}
\label{sec:setup}

Let $\mathbf{x}_0\!\in\!\mathbb{R}^{C\!\times\!H\!\times\!W}$ denote the clean image stacked as a vector and
\begin{equation}
\mathbf{y} \;=\; \mathcal{A}(\mathbf{x}_0) + \mathbf{n},
\qquad \mathbf{n}\sim\mathcal{N}(\mathbf{0},\,\beta^2 I),
\label{eq:forward-method}
\end{equation}
where $\mathcal{A}:\mathbb{R}^{CHW}\!\to\!\mathbb{R}^{m}$ is a (possibly nonlinear) forward operator.
Throughout the paper we assume additive white Gaussian noise (AWGN) with variance $\beta^2$ and use the standard Euclidean norm in measurement space.

\subsection{Probabilistic motivation and the per-level objective}
\label{sec:probabilistic-objective}

The reverse process of the diffusion model, conditioned on $\mathbf{y}$, is described by the reverse-time SDE~(\cite{song2021score}):
\begin{equation}
d\mathbf{x}_t = -2\dot{\sigma}(t)\sigma(t) \nabla_{\mathbf{x}_t} \log p(\mathbf{x}_t|\mathbf{y}; \sigma_t) \, dt
+ \sqrt{2\dot{\sigma}(t)\sigma(t)} \, d\mathbf{w}_t .
\end{equation}
We do not use the SDE form directly; instead, we derive a fast per-level update from the conditional
factorization in Equation~\ref{eq:cond-factor-method}.
At each diffusion level $t$ we maintain a state $\mathbf{x}_t$ and wish to transform the time--marginal $p(\mathbf{x}_t\!\mid\!\mathbf{y})$
into a good approximation to $p(\mathbf{x}_{t-1}\!\mid\!\mathbf{y})$ by performing a local, measurement-aware likelihood correction around
the denoiser's prediction. The derivation proceeds from the conditional factorization
\begin{equation}
p(\mathbf{x}_0\mid \mathbf{x}_t,\mathbf{y}) \;\propto\; p(\mathbf{x}_0\mid \mathbf{x}_t)\,p(\mathbf{y}\mid \mathbf{x}_0),
\label{eq:cond-factor-method}
\end{equation}
and two modeling choices: a local Laplace surrogate for $p(\mathbf{x}_0\!\mid\!\mathbf{x}_t)$ and a set-valued likelihood in measurement space.

\paragraph{Local prior surrogate around the denoiser.}
Write
\begin{equation}
\mathbf{x}_{0\mid t} \;:=\; \mathbf{x}_{\mathrm{den}}(\mathbf{x}_t,\sigma_t),
\end{equation}
and approximate the intractable $p(\mathbf{x}_0\!\mid\!\mathbf{x}_t)$ by a Gaussian centered at $\mathbf{x}_{0\mid t}$,
\begin{equation}
p(\mathbf{x}_0\mid \mathbf{x}_t)\;\approx\;\tilde{p}_t(\mathbf{x}_0\mid \mathbf{x}_t)
\;\propto\; \exp\!\Big(-\tfrac{1}{2\gamma_t}\,\|\mathbf{x}_0-\mathbf{x}_{0\mid t}\|^2\Big),
\label{eq:laplace-prior-method}
\end{equation}
where $\gamma_t>0$ plays the role of a local prior variance. Inspired by schedule-tied noise-annealing heuristics in decoupled inverse solvers~(\cite{zhang2025daps}),
we set $\gamma_t=\sigma_t^2$, so that the proximal trust region naturally tightens with annealing.

\paragraph{Conservative likelihood via a measurement-space feasibility ball.}
Under AWGN, the Gaussian likelihood is
\begin{equation}
p(\mathbf{y}\mid \mathbf{x}_0,\beta)\;\propto\;\beta^{-m}\exp\!\Big(-\tfrac{1}{2\beta^2}\,\|\mathcal{A}(\mathbf{x}_0)-\mathbf{y}\|^2\Big).
\label{eq:gaussian-likelihood}
\end{equation}
We replace $p(\mathbf{y}\mid \mathbf{x}_0,\beta)$ by a set-valued surrogate likelihood $\tilde{\ell}_\varepsilon(\mathbf{y}\mid \mathbf{x}_0)$ to improve robustness to noise miscalibration.
If $\beta$ is known, then for any confidence level $1-\delta$, a $(1-\delta)$ confidence region for the residual $\mathcal{A}(\mathbf{x}_0)-\mathbf{y}$ under Equation~\ref{eq:gaussian-likelihood} is the Euclidean ball $\{\mathbf{v}:\|\mathbf{v}-\mathbf{y}\|\le \varepsilon\}$ with $\varepsilon=\beta\sqrt{\chi^2_{m,1-\delta}}$~(\cite{casella1990statistical}); using this region yields an indicator likelihood.
In our implementation, we treat $\varepsilon$ as a user-chosen data-consistency tolerance and do not calibrate it via a specific choice of $\delta$.
If $\beta$ is unknown, profiling it out gives $-\log p(\mathbf{y}\mid \mathbf{x}_0,\hat{\beta}(\mathbf{x}_0))\propto m\log \|\mathcal{A}(\mathbf{x}_0)-\mathbf{y}\|$~(\cite{casella1990statistical}),
which is monotone in the residual norm and motivates enforcing $\|\mathcal{A}(\mathbf{x}_0)-\mathbf{y}\|\le \varepsilon$ for a chosen budget $\varepsilon>0$~(\cite{engl1996regularization}).
Both viewpoints lead to
\begin{equation}
\tilde{\ell}_\varepsilon(\mathbf{y}\mid \mathbf{x}_0)\;\propto\;\mathbf{1}\!\left\{\|\mathcal{A}(\mathbf{x}_0)-\mathbf{y}\|\le \varepsilon\right\},
\label{eq:likelihood-indicator}
\end{equation}
which replaces $p(\mathbf{y}\mid \mathbf{x}_0,\beta)$ in Equation~\ref{eq:cond-factor-method} and yields the hard constraint in Equation~\ref{eq:constrained-prox}.

\paragraph{Per-level surrogate conditional and MAP.}
Combining Equation~\ref{eq:laplace-prior-method} and Equation~\ref{eq:likelihood-indicator} with Equation~\ref{eq:cond-factor-method} yields
\begin{equation}
\tilde{p}_t(\mathbf{x}_0\mid \mathbf{x}_t,\mathbf{y}) \;\propto\;
\exp\!\Big(-\tfrac{1}{2\gamma_t}\,\|\mathbf{x}_0-\mathbf{x}_{0\mid t}\|^2\Big)\;
\mathbf{1}\!\left\{\|\mathcal{A}(\mathbf{x}_0)-\mathbf{y}\|\le \varepsilon\right\}.
\label{eq:surrogate-conditional-method}
\end{equation}
We take the mode of Equation~\ref{eq:surrogate-conditional-method} as the likelihood correction at level $t$, which solves
\begin{equation}
\mathbf{x}_{0\mid t}^{\mathrm{corr}} \;\in\; \arg\min_{\mathbf{x}\in\mathbb{R}^{CHW}}\;\frac{1}{2\gamma_t}\,\|\mathbf{x}-\mathbf{x}_{0\mid t}\|^2
\;\;\text{s.t.}\;\; \|\mathcal{A}(\mathbf{x})-\mathbf{y}\|\le \varepsilon.
\label{eq:constrained-prox}
\end{equation}
Problem Equation~\ref{eq:constrained-prox} is a hard-constrained proximal objective: the first term is a schedule-aware trust region around the denoiser estimate,
while the constraint enforces measurement feasibility within a user-specified residual budget in measurement space.

\subsection{Decoupled re-annealing and connection to time--marginals}
\label{sec:decoupled}

Let $\kappa_{t\rightarrow t-1}(\mathbf{x}_{t-1}\mid \mathbf{x}_0)=\mathcal{N}(\mathbf{x}_{t-1};\,\mathbf{x}_0,\sigma_{t-1}^2 I)$ denote the diffusion kernel
that transports the clean image to the next diffusion state. The exact time--marginal recursion~(\cite{zhang2025daps}) is
\begin{equation}
p(\mathbf{x}_{t-1}\mid \mathbf{y})
=\int \!\Bigg[ \int \kappa_{t\rightarrow t-1}(\mathbf{x}_{t-1}\mid \mathbf{x}_0)\,p(\mathbf{x}_0\mid \mathbf{x}_t,\mathbf{y})\,d\mathbf{x}_0 \Bigg]
p(\mathbf{x}_t\mid \mathbf{y})\,d\mathbf{x}_t.
\label{eq:transport-identity-method}
\end{equation}
Thus, transforming $p(\mathbf{x}_t\!\mid\!\mathbf{y})$ to $p(\mathbf{x}_{t-1}\!\mid\!\mathbf{y})$ amounts to obtaining a representative
$\mathbf{x}_0\!\sim\!p(\mathbf{x}_0\!\mid\!\mathbf{x}_t,\mathbf{y})$ and injecting Gaussian noise of variance $\sigma_{t-1}^2$.
We approximate $p(\mathbf{x}_0\!\mid\!\mathbf{x}_t,\mathbf{y})$ by $\tilde{p}_t$ in Equation~\ref{eq:surrogate-conditional-method} and substitute its mode, yielding
\begin{equation}
\mathbf{x}_{t-1} \;=\; \mathbf{x}_{0\mid t}^{\mathrm{corr}} + \sigma_{t-1}\,\boldsymbol{\xi},
\qquad \boldsymbol{\xi}\sim\mathcal{N}(\mathbf{0},I).
\label{eq:anneal-sampling}
\end{equation}
Appendix~\ref{app:theory} provides an explicit KL bound for the mode-substitution re-annealing step (for fixed $\mathbf{x}_t$), comparing it to the per-step Gaussian-injected distribution obtained by sampling from a local Gaussian surrogate for $p(\mathbf{x}_0\mid \mathbf{x}_t,\mathbf{y})$ (Proposition~\ref{prop:kl-bound-method}).

\subsection{Pixel-space ADMM solver with adjoint-free updates}
\label{sec:admm}

We solve Equation~\ref{eq:constrained-prox} via variable splitting~(\cite{combettes2011proximal,boyd2011admm}) in pixel space.
Introduce an auxiliary $\mathbf{v}\!\approx\!\mathcal{A}(\mathbf{x})$ and the feasibility set $\mathcal{C}:=\{\mathbf{v}:\|\mathbf{v}-\mathbf{y}\|\le \varepsilon\}$.
Consider
\begin{equation}
\min_{\mathbf{x},\mathbf{v}}\;\; \underbrace{\tfrac{1}{2\gamma_t}\|\mathbf{x}-\mathbf{x}_{0\mid t}\|^2}_{f(\mathbf{x})}
\;+\; \underbrace{\iota_{\mathcal{C}}(\mathbf{v})}_{g(\mathbf{v})}
\quad \text{s.t.} \quad \mathcal{A}(\mathbf{x})-\mathbf{v}=\mathbf{0},
\label{eq:split-problem}
\end{equation}
where $\iota_{\mathcal{C}}$ is the indicator of $\mathcal{C}$. Using a scaled ADMM-style augmented Lagrangian splitting with penalty $\rho>0$ and scaled dual $\mathbf{u}$, we iterate
\begin{align}
\mathbf{x}^{k+1}&=\arg\min_{\mathbf{x}} \;\frac{1}{2\gamma_t}\|\mathbf{x}-\mathbf{x}_{0\mid t}\|^2 + \frac{\rho}{2}\|\mathcal{A}(\mathbf{x})-\mathbf{v}^k+\mathbf{u}^k\|^2, \label{eq:xsub}\\
\mathbf{v}^{k+1}&=\Pi_{\mathcal{C}}\!\Big(\mathcal{A}(\mathbf{x}^{k+1})+\mathbf{u}^k\Big), \label{eq:vsub}\\
\mathbf{u}^{k+1}&=\mathbf{u}^k + \mathcal{A}(\mathbf{x}^{k+1})-\mathbf{v}^{k+1}. \label{eq:usub}
\end{align}
Let $\mathbf{b}^k:=\mathbf{v}^k-\mathbf{u}^k$ for brevity.
\emph{In \textsc{fast-dips} we run only a small fixed $K$ and do not solve Equation~\ref{eq:xsub} exactly: we approximate the $\mathbf{x}$-update by a steepest-descent step on $F$
with analytic step size and backtracking (below).}
Each correction iteration requires at least one forward evaluation of $\mathcal{A}$ (often reused across forming $\mathcal{A}(\mathbf{x})-\mathbf{b}^k$, the $\mathbf{v}$-update, and the $\mathbf{u}$-update by caching), one VJP to form $J_{\mathcal{A}}(\mathbf{x})^\top(\mathcal{A}(\mathbf{x})-\mathbf{b}^k)$, and either one JVP or one additional forward probe for $J_{\mathcal{A}}(\mathbf{x})\mathbf{g}$; backtracking may add a few extra forward calls. The $\mathbf{v}$-update is closed-form.

\paragraph{Closed-form projection onto the measurement ball.}
For $\mathcal{C}=\{\mathbf{v}\in\mathbb{R}^{m}:\|\mathbf{v}-\mathbf{y}\|\le \varepsilon\}$, the Euclidean projection in Equation~\ref{eq:vsub}
is the standard radial shrink~(\cite{parikh2014proximal})
\begin{equation}
\Pi_{\mathcal{C}}(\mathbf{w}) \;=\;
\begin{cases}
\mathbf{w}, & \|\mathbf{w}-\mathbf{y}\|\le \varepsilon,\\[4pt]
\mathbf{y} + \varepsilon\,\dfrac{\mathbf{w}-\mathbf{y}}{\|\mathbf{w}-\mathbf{y}\|}, & \text{otherwise.}
\end{cases}
\label{eq:projection-ball}
\end{equation}

\paragraph{Efficient $\mathbf{x}$-update.}
Define the smooth objective for Equation~\ref{eq:xsub}
\begin{equation}
F(\mathbf{x}) \;=\; \frac{1}{2\gamma_t}\|\mathbf{x}-\mathbf{x}_{0\mid t}\|^2 + \frac{\rho}{2}\|\mathcal{A}(\mathbf{x})-\mathbf{b}^k\|^2.
\label{eq:F-def}
\end{equation}
Let $\mathbf{s}:=\mathbf{x}-\mathbf{x}_{0\mid t}$ and $\mathbf{r}:=\mathcal{A}(\mathbf{x})-\mathbf{b}^k$. Its gradient is
\begin{equation}
\mathbf{g} \;=\; \nabla F(\mathbf{x}) \;=\; \frac{1}{\gamma_t}\,(\mathbf{x}-\mathbf{x}_{0\mid t}) \;+\; \rho\,J_\mathcal{A}(\mathbf{x})^\top\!\big(\mathcal{A}(\mathbf{x})-\mathbf{b}^k\big),
\qquad \mathbf{x} \leftarrow \mathbf{x} - \alpha\,\mathbf{g},
\label{eq:F-grad}
\end{equation}
where $J_\mathcal{A}(\mathbf{x})$ is the Jacobian of $\mathcal{A}$ at $\mathbf{x}$. Crucially, both the VJP
$J_\mathcal{A}(\mathbf{x})^\top\!\mathbf{r}$ and the JVP $J_\mathcal{A}(\mathbf{x})\,\mathbf{g}$ can be obtained from autodiff~(\cite{baydin2018ad}).

We linearize $\mathcal{A}$ along the descent direction:
\begin{equation}
\mathcal{A}(\mathbf{x}-\alpha \mathbf{g})\;\approx\;\mathcal{A}(\mathbf{x})-\alpha\,J_\mathcal{A}(\mathbf{x})\,\mathbf{g}.
\label{eq:local-linearization}
\end{equation}
Substituting Equation~\ref{eq:local-linearization} into $F(\mathbf{x}-\alpha \mathbf{g})$ yields a one-dimensional quadratic model~(\cite{nocedal2006numerical})
\begin{equation}
\tilde{F}(\alpha)=\frac{1}{2\gamma_t}\|\mathbf{s}-\alpha \mathbf{g}\|^2 + \frac{\rho}{2}\|\mathbf{r}-\alpha J_\mathcal{A}(\mathbf{x})\mathbf{g}\|^2,
\label{eq:F-model}
\end{equation}
whose exact minimizer is
\begin{equation}
\alpha^\star \;=\;
\frac{\tfrac{1}{\gamma_t}\langle \mathbf{s},\,\mathbf{g}\rangle \;+\; \rho\,\langle \mathbf{r},\,J_\mathcal{A}(\mathbf{x})\mathbf{g}\rangle}
{\tfrac{1}{\gamma_t}\|\mathbf{g}\|^2 \;+\; \rho\,\|J_\mathcal{A}(\mathbf{x})\mathbf{g}\|^2}.
\label{eq:alphastar-method}
\end{equation}
The numerator can be rewritten as
$\tfrac{1}{\gamma_t}\langle \mathbf{s},\mathbf{g}\rangle+\rho\langle \mathbf{r},J_{\mathcal{A}}(\mathbf{x})\mathbf{g}\rangle
=\langle \tfrac{1}{\gamma_t}\mathbf{s}+\rho J_{\mathcal{A}}(\mathbf{x})^\top\mathbf{r},\,\mathbf{g}\rangle=\|\mathbf{g}\|^2$.
Hence $\alpha^\star\ge 0$ whenever $\mathbf{g}\neq \mathbf{0}$ (the clamp is a numerical safeguard); if $\mathbf{g}=\mathbf{0}$ we are already stationary for the
current $\mathbf{x}$-subproblem and set $\alpha=0$.
Computing $\alpha^\star$ (when $\mathbf{g}\neq \mathbf{0}$) requires one forward evaluation of $\mathcal{A}$ (to form $\mathbf{r}$), one VJP to form $\mathbf{g}$,
and one JVP (or one additional forward probe) to form $J_{\mathcal{A}}(\mathbf{x})\mathbf{g}$; backtracking evaluates $F$ a few times but reuses the same descent direction.
In practice we initialize $\alpha$ by $\alpha^\star$ and apply Armijo backtracking to ensure decrease of $F$.

\begin{restatable}[Local model-optimal step and descent]{proposition}{AlphaDescentProp}
\label{prop:alphastar-method}
Under $C^1$ regularity of $\mathcal{A}$ near $\mathbf{x}$ and local Lipschitzness of $J_{\mathcal{A}}$, if $\mathbf{g}\neq \mathbf{0}$ then $\alpha^\star$ in Equation~\ref{eq:alphastar-method}
minimizes the quadratic model $\tilde{F}$ in Equation~\ref{eq:F-model}. Moreover, using $\alpha^\star$ to initialize an Armijo backtracking line search guarantees that the accepted step yields monotone decrease of the current $\mathbf{x}$-subproblem objective $F$ in Equation~\ref{eq:F-def} (with $\mathbf{b}^k$ fixed), even when Equation~\ref{eq:local-linearization} is only locally accurate. If $\mathbf{g}=\mathbf{0}$, then $\mathbf{x}$ is stationary for $F$ and no step is taken.
\end{restatable}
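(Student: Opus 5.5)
The plan splits the claim into three parts: minimality of $\alpha^\star$ for the quadratic model $\tilde F$, termination of the Armijo backtracking with a decrease of $F$, and the trivial case $\mathbf{g}=\mathbf{0}$.

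\textbf{Minimality of $\alpha^\star$.} Expanding Equation~\ref{eq:F-model} gives the one-dimensional quadratic
\[
\tilde F(\alpha)=\tilde F(0)-\alpha\Big(\tfrac{1}{\gamma_t}\langle \mathbf{s},\mathbf{g}\rangle+\rho\langle \mathbf{r},J_{\mathcal{A}}(\mathbf{x})\mathbf{g}\rangle\Big)+\tfrac{\alpha^2}{2}\Big(\tfrac{1}{\gamma_t}\|\mathbf{g}\|^2+\rho\|J_{\mathcal{A}}(\mathbf{x})\mathbf{g}\|^2\Big).
\]
Since $\gamma_t>0$ and $\mathbf{g}\neq\mathbf{0}$, the curvature coefficient is at least $\|\mathbf{g}\|^2/\gamma_t>0$. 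The model is therefore strictly convex, and its unique minimizer is the root of $\tilde F'(\alpha)=0$, which is exactly Equation~\ref{eq:alphastar-method}. By the identity shown just after Equation~\ref{eq:alphastar-method}, the numerator equals $\|\mathbf{g}\|^2$. Hence $\alpha^\star>0$, and it is bounded above by $\gamma_t$.

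\textbf{Termination of backtracking.} By $C^1$ regularity of $\mathcal{A}$, the objective $F$ is $C^1$ near $\mathbf{x}$, with gradient $\mathbf{g}$ as in Equation~\ref{eq:F-grad}. The directional derivative along $-\mathbf{g}$ is $-\|\mathbf{g}\|^2<0$. Local Lipschitzness of $J_{\mathcal{A}}$, combined with continuity and local boundedness of $\mathcal{A}$ and $J_{\mathcal{A}}$, makes $\nabla F$ locally Lipschitz with some constant $L$ on a ball $B$ around $\mathbf{x}$. Concretely, $\nabla F=\tfrac{1}{\gamma_t}\mathbf{s}+\rho J_{\mathcal{A}}^\top\mathbf{r}$ is a product of locally bounded, locally Lipschitz factors. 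The descent lemma then gives
\[
F(\mathbf{x}-\alpha\mathbf{g})\le F(\mathbf{x})-\alpha\|\mathbf{g}\|^2+\tfrac{L\alpha^2}{2}\|\mathbf{g}\|^2
\]
whenever $\mathbf{x}-\alpha\mathbf{g}\in B$. Fix an Armijo parameter $c\in(0,1)$. The Armijo condition $F(\mathbf{x}-\alpha\mathbf{g})\le F(\mathbf{x})-c\alpha\|\mathbf{g}\|^2$ holds for all $\alpha\le \min\{2(1-c)/L,\ \mathrm{rad}(B)/\|\mathbf{g}\|\}$. Starting from the finite value $\alpha^\star$ and shrinking by a factor $\tau\in(0,1)$, backtracking reaches such an $\alpha$ after finitely many reductions. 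The accepted step is positive and gives the strict decrease $F(\mathbf{x}^+)<F(\mathbf{x})$. This argument never uses the accuracy of Equation~\ref{eq:local-linearization}; the linearization only supplies the initial trial step. That independence is the point of ``even when only locally accurate.''

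\textbf{The case $\mathbf{g}=\mathbf{0}$.} Here $\nabla F(\mathbf{x})=\mathbf{0}$, so $\mathbf{x}$ is stationary by definition, and the rule sets $\alpha=0$.

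\textbf{Main obstacle.} The delicate step is obtaining the local Lipschitz constant of $\nabla F$ on a neighborhood that contains the trial points. I will handle it by taking a compact ball around $\mathbf{x}$, on which $\mathcal{A}$ and $J_{\mathcal{A}}$ are bounded by continuity. Only sufficiently small $\alpha$ are needed, so the neighborhood restriction costs nothing.
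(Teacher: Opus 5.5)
Your proof is correct. The minimality part (strict convexity of $\tilde F$, numerator equal to $\|\mathbf{g}\|^2$) and the case $\mathbf{g}=\mathbf{0}$ match the paper, but your descent argument takes a different route.

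The paper uses no Lipschitz constant for $\nabla F$. It Taylor-expands the operator itself, $\mathcal{A}(\mathbf{x}-\alpha\mathbf{g})=\mathcal{A}(\mathbf{x})-\alpha J_{\mathcal{A}}(\mathbf{x})\mathbf{g}+\mathbf{e}(\alpha)$ with $\|\mathbf{e}(\alpha)\|\le\tfrac{L_{\mathcal{A}}}{2}\alpha^2\|\mathbf{g}\|^2$. It then writes $F(\mathbf{x}-\alpha\mathbf{g})=\tilde F(\alpha)+\rho\langle\mathbf{r}-\alpha J_{\mathcal{A}}(\mathbf{x})\mathbf{g},\mathbf{e}(\alpha)\rangle+\tfrac{\rho}{2}\|\mathbf{e}(\alpha)\|^2$. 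Finally it argues that shrinking $\alpha$ makes these $O(\alpha^2)$ and $O(\alpha^4)$ remainders smaller than the model decrease $\tilde F(0)-\tilde F(\alpha)$.

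You instead drop the model from the descent argument and apply the descent lemma to $F$ directly. This gives an explicit Armijo threshold $\alpha\le\min\{2(1-c)/L,\ \mathrm{rad}(B)/\|\mathbf{g}\|\}$.

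\textbf{What the paper's route buys.} Because it compares $F$ against $\tilde F$, it measures the gap between them through the Jacobian's Lipschitz constant $L_{\mathcal{A}}$. For linear $\mathcal{A}$ the remainders vanish, and the true decrease at $\alpha^\star$ equals the model decrease $\tfrac12\alpha^\star\|\mathbf{g}\|^2$. A descent lemma with a generic $L$ does not show this.

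\textbf{What your route buys.}
\begin{itemize}
\item It makes explicit three things the paper leaves implicit: the model decrease at a shrunken step is first order in $\alpha$ while the remainders are second order; trial points must stay in the region where the Lipschitz constant holds; and backtracking terminates after finitely many reductions.
\item It does not depend on where the initial trial step comes from. So it covers the finite-difference step $\alpha_{\mathrm{FD}}$ (after a positive clamp) and the latent variant without change.
\item Strictly, $C^1$ alone already gives termination, via $F(\mathbf{x}-\alpha\mathbf{g})=F(\mathbf{x})-\alpha\|\mathbf{g}\|^2+o(\alpha)$. The Lipschitz hypothesis only buys you the explicit threshold.
\end{itemize}
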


\paragraph{Step size via finite-difference JVP (fallback).}
The analytic step size $\alpha^\star$ in Equation~\ref{eq:alphastar-method} requires a JVP, $J_\mathcal{A}(\mathbf{x})\mathbf{g}$.
When forward-mode JVPs are unavailable or impractical, we can estimate the JVP by a single forward probe~(\cite{nocedal2006numerical}):
\begin{equation}
J_\mathcal{A}(\mathbf{x})\mathbf{g} \;\approx\; \frac{\mathcal{A}(\mathbf{x}+\eta\mathbf{g}) - \mathcal{A}(\mathbf{x})}{\eta} \;=:\; \frac{\Delta\mathcal{A}}{\eta}
\label{eq:Jg-approx}
\end{equation}
where $\eta > 0$ is a small scalar perturbation parameter. Appendix~\ref{app:theory} provides a stabilized closed form
(Remark~\ref{rem:fd}, Equation~\ref{eq:alphastar-fd}).

\newcommand{\gray}[1]{\textcolor{lightgray}{#1}}
\begin{table}[t]
\centering
\caption{Quantitative evaluation on 100 FFHQ images and 100 ImageNet images for eight inverse problems (five linear and three nonlinear). The best and second-best results within each task type (Pixel and Latent) are indicated in \textbf{bold} and \underline{underlined}, respectively. Method names shown in \textcolor{lightgray}{gray} denote methods designed for noiseless settings}
\label{main_table}
\begin{adjustbox}{width=0.99\textwidth}
\begin{tabular}{l!{\vrule}l!{\vrule}l!{\vrule}cccc!{\vrule}cccc}
\toprule
\multirow{2}{*}{\textbf{Task}} & \multirow{2}{*}{\textbf{Type}} & \multirow{2}{*}{\textbf{Method}} & \multicolumn{4}{c|}{\textbf{FFHQ}} & \multicolumn{4}{c}{\textbf{ImageNet}} \\  

 &  &  &  \textbf{PSNR ($\uparrow$)} & \textbf{SSIM ($\uparrow$)} & \textbf{LPIPS ($\downarrow$)} & \textbf{Run-time (s)} & \textbf{PSNR ($\uparrow$)} & \textbf{SSIM ($\uparrow$)} & \textbf{LPIPS ($\downarrow$)} & \textbf{Run-time (s)} \\ \midrule

 \multirow{9}{*}{Super resolution 4×} & \multirow{5}{*}{\textit{\textit{Pixel}}} & 
    DAPS & 28.774 & 0.774 & 0.257 & 40.229 & 25.686 & 0.651 & 0.364 & 97.192 \\
& & SITCOM & \underline{29.555} & \textbf{0.841} & \textbf{0.237} & 21.591 & \textbf{26.519} & \textbf{0.716} & \textbf{0.309} & 65.657 \\
& & \gray{C-$\Pi$GDM} & \gray{27.794} & \gray{0.807} & \gray{0.209} & \gray{1.404} & \gray{23.645} & \gray{0.631} & \gray{0.313} & \gray{4.085}\\
& & \gray{HRDIS} & \gray{30.455} & \gray{0.867} & \gray{0.156} & \gray{2.274} & \gray{26.764} & \gray{0.744} & \gray{0.291} & \gray{6.216} \\
& & Ours & \textbf{29.573} & \textbf{0.841} & \underline{0.244} & 2.726 & \underline{26.367} & \underline{0.714} & \underline{0.334} & 6.266 \\

\cmidrule{2-11}
& \multirow{4}{*}{\textit{\textit{Latent}}} & 
    LatentDAPS & \textbf{29.184} & \textbf{0.825} & \textbf{0.273} & 93.383 & \underline{26.189} & \underline{0.702} & \underline{0.388} & 95.675\\
& & PSLD & 23.749 & 0.601 & 0.347 & 92.799 & 21.262 & 0.405 & 0.501 & 149.29\\
& & ReSample & 23.317 & 0.456 & 0.507 & 248.865 & 22.152 & 0.423 & 0.470 & 275.999\\
& & Ours & \underline{28.634} & \underline{0.797} & \underline{0.283} & 45.304 & \textbf{26.298} & \textbf{0.704} & \textbf{0.377} & 46.516\\
\midrule

\multirow{9}{*}{Inpaint (box)} & \multirow{5}{*}{\textit{\textit{Pixel}}} & 
    DAPS & 24.546 & 0.754 & 0.218 & 33.108 & \textbf{21.399} & 0.726 & \underline{0.271} & 81.166 \\
& & SITCOM & \textbf{25.336} & \textbf{0.858} & \textbf{0.169} & 24.994 & 20.638 & \textbf{0.794} & \textbf{0.209} &  73.986\\
& & \gray{C-$\Pi$GDM} & \gray{18.294} & \gray{0.731} & \gray{0.358} & \gray{1.277} & \gray{17.514} & \gray{0.676} & \gray{0.360} & \gray{3.683}\\
& & HRDIS & 21.735 & 0.785 & 0.194 & 3.726 & 20.507 & 0.707 & 0.280 &  10.107\\
& & Ours & \underline{24.605} & \underline{0.850} & \underline{0.190} & 2.937 & \underline{21.381} & \underline{0.777} & 0.278 & 6.347\\

\cmidrule{2-11}
& \multirow{4}{*}{\textit{\textit{Latent}}} & 
    LatentDAPS & \underline{23.530} & 0.742 & 0.369 & 91.687 & \underline{19.630} & 0.588 & 0.522 & 96.110 \\
& & PSLD & 21.428 & \underline{0.823} & \textbf{0.126} & 91.189 & \textbf{21.084} & \textbf{0.803} & \textbf{0.186} & 146.644 \\
& & ReSample & 19.978 & 0.796 & \underline{0.247} & 253.162 & 18.087 & 0.713 & \underline{0.309} & 281.831\\
& & Ours & \textbf{24.048} & \textbf{0.829} & \underline{0.247} & 45.276 & 19.349 & \underline{0.716} & 0.389 & 45.989\\
\midrule
\multirow{9}{*}{Inpaint (random)} & \multirow{5}{*}{\textit{Pixel}} &
    DAPS & 30.280 & 0.797 & 0.211 & 35.361 & 27.677 & 0.736 & \underline{0.240} & 82.617 \\
& & SITCOM & \textbf{32.580} & \textbf{0.911} & \textbf{0.148} & 35.499 & \textbf{29.726} & \textbf{0.853} & \textbf{0.168} & 106.182 \\
& & \gray{C-$\Pi$GDM} & \gray{25.888} & \gray{0.728} & \gray{0.283} & \gray{1.281} & \gray{18.253} & \gray{0.453} & \gray{0.452} & \gray{3.613}\\
& & HRDIS & 28.722 & 0.823 & \underline{0.202} & 4.518 & 24.614 & 0.676 & 0.321 & 9.703\\
& & Ours & \underline{31.022} & \underline{0.879} & \underline{0.202} & 2.908 & \underline{28.353} & \underline{0.791} & 0.249 & 5.857\\

\cmidrule{2-11}
& \multirow{4}{*}{\textit{\textit{Latent}}} & 
    LatentDAPS & 25.979 & 0.742 & \underline{0.387} & 91.480 & 22.695 & 0.567 & 0.549 & 95.442\\
& & PSLD & 22.836 & 0.472 & 0.467 & 87.157 & 22.761 & 0.522 & 0.431 & 146.022\\
& & ReSample & \underline{29.950} & \underline{0.842} & \textbf{0.201} & 278.498 & \underline{26.916} & \underline{0.756} & \textbf{0.255} & 315.707\\
& & Ours & \textbf{30.091} & \textbf{0.877} & \textbf{0.201} & 45.335 & \textbf{27.245} & \textbf{0.775} & \underline{0.288} & 46.454\\
\midrule

\multirow{9}{*}{Gaussian deblurring} & \multirow{5}{*}{\textit{\textit{Pixel}}} &
    DAPS & \underline{28.895} & 0.775 & \underline{0.253} & 50.400 & 25.946 & 0.662 & 0.352 & 94.605 \\
& & SITCOM & 28.775 & \underline{0.820} & 0.261 & 32.841 & \textbf{26.201} & \underline{0.702} & \underline{0.351} &  103.338\\
& & \gray{C-$\Pi$GDM} & \gray{24.432} & \gray{0.678} & \gray{0.368} & \gray{1.305} & \gray{23.701} & \gray{0.595} & \gray{0.352} & \gray{4.881}\\
& & \gray{HRDIS} & \gray{27.674} & \gray{0.791} & \gray{0.259} & \gray{2.569} & \gray{24.575} & \gray{0.633} & \gray{0.419} & \gray{5.960}\\
& & Ours & \textbf{29.406} & \textbf{0.836} & \textbf{0.247} & 2.612 & \underline{26.181} & \textbf{0.705} & \textbf{0.344} & 4.958\\

\cmidrule{2-11}
& \multirow{4}{*}{\textit{\textit{Latent}}} & 
    LatentDAPS & 25.742 & \underline{0.732} & 0.384 & 93.313 & 22.818 & \underline{0.561} & 0.543 & 98.340\\
& & PSLD & 16.807 & 0.227 & 0.569 & 94.823 & 16.608 & 0.212 & 0.566 & 148.738\\
& & ReSample & \underline{26.345} & 0.661 & \underline{0.329} & 292.612 & \underline{23.530} & 0.497 & \underline{0.439} & 333.822\\
& & Ours & \textbf{28.006} & \textbf{0.793} & \textbf{0.312} & 46.307 & \textbf{25.356} & \textbf{0.661} & \textbf{0.424} & 48.229\\
\midrule
\multirow{7}{*}{Motion deblurring} & \multirow{3}{*}{\textit{\textit{Pixel}}} & 
    DAPS & 31.074 & 0.829 & \underline{0.199} & 50.924 & 28.838 & 0.776 & \underline{0.243} & 94.681 \\
& & SITCOM & \underline{31.172} & \underline{0.872} & 0.203 & 36.684 & \underline{28.875} & \textbf{0.807} & 0.247 &  103.338\\
& & Ours & \textbf{31.736} & \textbf{0.878} & \textbf{0.171} & 2.616 & \textbf{29.037} & \underline{0.799} & \textbf{0.236} & 4.623\\

\cmidrule{2-11}
& \multirow{4}{*}{\textit{\textit{Latent}}} & 
    LatentDAPS & 26.649 & \underline{0.757} & 0.361 & 93.400 & 23.557 &  \underline{0.592} & 0.513 & 97.988\\
& & PSLD & 19.237 & 0.288 & 0.518 & 90.682 & 18.327 & 0.288 & 0.544 & 148.151\\
& & ReSample & \underline{28.744} & 0.754 & \textbf{0.262} & 302.828 & \underline{24.845} & 0.579 & \underline{0.404} & 316.985\\
& & Ours & \textbf{29.285} & \textbf{0.822} & \underline{0.278} & 46.785 & \textbf{26.627} & \textbf{0.709} & \textbf{0.386} & 47.282\\
\midrule

\multirow{7}{*}{Phase retrieval} & \multirow{4}{*}{\textit{\textit{Pixel}}} & 
    DAPS & \textbf{30.253} & \underline{0.801} & \textbf{0.202} & 122.100 & \textbf{22.354} & \textbf{0.519} & \textbf{0.402} & 320.926 \\
& & SITCOM & 28.512 & 0.791 & 0.240 & 37.425 & 18.704 & 0.393 & 0.519 &  99.103\\
& & HRDIS & 23.670 & 0.537 & 0.448 & 12.020 & 14.019 & 0.195 & 0.722 & 29.915\\
& & Ours & \underline{29.253} & \textbf{0.851} & \underline{0.218} & 10.354 & \underline{19.738} & \underline{0.490} & \underline{0.479} & 16.629 \\

\cmidrule{2-11}
& \multirow{3}{*}{\textit{\textit{Latent}}} & 
    LatentDAPS & 23.450 & \underline{0.695} & 0.418 & 193.005 & \underline{17.067} & \textbf{0.446} & 0.624& 202.426\\
& & ReSample & \underline{24.676} & 0.606 & \underline{0.412} & 321.227 & 16.913 & 0.320 & \underline{0.608} & 354.430\\
& & Ours & \textbf{28.330} & \textbf{0.789} & \textbf{0.244} & 87.167 & \textbf{18.874} & \underline{0.441} & \textbf{0.507} & 85.520\\
\midrule

\multirow{7}{*}{Nonlinear deblur} & \multirow{4}{*}{\textit{\textit{Pixel}}} & 
    DAPS & \underline{28.907} & 0.780 & \underline{0.222} & 763.863 & \underline{27.537} & \underline{0.734} & \underline{0.266} & 1453.314 \\
& & SITCOM & \textbf{29.770} & \textbf{0.844} & \textbf{0.190} & 43.040 & \textbf{28.138} & \textbf{0.791} & \textbf{0.218} &  113.165\\
& & HRDIS & 24.929 & 0.658 & 0.357 & 3.094 & 22.553 & 0.504 & 0.448 & 6.653\\
& & Ours & 27.818 & \underline{0.803} & 0.280 & 57.903 & 25.607 & 0.695 & 0.373 & 62.350 \\

\cmidrule{2-11}
& \multirow{3}{*}{\textit{\textit{Latent}}} & 
    LatentDAPS & 25.151 & 0.727 & 0.384 & 229.700 & 22.516 & 0.568 & 0.530 & 249.639\\
& & ReSample & \textbf{28.748} & \underline{0.797} & \textbf{0.236} & 1276.326 & \underline{26.047} & \underline{0.697} & \textbf{0.301} & 1250.783\\
& & Ours & \underline{28.746} & \textbf{0.823} & \underline{0.260} & 110.567 & \textbf{26.234} & \textbf{0.720} & \underline{0.350} &  113.537\\
\midrule

\multirow{7}{*}{High dynamic range} & \multirow{4}{*}{\textit{\textit{Pixel}}} & 
    DAPS & \underline{26.988} & 0.834 & \underline{0.196} & 103.243 & \underline{26.568} & \underline{0.819} & \textbf{0.198} &  293.286\\
& & SITCOM & \textbf{27.628} & 0.808 & 0.214 & 38.150 & \textbf{26.849} & 0.796 & 0.207 & 109.946 \\
& & HRDIS & 26.346 & \underline{0.836} & \textbf{0.178} & 2.428 & 24.623 & \textbf{0.825} & \underline{0.199} & 5.989 \\
& & Ours & 26.275 & \textbf{0.843} & 0.218 & 7.212 & 24.522 & 0.775 & 0.290 & 13.367 \\

\cmidrule{2-11}
& \multirow{3}{*}{\textit{\textit{Latent}}} & 
    LatentDAPS & 20.789 & 0.630 & 0.512 & 197.250 & 19.394 & 0.469 & 0.641 & 207.469\\
& & ReSample & \underline{25.038} & \underline{0.822} & \textbf{0.239} & 261.558 & \textbf{24.950} & \textbf{0.783} & \textbf{0.257} & 285.495\\
& & Ours & \textbf{25.869} & \textbf{0.832} & \underline{0.247} & 83.790 &  \underline{24.415} & \underline{0.773} & \underline{0.291}  & 85.685\\
\midrule

\end{tabular}
\end{adjustbox}
\end{table}

\vspace{-1em}

\subsection{Optimality conditions and feasibility}
\label{sec:optimality}

\begin{remark}[Nonconvexity and scope]
\label{rem:nonconvex}
With nonlinear $\mathcal{A}$, problem Equation~\ref{eq:constrained-prox} is generally nonconvex; we do not claim global convergence.
Our guarantees are local: the $\mathbf{x}$-update descends $F$ (Proposition~\ref{prop:alphastar-method} and Remark~\ref{rem:fd}).
Feasibility is enforced in the split measurement variable $\mathbf{v}$ via the projection Equation~\ref{eq:projection-ball} (so $\mathbf{v}^{k}\in\mathcal{C}$), while the ADMM coupling encourages $\mathcal{A}(\mathbf{x}^{k})\approx \mathbf{v}^{k}$.
For completeness, Appendix~\ref{app:theory} provides (i) a fixed-point $\Rightarrow$ KKT characterization for the exact scaled-ADMM mapping
(Proposition~\ref{prop:kkt-method}), and (ii) a KL bound that formalizes mode-substitution in re-annealing under a local Gaussian approximation
(Proposition~\ref{prop:kl-bound-method}).
\end{remark}

\subsection{Latent \textsc{fast-dips}}
\label{sec:main_latent}

We extend \textsc{fast-dips} to latent diffusion models~(\cite{rombach2022ldm,podell2024sdxl}) by applying the same per-level construction in latent space:
we replace $\mathcal{A}$ by the composite operator $\mathcal{A}\circ\mathcal{D}$ (with $\mathcal{D}$ the pretrained decoder)
and optimize over the latent variable. The ADMM splitting, measurement-space projection, and analytic step-size carry over
via the chain rule and autodiff through $\mathcal{D}$ and $\mathcal{A}$. To balance cost and fidelity, we use a hybrid schedule:
early steps (large $\sigma_t$) apply cheaper pixel-space corrections, then we switch to latent corrections once
$\sigma_t \le \sigma_{\text{switch}}$ to better conform to the learned manifold. Details appear in Appendix~\ref{sec:latent-method}.

\section{Experiments}
\begin{figure*}
\centering
  \makebox[\textwidth][c]{%
    \includegraphics[width=1.0\linewidth]{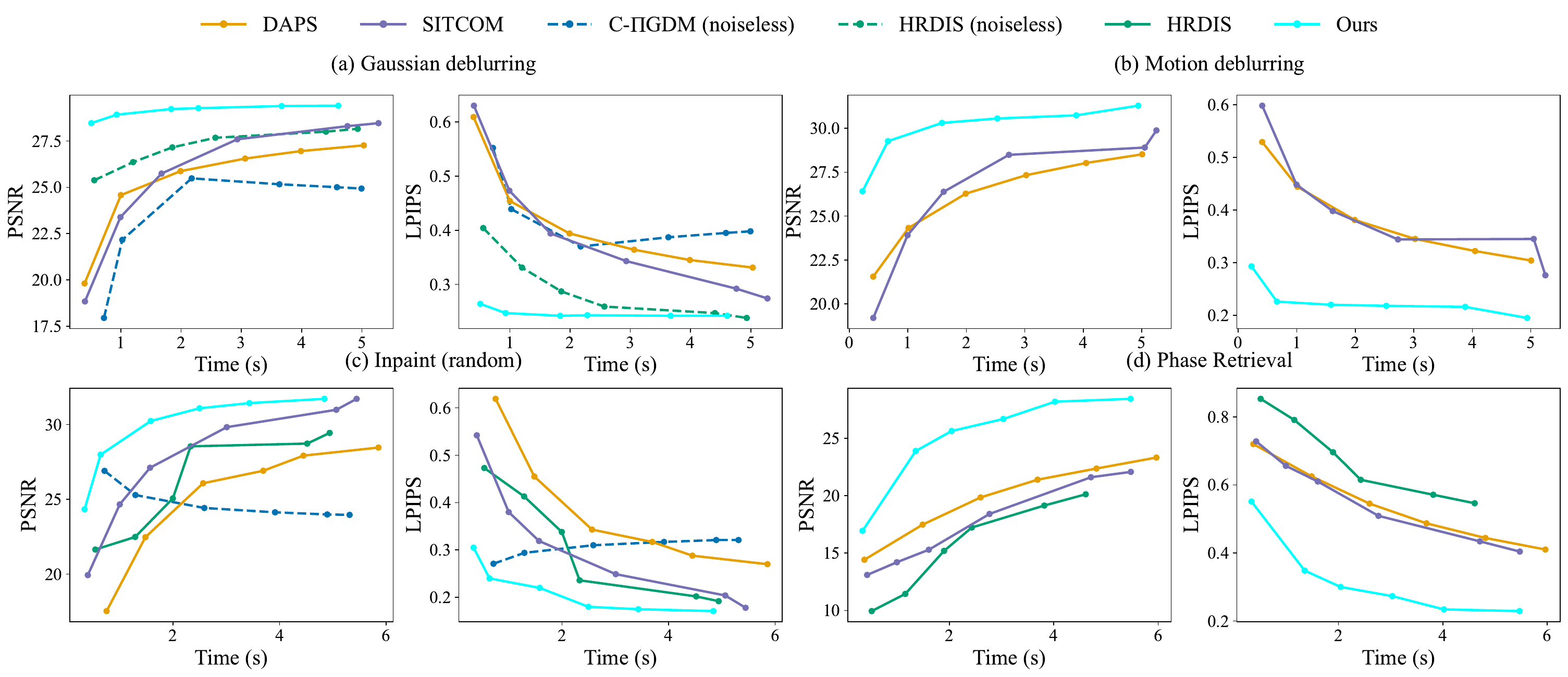}
  }
\caption{Quantitative evaluations comparing image quality and computational time for baseline methods. Each point is derived from an experiment on 100 FFHQ images. The y-axis value (PSNR or LPIPS) is the mean of the scores from the 100 resulting images. The x-axis value is the average per-image run-time, calculated by dividing the total processing time for all 100 images by 100. The plots show results for three linear tasks (a-c) and one nonlinear task (d).}
\label{fig:plot}
\end{figure*}
\subsection{Experimental Setup}

Our experimental setup, including the suite of inverse problems and noise levels, largely follows that of DAPS~(\cite{zhang2025daps}). We evaluate our method across eight tasks—five linear and three nonlinear—to demonstrate its versatility.

\paragraph{Implementation details.}
\label{detail}
For all experiments, we employ pretrained diffusion models in both pixel and latent domains. For the pixel-space setting, we use diffusion models trained on FFHQ~(\cite{chung2023dps}) and ImageNet~(\cite{dhariwal2021diffusion}). For the latent-space setting, we use the unconditional LDM-VQ4 model~(\cite{rombach2022ldm}) for both FFHQ and ImageNet. These models are used consistently across all baselines and our method to ensure a fair comparison. We adopt the time-step discretization and noise schedule from EDM~(\cite{karras2022elucidating}). Evaluation is performed on 100 images from FFHQ ($256\times256$) and 100 images from ImageNet ($256\times256$). Across all tasks, measurements are corrupted by additive Gaussian noise with a standard deviation of $\beta = 0.05$, and performance is reported using PSNR, SSIM, and LPIPS. Inputs are in the range $[-1,1]$ for PSNR/LPIPS and $[0,1]$ for SSIM. For LPIPS, we keep the default max-pooling layers in the VGG-based implementation. Additional experimental details are provided in Appendix~\ref{appendix:exp_detail}.

\paragraph{Baselines.}
We compare our method against a range of state-of-the-art baselines in both pixel and latent spaces. In pixel space, we include recent fast-sampling methods such as SITCOM~(\cite{alkhouri2025sitcom}), C-$\Pi$GDM~(\cite{pandey2024pigdm}), and HRDIS~(\cite{dou2025hrdis}), alongside DAPS~(\cite{zhang2025daps}), which is recognized for its balance of performance and efficiency. For latent-space comparisons, we benchmark against prominent methods including PSLD~(\cite{rout2023psld}), ReSample~(\cite{song2024resample}), and Latent-DAPS~(\cite{zhang2025daps}). Details of the baseline methods are provided in Appendix~\ref{appendix:baselines}.

\subsection{Main Results}

Table~\ref{main_table} presents the quantitative results on the FFHQ and ImageNet datasets, where all baselines are
run with their official default settings. In pixel space, our method achieves comparable or superior
performance to the baselines across nearly all tasks on both datasets, but with a significantly lower
run-time. This acceleration is particularly evident in Gaussian and motion deblurring, where
\textsc{fast-dips} is about $19.4\times$ faster than DAPS on FFHQ and about $20.8\times$ faster than SITCOM on ImageNet,
while achieving comparable PSNR/SSIM/LPIPS. For the challenging nonlinear task of phase retrieval, we follow the
common practice of selecting the best of four independent runs. In this setting, our method is
approximately $11.8\times$ faster than DAPS on FFHQ and $19.3\times$ faster on ImageNet, achieving
higher SSIM on FFHQ and the second-best SSIM/LPIPS among pixel-space baselines on ImageNet.
Furthermore, our approach addresses key inefficiencies commonly found in latent-space methods.
While most guided techniques suffer from long run-times due to the computational cost of
backpropagating through the decoder, our hybrid pixel--latent schedule avoids this bottleneck. By
performing corrections in pixel space during the early sampling stages and switching to latent-space
correction later, our method effectively reduces sampling time while maintaining high-quality,
manifold-faithful reconstructions across datasets.

Table~\ref{main_table} alone does not fully capture how different methods compare under the same run-time budget. To offer a more comprehensive evaluation, Figure~\ref{fig:plot} reports PSNR and LPIPS while considering the computational run-time. For this benchmark, we vary only the number of sampling steps/inner iterations per method, while all other hyperparameters were kept at their originally proposed optimal values to ensure a fair comparison. (Full details are provided in Appendix~\ref{appendix:baselines}). We evaluate three linear and one nonlinear task in total. Across all four tasks, our method improves steadily as run-time increases while maintaining a clear gap over competing baselines. The advantage is particularly pronounced in motion deblurring and phase retrieval, where the superiority highlighted earlier is equally evident under identical run-time budgets. In Gaussian deblurring, even compared to noiseless baselines, our method quickly attains strong PSNR and LPIPS in the early stage and sustains or further improves them as sampling proceeds. The same trend is also observed in random inpainting. For this task, perceptual quality is paramount, and our method generates natural-looking results while consistently maintaining low LPIPS.

We include additional experiments on \textsc{fast-dips} in Appendix~\ref{appendix:add_exp}, covering the effectiveness of the $\mathbf{x}$-update step, hyperparameter robustness, the hybrid schedule trade-off, experiments with non-Gaussian noise and qualitative results in both pixel and latent spaces.

\subsection{Ablation studies}
We study two factors inside the per‑level correction: whether we enforce feasibility by projection and how we choose the step size for the $\mathbf{x}$‑update. The projection variant is our default \textsc{fast-dips} (ADMM + proj.); the no‑projection control is an unsplit penalized solver we call QDP (no splitting, no proj.), which minimizes the same quadratic objective as the ADMM $\mathbf{x}$‑subproblem. To compare fairly, we equalize compute by counting first‑order autodiff work: each $\mathbf{x}$‑gradient step uses one forward of \(A\), one VJP, and one JVP (or a single forward probe for FD); projection and dual updates are negligible. With \(K\) ADMM iterations and \(S\) gradient steps per iteration, \textsc{fast-dips} spends \(K\!\times\!S\) such triplets at each diffusion level, so we give QDP exactly \(K\!\times\!S\) gradient steps per level. For step size we compare a tuned constant \(\alpha\), the analytic model‑optimal \(\alpha^\star\) (one VJP + one JVP), and a forward‑only finite‑difference surrogate \(\alpha_{\text{FD}}\). Full protocol and numbers are provided in Appendix~\ref{app:ablation}, Table~\ref{tab:ablation-appendix}. 

On a representative linear pixel task (Gaussian blur), \(\alpha_{\text{FD}}\) reaches virtually the same quality as \(\alpha^\star\) at lower cost; on the nonlinear latent HDR task the optimization is sensitive to a fixed step and the JVP‑based \(\alpha^\star\) is the robust choice, whereas \(\alpha_{\text{FD}}\) tends to underperform through the decoder–forward stack. Enforcing feasibility by projection consistently improves quality relative to the unsplit penalty path under the matched budget; the extra cost in latent space is dominated by backprop through the decoder rather than the projection. A practical recipe is therefore to use \(\alpha_{\text{FD}}\) in pixel space and \(\alpha^\star\) in latent space within \textsc{fast-dips}. 
\section{Conclusion}

Our proposed method, \textsc{fast-dips}, targets practical challenges in training-free, diffusion-based inverse problems. It is broadly applicable: by using VJP and JVP from automatic differentiation instead of a hand-crafted adjoint, it can handle a wide range of linear and nonlinear forward models without requiring SVDs or pseudo-inverses.

For the guidance step, we replace generic optimizers (e.g., Adam with tuned learning rates) by a single gradient update with an analytic step size from a local quadratic model. This deterministic update removes step-size hyperparameters and improves efficiency and stability.

Empirically, \textsc{fast-dips} shows a predictable trade-off between computation and quality: more correction steps consistently improve reconstructions. The framework also does not rely on carefully chosen initial samples. Limitations and future directions are discussed in Appendix~\ref{app:limit_future}.

\newpage
\section{Reproducibility Statement}
Our experimental setup (datasets, pretrained models, forward operators, noise levels, metrics, and hardware) is specified in Section~\ref{detail}. In brief, we use publicly available pixel- and latent-space diffusion priors on the FFHQ-256 and the ImageNet-256, the EDM discretization, additive Gaussian measurement noise with $\beta=0.05$, and evaluate PSNR/SSIM/LPIPS on 100 images. Most experiments are performed using a single NVIDIA RTX~4090 GPU, whereas the experiments reported in Tables~\ref{tab:multi_x}, \ref{tab:switch_sweep}, \ref{tab:ffhq512_latent} were conducted on a single RTX~6000~Ada GPU. For phase retrieval, we follow the common ``best-of-4'' protocol for all methods (ours and baselines). Baselines are run from the authors’ official repositories with their recommended defaults; Appendix~\ref{appendix:baselines} lists the packages we used and task-specific settings. We set the random seed to 42 for all experiments, including all baselines. For evaluation, we compute PSNR with skimage.metrics (\cite{van2014scikit}), SSIM with TorchMetrics (\cite{detlefsen2022torchmetrics}), and LPIPS with the lpips library (VGG backbone) (\cite{zhang2018unreasonable}).

\section*{Acknowledgement}
This work was supported in part by the National Research Foundation of Korea (NRF) grant funded by the Korea government (MSIT) (RS-2025-24683103), in part by Korea Basic Science Institute (National research Facilities and Equipment Center) grant funded by the Ministry of Science and ICT (No. RS-2024-00401899), and in part by Institute of Information \& communications Technology Planning \& Evaluation (IITP) under the Leading Generative AI Human Resources Development (IITP-2026-RS-2024-00360227) grant funded by the Korea government (MSIT).
 
\bibliography{iclr2026_conference}
\bibliographystyle{iclr2026_conference}

\appendix
 \newpage
\section{Appendix}
\subsection{Latent-space \textsc{fast-dips} and a hybrid pixel--latent schedule}
\label{sec:latent-method}
 
The pixel-space method in Section~\ref{sec:admm} corrects the denoiser's proposal directly in image space. In many diffusion systems, however, the prior is trained in a lower-dimensional latent space. Let $\mathcal{E}:\mathbb{R}^{CHW}\!\to\!\mathbb{R}^{k}$ and $\mathcal{D}:\mathbb{R}^{k}\!\to\!\mathbb{R}^{CHW}$ denote a pretrained encoder--decoder with $\mathbf{z}_0\!=\!\mathcal{E}(\mathbf{x}_0)$ and $\mathbf{x}_0\!=\!\mathcal{D}(\mathbf{z}_0)$. Measurements are still acquired in pixel space via Equation~\ref{eq:forward-method}. A latent denoiser $\mathbf{z}_{\mathrm{den}}(\mathbf{z}_t,\sigma_t)$ is available from the diffusion prior. We now derive a latent analogue of the per-level objective and show that the pixel-space construction transfers under the substitution $\mathcal{A} \mapsto \mathcal{A}\!\circ\!\mathcal{D}$ and $\mathbf{x}\leftrightarrow\mathbf{z}$.
 
\paragraph{Per-level surrogate in latent space.}
At level $t$, the denoiser proposes $\mathbf{z}_{0\mid t}:=\mathbf{z}_{\mathrm{den}}(\mathbf{z}_t,\sigma_t)$. As in Section~\ref{sec:probabilistic-objective}, we approximate $p(\mathbf{z}_0\!\mid\!\mathbf{z}_t)$ by a local Gaussian centered at $\mathbf{z}_{0\mid t}$ with variance parameter $\gamma_z>0$ (we use $\gamma_z=\sigma_t^2$ for schedule-awareness), and we employ the same set-valued / hard-constraint likelihood surrogate in the measurement space, now expressed through the decoder:
\begin{equation}
\tilde{p}_t(\mathbf{z}_0\mid \mathbf{z}_t,\mathbf{y}) \;\propto\;
\exp\!\Big(-\tfrac{1}{2\gamma_z}\,\|\mathbf{z}_0-\mathbf{z}_{0\mid t}\|^2\Big)\;
\mathbf{1}\!\left\{\|\mathcal{A}(\mathcal{D}(\mathbf{z}_0))-\mathbf{y}\|\le \varepsilon_z\right\}.
\label{eq:latent-conditional}
\end{equation}
Since feasibility is enforced in the same measurement space, in our experiments we simply set $\varepsilon_z=\varepsilon$ (unless noted otherwise).
Taking the mode yields the latent per-level MAP:
\begin{equation}
\quad
\mathbf{z}_{0\mid t}^{\mathrm{corr}} \;\in\; \arg\min_{\mathbf{z}\in\mathbb{R}^{k}}\;\frac{1}{2\gamma_z}\,\|\mathbf{z}-\mathbf{z}_{0\mid t}\|^2
\;\;\text{s.t.}\;\; \|\mathcal{A}(\mathcal{D}(\mathbf{z}))-\mathbf{y}\|\le \varepsilon_z.
\quad
\label{eq:latent-prox}
\end{equation}
Re-annealing then follows the same transport rule as Equation~\ref{eq:anneal-sampling}:
\begin{equation}
\mathbf{z}_{t-1} \;=\; \mathbf{z}_{0\mid t}^{\mathrm{corr}} + \sigma_{t-1}\,\boldsymbol{\xi},\qquad \boldsymbol{\xi}\sim\mathcal{N}(\mathbf{0},I),
\qquad \mathbf{x}_{t-1}=\mathcal{D}(\mathbf{z}_{t-1}).
\label{eq:latent-anneal}
\end{equation}
 
\paragraph{ADMM in latent space and adjoint-free updates.}
Introduce $\mathbf{v}\!\approx\!\mathcal{A}(\mathcal{D}(\mathbf{z}))$ and the same feasibility set $\mathcal{C}:=\{\mathbf{v}:\|\mathbf{v}-\mathbf{y}\|\le \varepsilon_z\}$. The scaled ADMM iterations mirror Equation~\ref{eq:xsub}--Equation~\ref{eq:usub}:
\begin{align}
\mathbf{z}^{k+1}&=\arg\min_{\mathbf{z}} \;\frac{1}{2\gamma_z}\|\mathbf{z}-\mathbf{z}_{0\mid t}\|^2 + \frac{\rho_z}{2}\|\mathcal{A}(\mathcal{D}(\mathbf{z}))-\mathbf{v}^k+\mathbf{u}^k\|^2, \label{eq:zsub}\\
\mathbf{v}^{k+1}&=\Pi_{\mathcal{C}}\!\Big(\mathcal{A}(\mathcal{D}(\mathbf{z}^{k+1}))+\mathbf{u}^k\Big), \label{eq:vsub-latent}\\
\mathbf{u}^{k+1}&=\mathbf{u}^k + \mathcal{A}(\mathcal{D}(\mathbf{z}^{k+1}))-\mathbf{v}^{k+1}. \label{eq:usub-latent}
\end{align}
The projection $\Pi_{\mathcal{C}}$ is identical to Equation~\ref{eq:projection-ball} because feasibility is enforced in measurement space.
As in pixel space, in practice we run only a small fixed number of iterations and do not solve the $\mathbf{z}$-subproblem Equation~\ref{eq:zsub} exactly; instead we use one (or a few) adjoint-free gradient steps on $F_z$ with analytic step initialization and backtracking.
For the $\mathbf{z}$-update, define
\begin{equation}
F_z(\mathbf{z}) \;=\; \frac{1}{2\gamma_z}\|\mathbf{z}-\mathbf{z}_{0\mid t}\|^2 + \frac{\rho_z}{2}\|\mathcal{A}(\mathcal{D}(\mathbf{z}))-\mathbf{b}^k\|^2,
\qquad \mathbf{b}^k:=\mathbf{v}^k-\mathbf{u}^k,
\label{eq:Fz}
\end{equation}
and let $\mathbf{r}:=\mathcal{A}(\mathcal{D}(\mathbf{z}))-\mathbf{b}^k$. Its gradient is
\begin{equation}
\mathbf{g}_z \;=\; \nabla F_z(\mathbf{z}) \;=\; \frac{1}{\gamma_z}\,(\mathbf{z}-\mathbf{z}_{0\mid t}) \;+\; \rho_z\,J_{\mathcal{A}\circ \mathcal{D}}(\mathbf{z})^\top\!\big(\mathcal{A}(\mathcal{D}(\mathbf{z}))-\mathbf{b}^k\big),
\qquad \mathbf{z} \leftarrow \mathbf{z} - \alpha\,\mathbf{g}_z.
\label{eq:Fz-grad}
\end{equation}
As in pixel space, both the VJP $J_{\mathcal{A}\circ \mathcal{D}}(\mathbf{z})^\top\mathbf{r}$ and the JVP $J_{\mathcal{A}\circ \mathcal{D}}(\mathbf{z})\mathbf{g}_z$ are obtained directly from autodiff (backprop through $\mathcal{D}$ and $\mathcal{A}$; forward-mode or a single finite-difference for the JVP if needed), so the update remains adjoint-free.
 
\paragraph{Analytic step size in latent space.}
Let $\mathbf{s}_z:=\mathbf{z}-\mathbf{z}_{0\mid t}$. Linearizing $\mathcal{A}\!\circ\!\mathcal{D}$ along $-\mathbf{g}_z$ gives $\mathcal{A}(\mathcal{D}(\mathbf{z}-\alpha \mathbf{g}_z))\!\approx\!\mathcal{A}(\mathcal{D}(\mathbf{z}))-\alpha\,J_{\mathcal{A}\circ \mathcal{D}}(\mathbf{z})\,\mathbf{g}_z$. The scalar quadratic model
\begin{equation}
\tilde{F}_z(\alpha)=\frac{1}{2\gamma_z}\|\mathbf{s}_z-\alpha \mathbf{g}_z\|^2 + \frac{\rho_z}{2}\|\mathbf{r}-\alpha J_{\mathcal{A}\circ \mathcal{D}}(\mathbf{z})\mathbf{g}_z\|^2
\end{equation}
is minimized at
\begin{equation}
\alpha_z^* \;=\;
\frac{\tfrac{1}{\gamma_z}\langle \mathbf{s}_z,\,\mathbf{g}_z\rangle \;+\; \rho_z\,\langle \mathbf{r},\,J_{\mathcal{A}\circ \mathcal{D}}(\mathbf{z})\mathbf{g}_z\rangle}
{\tfrac{1}{\gamma_z}\|\mathbf{g}_z\|^2 \;+\; \rho_z\,\|J_{\mathcal{A}\circ \mathcal{D}}(\mathbf{z})\mathbf{g}_z\|^2}
\label{eq:alphastar-latent}
\end{equation}
followed by backtracking to ensure descent of $F_z$; if $\mathbf{g}_z=\mathbf{0}$ we are already stationary for the current $\mathbf{z}$-subproblem and set $\alpha=0$.

\begin{proposition}[Latent model-optimal step and descent]
\label{prop:latent-descent}
Under $C^1$ regularity of $\mathcal{A}\!\circ\!\mathcal{D}$ near $\mathbf{z}$ and local Lipschitzness of its Jacobian, if $\mathbf{g}_z\neq \mathbf{0}$ then the step $\alpha_z^*$ in Equation~\ref{eq:alphastar-latent} minimizes the quadratic model $\tilde{F}_z(\alpha)$. Moreover, using $\alpha_z^*$ (or its positive clamp) to initialize an Armijo backtracking line search guarantees that the accepted step yields monotone decrease of the current latent $\mathbf{z}$-subproblem objective $F_z$ in Equation~\ref{eq:Fz} (with $\mathbf{b}^k$ fixed). If $\mathbf{g}_z=\mathbf{0}$, then $\mathbf{z}$ is stationary for $F_z$ and no step is taken.
\end{proposition}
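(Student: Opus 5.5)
The plan is to reduce the statement to Proposition~\ref{prop:alphastar-method} by the substitution $\mathcal{A}\mapsto\mathcal{A}\circ\mathcal{D}$, $\mathbf{x}\mapsto\mathbf{z}$, $\gamma_t\mapsto\gamma_z$, $\rho\mapsto\rho_z$. To keep the argument self-contained, I will still verify the three ingredients directly. Write $G:=\mathcal{A}\circ\mathcal{D}$ and $J:=J_G(\mathbf{z})$, so that $\mathbf{g}_z=\tfrac{1}{\gamma_z}\mathbf{s}_z+\rho_z J^\top\mathbf{r}$ by the chain rule. The hypotheses ($G\in C^1$ near $\mathbf{z}$, $J_G$ locally Lipschitz) make $F_z$ continuously differentiable with a locally Lipschitz gradient on a neighbourhood $U$ of $\mathbf{z}$, since $\nabla F_z(\mathbf{w})=\tfrac{1}{\gamma_z}(\mathbf{w}-\mathbf{z}_{0\mid t})+\rho_z J_G(\mathbf{w})^\top(G(\mathbf{w})-\mathbf{b}^k)$.

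\emph{Model optimality.} Expand $\tilde F_z(\alpha)=\tfrac12 a\alpha^2-c\alpha+\text{const}$ with
\[
a=\tfrac{1}{\gamma_z}\|\mathbf{g}_z\|^2+\rho_z\|J\mathbf{g}_z\|^2,\qquad c=\tfrac{1}{\gamma_z}\langle\mathbf{s}_z,\mathbf{g}_z\rangle+\rho_z\langle\mathbf{r},J\mathbf{g}_z\rangle .
\]
Since $\gamma_z>0$ and $\mathbf{g}_z\neq\mathbf{0}$, we have $a>0$, so $\tilde F_z$ is strictly convex and its unique minimizer is $c/a=\alpha_z^*$. As in the pixel case, $c=\langle\tfrac{1}{\gamma_z}\mathbf{s}_z+\rho_z J^\top\mathbf{r},\mathbf{g}_z\rangle=\|\mathbf{g}_z\|^2>0$. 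Hence $\alpha_z^*>0$, and the positive clamp has no effect in exact arithmetic.

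\emph{Descent under Armijo.} The direction $\mathbf{d}=-\mathbf{g}_z$ satisfies $\langle\nabla F_z(\mathbf{z}),\mathbf{d}\rangle=-\|\mathbf{g}_z\|^2<0$. Let $L$ be a Lipschitz constant of $\nabla F_z$ on a ball $B\subset U$ around $\mathbf{z}$. For $\alpha$ small enough that $\mathbf{z}-\alpha\mathbf{g}_z\in B$, the descent lemma gives
\[
F_z(\mathbf{z}-\alpha\mathbf{g}_z)\le F_z(\mathbf{z})-\alpha\|\mathbf{g}_z\|^2+\tfrac{L}{2}\alpha^2\|\mathbf{g}_z\|^2 .
\]
The Armijo condition $F_z(\mathbf{z}-\alpha\mathbf{g}_z)\le F_z(\mathbf{z})-c_1\alpha\|\mathbf{g}_z\|^2$ with $c_1\in(0,1)$ therefore holds for all $\alpha\le\min\{2(1-c_1)/L,\ \text{radius}(B)/\|\mathbf{g}_z\|\}$. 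Starting from $\alpha_z^*>0$ and shrinking geometrically by a factor $\tau\in(0,1)$, the search reaches such an $\alpha$ after finitely many steps. The accepted step then gives a strict decrease of $F_z$ with $\mathbf{b}^k$ fixed, regardless of how accurate the linearization was. If backtracking is capped in practice and the test never succeeds, the step is rejected and $\alpha=0$ is used, so $F_z$ is still nonincreasing.

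\emph{Stationary case.} If $\mathbf{g}_z=\mathbf{0}$, then $\nabla F_z(\mathbf{z})=\mathbf{0}$ by definition, so $\mathbf{z}$ is stationary and the rule sets $\alpha=0$.

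I expect the main obstacle to be the finite-termination step of the Armijo search. It relies only on local Lipschitzness of $\nabla F_z$, which in turn requires the decoder Jacobian and $G$ to be bounded on the ball. I will handle this by restricting to a compact neighbourhood and noting that continuity gives the needed bounds there.
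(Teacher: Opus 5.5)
Your proof is correct. The reduction via $\mathcal{A}\mapsto\mathcal{A}\circ\mathcal{D}$ and the model-optimality computation match the paper: $a>0$ and $c=\|\mathbf{g}_z\|^2$, hence $\alpha_z^*=c/a>0$ and the clamp is inactive. The paper gives no separate latent proof and simply transfers the argument of Proposition~\ref{prop:alphastar-method}.

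The two routes differ in the descent step. The paper Taylor-expands the operator, $G(\mathbf{z}-\alpha\mathbf{g}_z)=G(\mathbf{z})-\alpha J\mathbf{g}_z+\mathbf{e}(\alpha)$, with $\|\mathbf{e}(\alpha)\|\le\tfrac{L_G}{2}\alpha^2\|\mathbf{g}_z\|^2$ and $L_G$ a local Lipschitz constant of $J_G$. Substituting this into $F_z$ gives $F_z(\mathbf{z}-\alpha\mathbf{g}_z)\le\tilde F_z(\alpha)+O(\alpha^2)+O(\alpha^4)$. The paper then argues, rather informally, that shrinking $\alpha$ lets the model decrease dominate the remainder. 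You bypass the model entirely: you apply the descent lemma to $F_z$ itself, using Lipschitzness of $\nabla F_z$ on a compact ball. That yields an explicit acceptance threshold and finite termination of the backtracking.

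Your route is the cleaner and more rigorous one. It makes explicit two things the paper leaves implicit: the sufficient-decrease constant, and the requirement that trial points stay inside the neighbourhood where the Lipschitz constants hold. It also shows plainly that descent does not depend on how accurate the linearization is.

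The paper's route buys something in exchange. It compares the true decrease with the model's predicted decrease $\|\mathbf{g}_z\|^4/(2a)$ at $\alpha_z^*$, so it indicates when the analytic step is accepted without any backtracking: whenever the remainder at $\alpha_z^*$ is smaller than that amount. In the linear case of Remark~\ref{rem:app-linear}, $\mathbf{e}\equiv\mathbf{0}$ and the model step satisfies Armijo for every $c_1\le\tfrac12$. Your descent-lemma bound cannot certify acceptance of $\alpha_z^*$ itself, since $\alpha_z^*$ can exceed $2(1-c_1)/L$.
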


\begin{proposition}[KKT at latent ADMM fixed points]
\label{prop:kkt-latent}
If $(\mathbf{z}^*,\mathbf{v}^*,\mathbf{u}^*)$ is a fixed point of the exact scaled-ADMM updates Equation~\ref{eq:zsub}--Equation~\ref{eq:usub-latent}
(i.e., $\mathbf{z}^{k+1}$ solves the $\mathbf{z}$-subproblem Equation~\ref{eq:zsub} exactly), then $\mathbf{z}^*$ satisfies the KKT conditions of the latent proximal problem Equation~\ref{eq:latent-prox}.
\end{proposition}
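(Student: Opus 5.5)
The argument is a direct analogue of the pixel-space fixed-point characterization (Proposition~\ref{prop:kkt-method}), with $\mathcal{A}$ replaced by $G:=\mathcal{A}\circ\mathcal{D}$ and $\mathbf{x}$ replaced by $\mathbf{z}$. Assume $G$ is $C^1$ near $\mathbf{z}^*$. The KKT conditions of Equation~\ref{eq:latent-prox}, written in split form, require a multiplier $\boldsymbol{\lambda}$ with three properties. Stationarity: $\tfrac{1}{\gamma_z}(\mathbf{z}^*-\mathbf{z}_{0\mid t})+J_G(\mathbf{z}^*)^\top\boldsymbol{\lambda}=\mathbf{0}$. Primal feasibility: $\|G(\mathbf{z}^*)-\mathbf{y}\|\le\varepsilon_z$. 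Dual condition: $\boldsymbol{\lambda}\in N_{\mathcal{C}}(G(\mathbf{z}^*))$, the normal cone of the ball. This normal-cone form is equivalent to the classical form $\boldsymbol{\lambda}=\mu\,(G(\mathbf{z}^*)-\mathbf{y})$ with $\mu\ge 0$ and complementary slackness $\mu(\|G(\mathbf{z}^*)-\mathbf{y}\|-\varepsilon_z)=0$. I will extract each condition from one of the three fixed-point equations, with the candidate multiplier $\boldsymbol{\lambda}=\rho_z\mathbf{u}^*$.

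\textbf{Step 1 (dual update).} At a fixed point, Equation~\ref{eq:usub-latent} gives $\mathbf{u}^*=\mathbf{u}^*+G(\mathbf{z}^*)-\mathbf{v}^*$, so $G(\mathbf{z}^*)=\mathbf{v}^*$. By Equation~\ref{eq:vsub-latent}, $\mathbf{v}^*\in\mathcal{C}$, and therefore $\mathbf{z}^*$ is feasible.

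\textbf{Step 2 ($\mathbf{v}$-update).} The $\mathbf{v}$-update says $\mathbf{v}^*=\Pi_{\mathcal{C}}(G(\mathbf{z}^*)+\mathbf{u}^*)=\Pi_{\mathcal{C}}(\mathbf{v}^*+\mathbf{u}^*)$. By the variational characterization of projection onto a closed convex set, this holds if and only if $\mathbf{u}^*\in N_{\mathcal{C}}(\mathbf{v}^*)$. Scaling by $\rho_z>0$ gives $\boldsymbol{\lambda}\in N_{\mathcal{C}}(G(\mathbf{z}^*))$. To get the explicit form, use Equation~\ref{eq:projection-ball}: if $\|\mathbf{v}^*-\mathbf{y}\|<\varepsilon_z$, the normal cone is $\{\mathbf{0}\}$ and $\mathbf{u}^*=\mathbf{0}$; on the boundary, the radial-shrink formula forces $\mathbf{u}^*=\mu'(\mathbf{v}^*-\mathbf{y})$ with $\mu'\ge0$. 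Either way we obtain $\mu\ge 0$ and complementary slackness.

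\textbf{Step 3 ($\mathbf{z}$-update).} Since $\mathbf{z}^*$ exactly minimizes Equation~\ref{eq:zsub} with $(\mathbf{v}^*,\mathbf{u}^*)$ fixed, and the objective is $C^1$, the first-order necessary condition holds:
\[
\tfrac{1}{\gamma_z}(\mathbf{z}^*-\mathbf{z}_{0\mid t})+\rho_z J_G(\mathbf{z}^*)^\top\big(G(\mathbf{z}^*)-\mathbf{v}^*+\mathbf{u}^*\big)=\mathbf{0}.
\]
Substituting $G(\mathbf{z}^*)=\mathbf{v}^*$ from Step 1 yields exactly the stationarity condition with $\boldsymbol{\lambda}=\rho_z\mathbf{u}^*$. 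Only differentiability is needed here, not global optimality, so nonconvexity causes no difficulty.

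\textbf{Main subtlety.} The delicate step is Step 2. The normal-cone inclusion has to be converted into a multiplier statement for the nonlinear constraint $\|G(\mathbf{z})-\mathbf{y}\|\le\varepsilon_z$, including the degenerate case where $\mathbf{v}^*-\mathbf{y}$ and $\mathbf{u}^*$ interact on the boundary. I will handle this with the explicit ball geometry and state KKT in the normal-cone form, which requires no constraint qualification. I will also note that this gives first-order KKT conditions only: these are necessary conditions, and a KKT point need not be a minimizer because the problem is nonconvex.
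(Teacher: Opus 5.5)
Your proof is correct and follows the paper's route: the paper obtains the latent statement by substituting $\mathcal{A}\mapsto\mathcal{A}\circ\mathcal{D}$ and $\mathbf{x}\mapsto\mathbf{z}$ into its pixel-space argument (Proposition~\ref{prop:kkt-method}), which uses your same three steps. Those steps are the dual update giving $G(\mathbf{z}^*)=\mathbf{v}^*$, projection optimality giving $\rho_z\mathbf{u}^*\in N_{\mathcal{C}}(\mathbf{v}^*)$, and the first-order condition of the $\mathbf{z}$-subproblem with multiplier $\rho_z\mathbf{u}^*$; one caveat is that your equivalence with the scalar form $\boldsymbol{\lambda}=\mu\,(G(\mathbf{z}^*)-\mathbf{y})$ needs $\varepsilon_z>0$ (for $\varepsilon_z=0$ the normal cone of $\{\mathbf{y}\}$ is all of $\mathbb{R}^m$), which the paper's $\boldsymbol{\nu}^*$ formula also assumes implicitly.
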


\begin{remark}[Transfer of pixel-space results]
All pixel-space results in \S\ref{sec:decoupled}--\S\ref{sec:optimality} transfer to the latent case by replacing $\mathcal{A}$ with $\mathcal{A}\!\circ\!\mathcal{D}$ and $\mathbf{x}$ with $\mathbf{z}$. The projection statement is unchanged because feasibility is enforced in measurement space. Propositions~\ref{prop:latent-descent} and \ref{prop:kkt-latent} follow from the corresponding pixel-space arguments by the chain rule. Proposition~\ref{prop:kl-bound-method} also applies to the latent Gaussian injected distributions in Equation~\ref{eq:latent-anneal} by replacing $\mathbf{x}$ with $\mathbf{z}$; after deterministic decoding, KL cannot increase (data processing).
\end{remark}
 
\paragraph{Why (and when) prefer latent updates.}
Late in the schedule, $\sigma_t$ is small, the denoiser's latent prediction $\mathbf{z}_{0\mid t}$ lies near the generative manifold, and optimizing in $\mathbf{z}$ respects that geometry by construction. Early in the schedule, however, correcting in pixel space is often cheaper (no backprop through $\mathcal{D}$) and sufficiently robust because injected noise dominates the time--marginal. This observation motivates a hybrid schedule.
 
\paragraph{Hybrid pixel--latent schedule.}
We adopt a single switching parameter $\sigma_{\text{switch}}$: for $\sigma_t > \sigma_{\text{switch}}$ we correct in pixel space using Equation~\ref{eq:constrained-prox}--Equation~\ref{eq:usub}, then re-encode $\mathbf{z}\!\leftarrow\!\mathcal{E}(\mathbf{x})$ before annealing in latent space; once $\sigma_t \le \sigma_{\text{switch}}$, we correct directly in latent space using Equation~\ref{eq:latent-prox}--Equation~\ref{eq:usub-latent}. This keeps early iterations light and late iterations manifold-faithful.
 
\paragraph{Complexity and switching.}
A latent $\mathbf{z}$-gradient step costs one pass through $\mathcal{D}$ and $\mathcal{A}$ to form $\mathbf{r}$, one VJP through $\mathcal{A}\!\circ\!\mathcal{D}$ to form $J_{\mathcal{A}\circ \mathcal{D}}^\top\mathbf{r}$, and one JVP to form $J_{\mathcal{A}\circ \mathcal{D}}\mathbf{g}_z$; we found this JVP-based step is effective for nonlinear-deblur in latent space. In pixel space, for strongly nonlinear $\mathcal{A}$ we recommend the FD variant Equation~\ref{eq:Jg-approx} and \ref{eq:alphastar-fd}, which swaps the JVP for a single extra forward call and was both faster and more stable in our nonlinear-deblur experiments. The switch $\sigma_{\text{switch}}$ trades early-time efficiency for late-time fidelity; a stable default is to place it where the SNR of the denoiser's prediction visibly improves (e.g., where $\gamma_t$ becomes comparable to the scale of $\|\mathbf{x}-\mathbf{x}_{0\mid t}\|$ in Equation~\ref{eq:F-def}).
 
\begin{remark}[Consistency of pixel $\to$ encode with latent correction]
If $\mathcal{E}$ and $\mathcal{D}$ are approximately inverses near the data manifold (i.e., $\mathcal{D}(\mathcal{E}(\mathbf{x}))\!\approx\!\mathbf{x}$ and $\mathcal{E}(\mathcal{D}(\mathbf{z}))\!\approx\!\mathbf{z}$) and are locally Lipschitz, then a pixel correction followed by $\mathbf{z}\leftarrow \mathcal{E}(\mathbf{x})$ produces a latent iterate whose discrepancy is controlled by the local reconstruction error of the autoencoder (i.e., how close $\mathcal{D}\circ \mathcal{E}$ is to $\mathrm{Id}$ near the manifold). Thus the hybrid scheme is a coherent approximation of the pure latent method early in the schedule.
\end{remark}

\newpage
\subsection{Algorithms}
\begin{algorithm}
\caption{\textsc{fast-dips} in Pixel Space}
\label{alg:fast_dips_pixel}
\begin{algorithmic}[1]
\Require measurement $\mathbf{y}$; schedule $\{\sigma_t\}$; denoiser $\mathbf{x}_{\mathrm{den}}(\cdot,\sigma_t)$; forward $\mathcal{A}$; parameters $\rho$, $\{\gamma_t\}$, $K$, $S$, $\eta$, $\epsilon$
\Ensure reconstructed image $\mathbf{x}_0$
\State Sample $\mathbf{x}_T \sim \mathcal{N}(\mathbf{0},\sigma_T^2 I)$
\For{$t = T$ \textbf{down to} $1$}
    \State \textit{predict} $\mathbf{x}_{0\mid t} \gets \mathbf{x}_{\mathrm{den}}(\mathbf{x}_t,\sigma_t)$
    \State Initialize $\mathbf{x} \gets \mathbf{x}_{0\mid t}$; \quad $\mathbf{v} \gets \mathcal{A}(\mathbf{x})$; \quad $\mathbf{u} \gets \mathbf{0}$
    \For{$k = 1$ \textbf{to} $K$}
        \State $\mathbf{b} \gets \mathbf{v} - \mathbf{u}$; \quad $F(\mathbf{x}) \gets \tfrac{1}{2\gamma_t}\|\mathbf{x}-\mathbf{x}_{0\mid t}\|^2 + \tfrac{\rho}{2}\|\mathcal{A}(\mathbf{x})-\mathbf{b}\|^2$
        \For{$i = 1$ \textbf{to} $S$} \Comment{$\mathbf{x}$-update: gradient step + backtracking}
            \State $\mathbf{r} \gets \mathcal{A}(\mathbf{x}) - \mathbf{b}$; $\mathbf{s} \gets \mathbf{x}-\mathbf{x}_{0\mid t}$
            \State $\mathbf{g}_{\text{data}} \gets \nabla_{\mathbf{x}} \left( \frac{1}{2} \|\mathcal{A}(\mathbf{x}) - \mathbf{b}\|^2 \right)$ \Comment{via automatic differentiation}
            \State $\mathbf{g} \gets \tfrac{1}{\gamma_t}\mathbf{s} + \rho\, \mathbf{g}_{\text{data}}$; $\Delta\mathcal{A} \gets \mathcal{A}(\mathbf{x} + \eta\mathbf{g}) - \mathcal{A}(\mathbf{x})$
            \State $\alpha \gets \frac{\eta^2\,\tfrac{1}{\gamma_t}\langle \mathbf{s},\,\mathbf{g}\rangle \;+\; \eta\,\rho\,\langle \mathbf{r},\,\Delta\mathcal{A}\rangle}{\eta^2\,\tfrac{1}{\gamma_t}\|\mathbf{g}\|^2 \;+\; \rho\,\|\Delta\mathcal{A}\|^2}$
            \State Backtrack on $\alpha$ until $F(\mathbf{x}-\alpha \mathbf{g}) < F(\mathbf{x})$; set $\mathbf{x} \gets \mathbf{x} - \alpha \mathbf{g}$
        \EndFor
        \State $\mathbf{w} \gets \mathcal{A}(\mathbf{x}) + \mathbf{u}$; \quad $\mathbf{v} \gets \Pi_{\|\,\cdot - \mathbf{y}\, \| \le \epsilon}(\mathbf{w})$
        \State $\mathbf{u} \gets \mathbf{u} + \mathcal{A}(\mathbf{x}) - \mathbf{v}$
    \EndFor
    \State Sample $\boldsymbol{\xi} \sim \mathcal{N}(\mathbf{0},I)$ and set $\mathbf{x}_{t-1} \gets \mathbf{x} + \sigma_{t-1}\,\boldsymbol{\xi}$
\EndFor
\State \Return $\mathbf{x}_0$
\end{algorithmic}
\end{algorithm}

\begin{algorithm}[t]
\caption{\textsc{fast-dips} in Latent Space}
\label{alg:fast_dips_latent}
\begin{algorithmic}[1]
\Require measurement $\mathbf{y}$; schedule $\{\sigma_t\}$; latent denoiser $\mathbf{z}_{\mathrm{den}}(\cdot,\sigma_t)$; encoder $\mathcal{E}$; decoder $\mathcal{D}$; forward $\mathcal{A}$; parameters $\rho_x, \gamma_x, K_x, S_x, \varepsilon_x,\ \rho_z, \gamma_z, K_z, S_z, \varepsilon_z,\ \sigma_{\text{switch}}$
\Ensure reconstructed image $\mathbf{x}_0$
\State Sample $\mathbf{z}_T \sim \mathcal{N}(\mathbf{0}, \sigma_T^2 I)$
\For{$t = T$ \textbf{down to} $1$}
    \State \textit{predict (latent)} $\mathbf{z}_{0\mid t} \leftarrow \mathbf{z}_{\mathrm{den}}(\mathbf{z}_t, \sigma_t)$
    \If{$\sigma_t > \sigma_{\text{switch}}$} \Comment{early: pixel correction}
        \State $\mathbf{x}_{0\mid t} \leftarrow \mathcal{D}(\mathbf{z}_{0\mid t})$;\; $\mathbf{x}\!\leftarrow\!\mathbf{x}_{0\mid t}$;\; $\mathbf{v} \leftarrow \mathcal{A}(\mathbf{x})$;\; $\mathbf{u}\!\leftarrow\!\mathbf{0}$
        \For{$k = 1$ \textbf{to} $K_x$}
            \State $\mathbf{b} \leftarrow \mathbf{v} - \mathbf{u}$;\quad $F_x(\mathbf{x}) \gets \tfrac{1}{2\gamma_x}\|\mathbf{x}-\mathbf{x}_{0\mid t}\|^2 + \tfrac{\rho_x}{2}\|\mathcal{A}(\mathbf{x})-\mathbf{b}\|^2$
            \For{$s = 1$ \textbf{to} $S_x$} \Comment{$\mathbf{x}$-update with analytic step}
                \State $\mathbf{g} \gets \tfrac{1}{\gamma_x}(\mathbf{x}-\mathbf{x}_{0\mid t}) + \rho_x J_{\mathcal{A}}(\mathbf{x})^{\top}\!\big(\mathcal{A}(\mathbf{x})-\mathbf{b}\big)$
                \State Form $J_{\mathcal{A}}(\mathbf{x})\mathbf{g}$ (JVP) and set $\alpha$ by Equation~\ref{eq:alphastar-method}; 
            \State Backtrack on $\alpha$ until $F_x(\mathbf{x}-\alpha \mathbf{g}) < F_{x}(\mathbf{x})$; set $\mathbf{x} \gets \mathbf{x} - \alpha \mathbf{g}$
            \EndFor
            \State $\mathbf{w} \leftarrow \mathcal{A}(\mathbf{x}) + \mathbf{u}$;\quad $\mathbf{v} \leftarrow \Pi_{\|\,\cdot - \mathbf{y}\,\| \leq \varepsilon_x}(\mathbf{w})$;\quad $\mathbf{u} \leftarrow \mathbf{u} + \mathcal{A}(\mathbf{x}) - \mathbf{v}$
        \EndFor
        \State \textit{re-encode} $\mathbf{z} \leftarrow \mathcal{E}(\mathbf{x})$
    \Else \Comment{late: latent correction}
        \State $\mathbf{z}\!\leftarrow\!\mathbf{z}_{0\mid t}$;\; $\mathbf{v} \leftarrow \mathcal{A}(\mathcal{D}(\mathbf{z}))$;\; $\mathbf{u} \leftarrow \mathbf{0}$
        \For{$k = 1$ \textbf{to} $K_z$}
            \State $\mathbf{b} \leftarrow \mathbf{v} - \mathbf{u}$;\quad $F_z(\mathbf{z}) \gets \tfrac{1}{2\gamma_z}\|\mathbf{z}-\mathbf{z}_{0\mid t}\|^2 + \tfrac{\rho_z}{2}\|\mathcal{A}(\mathcal{D}(\mathbf{z}))-\mathbf{b}\|^2$
            \For{$s = 1$ \textbf{to} $S_z$} \Comment{$\mathbf{z}$-update with analytic step}
                \State $\mathbf{g}_z \gets \tfrac{1}{\gamma_z}(\mathbf{z}-\mathbf{z}_{0\mid t}) + \rho_z J_{\mathcal{A}\circ \mathcal{D}}(\mathbf{z})^{\top}\!\big(\mathcal{A}(\mathcal{D}(\mathbf{z}))-\mathbf{b}\big)$
                \State Form $J_{\mathcal{A}\circ \mathcal{D}}(\mathbf{z})\mathbf{g}_z$ (JVP) and set $\alpha$ by Equation~\ref{eq:alphastar-latent}; 
                \State Backtrack on $\alpha$ until $F_z(\mathbf{z}-\alpha \mathbf{g}_z) < F_z(\mathbf{z})$; set  $\mathbf{z}\!\leftarrow\!\mathbf{z}-\alpha \mathbf{g}_z$

            \EndFor
            \State $\mathbf{w} \leftarrow \mathcal{A}(\mathcal{D}(\mathbf{z})) + \mathbf{u}$;\quad $\mathbf{v} \leftarrow \Pi_{\|\,\cdot - \mathbf{y}\,\| \leq \varepsilon_z}(\mathbf{w})$;\quad $\mathbf{u} \leftarrow \mathbf{u} + \mathcal{A}(\mathcal{D}(\mathbf{z})) - \mathbf{v}$
        \EndFor
    \EndIf
    \State \textit{re-anneal} $\mathbf{z}_{t-1} \leftarrow \mathbf{z} + \sigma_{t-1} \boldsymbol{\xi}$, $\boldsymbol{\xi} \sim \mathcal{N}(\mathbf{0}, I)$
\EndFor
\State \Return $\mathbf{x}_0 \leftarrow \mathcal{D}(\mathbf{z}_0)$
\end{algorithmic}
\end{algorithm}

\newpage
\subsection{Theory and Proofs}
\label{app:theory}
 
This appendix first summarizes the proposed \textsc{fast-dips} procedure and its modeling assumptions (App.~\ref{app:overview-assump}). We then restate and prove the pixel-space results used by the method (App.~\ref{app:proofs-pixel}). For the latent variant, the corresponding statements follow from the same arguments by the substitution $\mathcal{A}\mapsto \mathcal{A}\!\circ\!\mathcal{D}$ and the chain rule; we provide a brief transfer explanation and omit redundant proofs (App.~\ref{app:proofs-latent}). Finally, we give step-by-step derivations of the analytic step sizes used in the pixel and latent updates and explain how they can be computed with autodiff VJP/JVP or a single forward-difference probe (App.~\ref{app:alpha-derivation}).
 
\subsubsection{Overview and assumptions}
\label{app:overview-assump}
 
\paragraph{Method in one paragraph.}
At diffusion level $t$, the pretrained denoiser returns an anchor $\mathbf{x}_{0\mid t}=\mathbf{x}_{\mathrm{den}}(\mathbf{x}_t,\sigma_t)$. We then solve a hard-constrained proximal problem around $\mathbf{x}_{0\mid t}$,
\begin{equation}
\label{eq:app-constrained-prox}
\min_{\mathbf{x}\in\mathbb{R}^{CHW}}\;\frac{1}{2\gamma_t}\,\|\mathbf{x}-\mathbf{x}_{0\mid t}\|^2
\quad\text{s.t.}\quad \|\mathcal{A}(\mathbf{x})-\mathbf{y}\|\le \varepsilon,
\end{equation}
in the standard (Euclidean) measurement space. We solve Equation~\ref{eq:app-constrained-prox} by a scaled ADMM-style augmented Lagrangian splitting with variables $(\mathbf{x},\mathbf{v},\mathbf{u})$:
\begin{align}
\mathbf{x}^{k+1}&=\arg\min_{\mathbf{x}} \frac{1}{2\gamma_t}\|\mathbf{x}-\mathbf{x}_{0\mid t}\|^2 + \frac{\rho}{2}\|\mathcal{A}(\mathbf{x})-\mathbf{v}^k+\mathbf{u}^k\|^2, \label{eq:app-xsub}\\
\mathbf{v}^{k+1}&=\Pi_{\mathcal{C}}\!\big(\mathcal{A}(\mathbf{x}^{k+1})+\mathbf{u}^k\big),\quad \mathcal{C}=\{\mathbf{v}:\|\mathbf{v}-\mathbf{y}\|\le \varepsilon\}, \label{eq:app-vsub}\\
\mathbf{u}^{k+1}&=\mathbf{u}^k + \mathcal{A}(\mathbf{x}^{k+1})-\mathbf{v}^{k+1}. \label{eq:app-usub}
\end{align}
The $\mathbf{v}$-update is a closed-form projection onto a ball; the $\mathbf{x}$-update is one (or a few) adjoint-free gradient steps with an analytic, model-optimal step size, where the needed directional Jacobian term $J_{\mathcal{A}}(\mathbf{x})\mathbf{g}$ is obtained either by autodiff JVP or by a single forward-difference probe. After correction, we re-anneal by sampling
\begin{equation}
\label{eq:app-anneal}
\mathbf{x}_{t-1}=\mathbf{x}_{0\mid t}^{\mathrm{corr}} + \sigma_{t-1}\,\boldsymbol{\xi},\qquad \boldsymbol{\xi}\sim\mathcal{N}(\mathbf{0},I),
\end{equation}
which implements the decoupled time--marginal transport.
 
\paragraph{Standing assumptions.}
\begin{itemize}
\item[\textbf{A1}] (Noise model and metric) We assume additive white Gaussian noise (AWGN) with covariance $\beta^2 I$ and work in the standard Euclidean metric in measurement space; the feasibility set is the ball $\{\mathbf{v}:\|\mathbf{v}-\mathbf{y}\|\le\varepsilon\}$.
\item[\textbf{A2}] (Regularity) $\mathcal{A}$ is $C^1$ in a neighborhood of the iterates, and $J_{\mathcal{A}}$ is locally Lipschitz.
\item[\textbf{A3}] (Tolerance) The radius $\varepsilon$ is treated as a user-chosen data-consistency tolerance; we do not require $\|\mathcal{A}(\mathbf{x}_0)-\mathbf{y}\|\le \varepsilon$.
\end{itemize}
 
\subsubsection{Pixel-space propositions and proofs}
\label{app:proofs-pixel}
 
We restate the pixel-space results referenced in the main text and provide detailed proofs.
 
\begin{proposition}[Closed-form projection onto the measurement ball]
\label{prop:projection-ball}
Let $\mathcal{C}=\{\mathbf{v}\in\mathbb{R}^{m}:\|\mathbf{v}-\mathbf{y}\|\le \varepsilon\}$ in the measurement space.
Then the Euclidean projection $\Pi_{\mathcal{C}}(\mathbf{w})$ in Equation~\ref{eq:vsub} is exactly the radial shrink~(\cite{parikh2014proximal})
\[
\Pi_{\mathcal{C}}(\mathbf{w}) \;=\;
\begin{cases}
\mathbf{w}, & \|\mathbf{w}-\mathbf{y}\|\le \varepsilon,\\[4pt]
\mathbf{y} + \varepsilon\,\dfrac{\mathbf{w}-\mathbf{y}}{\|\mathbf{w}-\mathbf{y}\|}, & \text{otherwise.}
\end{cases}
\]
\end{proposition}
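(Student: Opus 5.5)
The projection $\Pi_{\mathcal{C}}(\mathbf{w})$ is, by definition, the unique minimizer of $\tfrac12\|\mathbf{v}-\mathbf{w}\|^2$ over $\mathbf{v}\in\mathcal{C}$. I plan to first record existence and uniqueness, and then check the displayed formula in each of the two cases. Existence and uniqueness follow because $\mathcal{C}$ is a nonempty, closed, convex set (a closed Euclidean ball, with $\varepsilon>0$) and the objective is continuous, strongly convex and coercive. Uniqueness matters here: once we exhibit a feasible point satisfying the optimality condition, it must be the projection.

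\emph{Case 1:} $\|\mathbf{w}-\mathbf{y}\|\le\varepsilon$. Then $\mathbf{w}\in\mathcal{C}$ and the objective takes the value $0$, its global lower bound. Hence $\Pi_{\mathcal{C}}(\mathbf{w})=\mathbf{w}$.

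\emph{Case 2:} $\|\mathbf{w}-\mathbf{y}\|>\varepsilon$. Set $\mathbf{d}:=\mathbf{w}-\mathbf{y}\neq\mathbf{0}$ and $\mathbf{p}:=\mathbf{y}+\varepsilon\,\mathbf{d}/\|\mathbf{d}\|$. Clearly $\|\mathbf{p}-\mathbf{y}\|=\varepsilon$, so $\mathbf{p}\in\mathcal{C}$. I would then verify the variational characterization of convex projection, namely
\[
\langle \mathbf{w}-\mathbf{p},\,\mathbf{v}-\mathbf{p}\rangle\le 0\quad\text{for all }\mathbf{v}\in\mathcal{C}.
\]
Note that $\mathbf{w}-\mathbf{p}=(\|\mathbf{d}\|-\varepsilon)\,\mathbf{d}/\|\mathbf{d}\|$, a positive multiple of the unit vector $\mathbf{e}:=\mathbf{d}/\|\mathbf{d}\|$. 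So it suffices to show $\langle \mathbf{e},\mathbf{v}-\mathbf{y}\rangle-\varepsilon\le 0$. This holds by Cauchy--Schwarz, since $\langle \mathbf{e},\mathbf{v}-\mathbf{y}\rangle\le\|\mathbf{v}-\mathbf{y}\|\le\varepsilon$.

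As an alternative, self-contained route I would also give a direct lower bound. For any $\mathbf{v}\in\mathcal{C}$, the triangle inequality gives $\|\mathbf{v}-\mathbf{w}\|\ge\|\mathbf{w}-\mathbf{y}\|-\|\mathbf{v}-\mathbf{y}\|\ge\|\mathbf{d}\|-\varepsilon$, and $\mathbf{p}$ attains this bound. Equality forces $\mathbf{v}-\mathbf{y}$ to be a nonnegative multiple of $\mathbf{d}$ with norm $\varepsilon$, so $\mathbf{v}=\mathbf{p}$. This recovers uniqueness without appealing to general theory.

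There is no real obstacle. The only care needed is handling the equality case in the triangle inequality, or equivalently citing the obtuse-angle characterization of projections onto closed convex sets. I would also note the degenerate radius $\varepsilon=0$: there $\mathcal{C}=\{\mathbf{y}\}$, and the second branch returns $\mathbf{y}$, so the formula remains consistent.
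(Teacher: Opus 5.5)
Your proof is correct, but it runs in the opposite direction from the paper's. The paper \emph{derives} the formula from necessary conditions. It writes the Lagrangian $\tfrac12\|\mathbf{v}-\mathbf{w}\|^2+\lambda(\|\mathbf{v}-\mathbf{y}\|-\varepsilon)$ and splits into two cases. In the inactive case, $\lambda=0$ gives $\mathbf{v}=\mathbf{w}$, which requires $\mathbf{w}\in\mathcal{C}$. In the active case, stationarity makes $\mathbf{v}-\mathbf{w}$ colinear with $\mathbf{v}-\mathbf{y}$, so $\mathbf{v}=\mathbf{y}+\tau(\mathbf{w}-\mathbf{y})$, and $\|\mathbf{v}-\mathbf{y}\|=\varepsilon$ then fixes $\tau$; uniqueness comes from strong convexity.

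You instead \emph{guess and verify}. You exhibit the candidate $\mathbf{p}$ and certify it by the obtuse-angle inequality, i.e.\ $\mathbf{w}-\mathbf{p}\in N_{\mathcal{C}}(\mathbf{p})$. Alternatively, you use the triangle-inequality bound $\|\mathbf{v}-\mathbf{w}\|\ge\|\mathbf{w}-\mathbf{y}\|-\varepsilon$ together with its equality case.

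Your route is more elementary and sidesteps several points the KKT argument leaves implicit:
\begin{itemize}
\item the constraint qualification that makes KKT necessary (Slater, valid only for $\varepsilon>0$);
\item the nonsmoothness of $\|\mathbf{v}-\mathbf{y}\|$ at $\mathbf{v}=\mathbf{y}$, since the paper writes stationarity only for $\mathbf{v}\neq\mathbf{y}$;
\item the sign $\lambda>0$, which is what puts the optimizer on the ray $\tau\ge 0$ rather than anywhere on the line through $\mathbf{y}$ and $\mathbf{w}$;
\item the degenerate radius $\varepsilon=0$, which you handle explicitly.
\end{itemize}
In exchange, the paper's route explains where the radial-shrink formula comes from rather than presupposing it. Its multiplier and normal-cone bookkeeping is also the same machinery reused in the fixed-point KKT result (Proposition~\ref{prop:kkt-method}). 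Your variational inequality is exactly the normal-cone condition used there, so it would dovetail with that proof just as well.
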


\begin{proof}[Proof of Proposition~\ref{prop:projection-ball}]
We solve $\min_{\mathbf{v}}\frac12\|\mathbf{v}-\mathbf{w}\|^2$ s.t.\ $\|\mathbf{v}-\mathbf{y}\|\le \varepsilon$. The objective is $1$-strongly convex and the feasible set is closed and convex; hence there is a unique minimizer.

\paragraph{KKT derivation.} The Lagrangian is
\[
\mathcal{L}(\mathbf{v},\lambda)=\tfrac12\|\mathbf{v}-\mathbf{w}\|^2+\lambda\big(\|\mathbf{v}-\mathbf{y}\|-\varepsilon\big),\qquad \lambda\ge 0.
\]
Stationarity gives
\[
\mathbf{0}=\nabla_{\mathbf{v}}\mathcal{L}(\mathbf{v},\lambda)=(\mathbf{v}-\mathbf{w})+\lambda\,\frac{\mathbf{v}-\mathbf{y}}{\|\mathbf{v}-\mathbf{y}\|}\quad\text{if }\mathbf{v}\ne \mathbf{y}.
\]
There are two cases.

(i) Interior case. If the constraint is inactive at the optimum, then $\lambda=0$ by complementary slackness and stationarity gives $\mathbf{v}=\mathbf{w}$. Feasibility requires $\|\mathbf{w}-\mathbf{y}\|\le \varepsilon$, i.e., $\mathbf{w}\in\mathcal{C}$.

(ii) Boundary case. Otherwise $\|\mathbf{v}-\mathbf{y}\|=\varepsilon$ and $\lambda>0$. Stationarity implies $\mathbf{v}-\mathbf{w}$ is colinear with $\mathbf{v}-\mathbf{y}$; hence the optimizer lies on the ray from $\mathbf{y}$ to $\mathbf{w}$. Write $\mathbf{v}=\mathbf{y}+\tau(\mathbf{w}-\mathbf{y})$ with $\tau\ge 0$. Enforcing $\|\mathbf{v}-\mathbf{y}\|=\varepsilon$ yields $\tau=\varepsilon/\|\mathbf{w}-\mathbf{y}\|$. Substituting gives
\[
\mathbf{v}=\mathbf{y}+\varepsilon\,\frac{\mathbf{w}-\mathbf{y}}{\|\mathbf{w}-\mathbf{y}\|}.
\]
This is exactly the radial projection formula in Equation~\ref{eq:projection-ball}. Uniqueness follows from strong convexity.
\end{proof}

\AlphaDescentProp*
 
\begin{proof}[Proof of Proposition~\ref{prop:alphastar-method}]
Write $F(\mathbf{x})=\tfrac{1}{2\gamma_t}\|\mathbf{s}\|^2+\tfrac{\rho}{2}\|\mathbf{r}\|^2$ with $\mathbf{s}=\mathbf{x}-\mathbf{x}_{0\mid t}$ and $\mathbf{r}=\mathcal{A}(\mathbf{x})-\mathbf{b}$. The gradient is
\[
\mathbf{g}=\nabla F(\mathbf{x})=\tfrac{1}{\gamma_t}\mathbf{s}+\rho\,J_{\mathcal{A}}(\mathbf{x})^\top \mathbf{r}.
\]
Consider the steepest-descent trial $\mathbf{x}(\alpha)=\mathbf{x}-\alpha \mathbf{g}$. A first-order Taylor expansion along $-\mathbf{g}$ gives
\[
\mathcal{A}\big(\mathbf{x}(\alpha)\big)=\mathcal{A}(\mathbf{x})-\alpha J_{\mathcal{A}}(\mathbf{x})\mathbf{g}+\mathbf{e}(\alpha),\qquad \|\mathbf{e}(\alpha)\|\le \tfrac{L_{\mathcal{A}}}{2}\alpha^2\|\mathbf{g}\|^2,
\]
for some local Lipschitz constant $L_{\mathcal{A}}$ of $J_{\mathcal{A}}$ (from \textbf{A2}). Plugging this into $F(\mathbf{x}(\alpha))$ yields
\[
F(\mathbf{x}(\alpha)) = \underbrace{\tfrac{1}{2\gamma_t}\|\mathbf{s}-\alpha \mathbf{g}\|^2 + \tfrac{\rho}{2}\|\mathbf{r}-\alpha J_{\mathcal{A}}(\mathbf{x})\mathbf{g}\|^2}_{:=\tilde{F}(\alpha)}
\;+\; \rho\,\langle \mathbf{r}-\alpha J_{\mathcal{A}}(\mathbf{x})\mathbf{g},\,\mathbf{e}(\alpha)\rangle + \tfrac{\rho}{2}\|\mathbf{e}(\alpha)\|^2.
\]
The model $\tilde{F}$ is a convex quadratic in $\alpha$ with derivative
\[
\tilde{F}'(\alpha)= -\tfrac{1}{\gamma_t}\langle \mathbf{s},\mathbf{g}\rangle - \rho\,\langle \mathbf{r},J_{\mathcal{A}}(\mathbf{x})\mathbf{g}\rangle
+ \alpha\Big(\tfrac{1}{\gamma_t}\|\mathbf{g}\|^2 + \rho\|J_{\mathcal{A}}(\mathbf{x})\mathbf{g}\|^2\Big),
\]
and curvature $\tilde{F}''(\alpha)=\tfrac{1}{\gamma_t}\|\mathbf{g}\|^2 + \rho\|J_{\mathcal{A}}(\mathbf{x})\mathbf{g}\|^2 \ge 0$, with equality only at stationary points where $\mathbf{g}=\mathbf{0}$ and $J_{\mathcal{A}}(\mathbf{x})\mathbf{g}=\mathbf{0}$. Setting $\tilde{F}'(\alpha)=0$ yields the model minimizer $\alpha^\star$ in Equation~\ref{eq:alphastar-method}.
 
\paragraph{Descent of the true $F$.} Using the expansion above and Cauchy--Schwarz with the bound on $\|\mathbf{e}(\alpha)\|$, we obtain
\[
F(\mathbf{x}-\alpha \mathbf{g}) \le \tilde{F}(\alpha) + \rho\|\mathbf{r}-\alpha J_{\mathcal{A}}(\mathbf{x})\mathbf{g}\|\,\tfrac{L_{\mathcal{A}}}{2}\alpha^2\|\mathbf{g}\|^2 + \tfrac{\rho}{2}\Big(\tfrac{L_{\mathcal{A}}}{2}\alpha^2\|\mathbf{g}\|^2\Big)^2.
\]
At $\alpha=\alpha^\star$, $\tilde{F}(\alpha^\star)=\min_{\alpha}\tilde{F}(\alpha)$ and the improvement over $\tilde{F}(0)=F(\mathbf{x})$ is
\[
\tilde{F}(0)-\tilde{F}(\alpha^\star)=\frac{\big(\tfrac{1}{\gamma_t}\langle \mathbf{s},\mathbf{g}\rangle + \rho \langle \mathbf{r}, J_{\mathcal{A}}(\mathbf{x})\mathbf{g}\rangle\big)^2}{2\big(\tfrac{1}{\gamma_t}\|\mathbf{g}\|^2 + \rho\|J_{\mathcal{A}}(\mathbf{x})\mathbf{g}\|^2\big)}.
\]
The remainder terms are $O(\alpha^{*2}\|\mathbf{g}\|^2)$ and $O(\alpha^{*4}\|\mathbf{g}\|^4)$; shrinking $\alpha$ by a constant factor (standard Armijo backtracking) ensures these are dominated by the quadratic-model decrease, yielding strict descent of $F$.
\end{proof}
 
\begin{remark}[Step size from finite-difference JVP]
\label{rem:fd}
Replacing $J_\mathcal{A}(\mathbf{x})\mathbf{g}$ in Equation~\ref{eq:alphastar-method} by $\Delta \mathcal{A}/\eta$ from Equation~\ref{eq:Jg-approx} yields the numerically stable single-forward-call step
\begin{equation}
\alpha_{\mathrm{FD}} \;=\;
\frac{\eta^2\,\tfrac{1}{\gamma_t}\langle \mathbf{s},\,\mathbf{g}\rangle \;+\; \eta\,\rho\,\langle \mathbf{r},\,\Delta\mathcal{A}\rangle}
{\eta^2\,\tfrac{1}{\gamma_t}\|\mathbf{g}\|^2 \;+\; \rho\,\|\Delta\mathcal{A}\|^2}
\qquad \text{where} \quad \Delta\mathcal{A} = \mathcal{A}(\mathbf{x}+\eta\mathbf{g}) - \mathcal{A}(\mathbf{x}).
\label{eq:alphastar-fd}
\end{equation}
which is algebraically equivalent to substituting $J_\mathcal{A}(\mathbf{x})\mathbf{g}\approx \Delta \mathcal{A}/\eta$ in Equation~\ref{eq:alphastar-method} (the scaling by $\eta^2$ avoids division by small $\eta$).
Since $J_\mathcal{A}$ is locally Lipschitz, $\Delta\mathcal{A}/\eta = J_\mathcal{A}(\mathbf{x})\mathbf{g} + O(\eta\|\mathbf{g}\|^2)$, so $\alpha_{\mathrm{FD}}\to \alpha^\star$ as $\eta\to 0$; backtracking preserves monotone decrease of $F$.
\end{remark}
 
\begin{remark}[Linear $\mathcal{A}$ yields exact optimal line search]
\label{rem:app-linear}
If $\mathcal{A}$ is linear, then Equation~\ref{eq:local-linearization} is exact and Equation~\ref{eq:alphastar-method} gives the true optimal line-search step for $F$ along $-\mathbf{g}$~(\cite{nocedal2006numerical}), delivering the fastest progress among steepest-descent steps.
\end{remark}
 
\paragraph{Justification.}
If $\mathcal{A}(\mathbf{x})=H\mathbf{x}$, then $J_{\mathcal{A}}(\mathbf{x})=H$ and the linearization is exact: $\mathcal{A}(\mathbf{x}-\alpha \mathbf{g})=\mathcal{A}(\mathbf{x})-\alpha H\mathbf{g}$. Hence $\tilde{F}$ coincides with $F(\mathbf{x}-\alpha \mathbf{g})$ along the line, and the model minimizer in Equation~\ref{eq:alphastar-method} is the exact optimal line-search step.
 
\begin{proposition}[Fixed points satisfy KKT for Equation~\ref{eq:constrained-prox}]
\label{prop:kkt-method}
Let $(\mathbf{x}^*,\mathbf{v}^*,\mathbf{u}^*)$ be a fixed point of the exact scaled-ADMM updates Equation~\ref{eq:xsub}--Equation~\ref{eq:usub} (i.e., $\mathbf{x}^{k+1}$ solves the $\mathbf{x}$-subproblem Equation~\ref{eq:xsub} exactly). Then $\mathcal{A}(\mathbf{x}^*)=\mathbf{v}^*$, $\mathbf{v}^*\in\mathcal{C}$, and there exists $\lambda^*\!\ge\!0$ such that
\begin{equation}
\frac{1}{\gamma_t}(\mathbf{x}^*-\mathbf{x}_{0\mid t})
\;+\;
\lambda^*\, J_\mathcal{A}(\mathbf{x}^*)^\top \boldsymbol{\nu}^* \;=\; 0,
\qquad
\lambda^* \big(\|\mathcal{A}(\mathbf{x}^*)-\mathbf{y}\|-\varepsilon\big) \;=\; 0,
\end{equation}
where
\[
\boldsymbol{\nu}^* \in
\begin{cases}
\Big\{\dfrac{\mathcal{A}(\mathbf{x}^*)-\mathbf{y}}{\|\mathcal{A}(\mathbf{x}^*)-\mathbf{y}\|}\Big\}, & \|\mathcal{A}(\mathbf{x}^*)-\mathbf{y}\|=\varepsilon,\\[6pt]
\{\mathbf{0}\}, & \|\mathcal{A}(\mathbf{x}^*)-\mathbf{y}\|<\varepsilon.
\end{cases}
\]
Hence $\mathbf{x}^*$ satisfies the KKT conditions of Equation~\ref{eq:constrained-prox}~(\cite{bertsekas1999nonlinear}).
\end{proposition}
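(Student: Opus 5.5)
The plan is to read off the three fixed-point equalities and turn each into one KKT ingredient. First, the dual update \eqref{eq:usub} at a fixed point gives $\mathbf{u}^*=\mathbf{u}^*+\mathcal{A}(\mathbf{x}^*)-\mathbf{v}^*$, hence $\mathcal{A}(\mathbf{x}^*)=\mathbf{v}^*$. The $\mathbf{v}$-update \eqref{eq:vsub} reads $\mathbf{v}^*=\Pi_{\mathcal{C}}(\mathbf{v}^*+\mathbf{u}^*)$, so $\mathbf{v}^*\in\mathcal{C}$; this yields primal feasibility $\|\mathcal{A}(\mathbf{x}^*)-\mathbf{y}\|\le\varepsilon$.

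Second, I would characterize $\mathbf{u}^*$ through the projection. By the variational characterization of projection onto a closed convex set, $\mathbf{v}^*=\Pi_{\mathcal{C}}(\mathbf{v}^*+\mathbf{u}^*)$ is equivalent to $\mathbf{u}^*\in N_{\mathcal{C}}(\mathbf{v}^*)$. Rather than invoking normal cones abstractly, I would read this directly off Proposition~\ref{prop:projection-ball}, splitting into two cases.
\begin{itemize}
\item \emph{Interior case.} If $\|\mathbf{v}^*-\mathbf{y}\|<\varepsilon$, the projection must be the identity on $\mathbf{v}^*+\mathbf{u}^*$. Otherwise its output lies on the sphere $\|\cdot-\mathbf{y}\|=\varepsilon$, contradicting $\|\mathbf{v}^*-\mathbf{y}\|<\varepsilon$. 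Hence $\mathbf{u}^*=\mathbf{0}$.
\item \emph{Boundary case.} If $\|\mathbf{v}^*-\mathbf{y}\|=\varepsilon$, either $\mathbf{u}^*=\mathbf{0}$, or $\mathbf{v}^*+\mathbf{u}^*$ lies outside $\mathcal{C}$. In the latter case the radial formula forces $\mathbf{v}^*-\mathbf{y}$ to be a positive multiple of $\mathbf{v}^*+\mathbf{u}^*-\mathbf{y}$, i.e.\ $\mathbf{u}^*=c(\mathbf{v}^*-\mathbf{y})$ for some $c>0$.
\end{itemize}
In both cases, $\mathbf{u}^*=\mu\,\boldsymbol{\nu}^*$ with $\mu:=\|\mathbf{u}^*\|\ge 0$ and $\boldsymbol{\nu}^*$ as in the statement. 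Moreover, $\mu>0$ only when the constraint is active, which gives complementary slackness.

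Third, because $\mathbf{x}^*$ exactly minimizes the smooth objective in \eqref{eq:xsub} with $\mathbf{v}^k=\mathbf{v}^*$ and $\mathbf{u}^k=\mathbf{u}^*$, first-order necessary optimality (using \textbf{A2}) gives
\[
\tfrac{1}{\gamma_t}(\mathbf{x}^*-\mathbf{x}_{0\mid t})+\rho J_{\mathcal{A}}(\mathbf{x}^*)^\top(\mathcal{A}(\mathbf{x}^*)-\mathbf{v}^*+\mathbf{u}^*)=0.
\]
Substituting $\mathcal{A}(\mathbf{x}^*)=\mathbf{v}^*$ and $\mathbf{u}^*=\mu\boldsymbol{\nu}^*$ gives the stationarity equation with $\lambda^*:=\rho\mu\ge 0$.

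Finally, I would check that these conditions are exactly the KKT system of \eqref{eq:constrained-prox}. The subtlety is that the constraint $\|\mathcal{A}(\mathbf{x})-\mathbf{y}\|\le\varepsilon$ is nonsmooth only at residual $\mathbf{0}$, which cannot occur on the active boundary since $\varepsilon>0$. Thus its gradient there is $J_{\mathcal{A}}^\top\boldsymbol{\nu}^*$, and in the inactive case the multiplier vanishes.

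The main obstacle is the careful case analysis inverting the projection, particularly excluding $\mathbf{u}^*$ pointing inward at a boundary point. I would handle this via the normal-cone inequality $\langle\mathbf{u}^*,\mathbf{w}-\mathbf{v}^*\rangle\le 0$ for all $\mathbf{w}\in\mathcal{C}$. Applied to the ball, this forces $\mathbf{u}^*$ to be a nonnegative multiple of the outward normal.
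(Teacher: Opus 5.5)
Your proposal is correct and follows the paper's proof. Both arguments get primal feasibility from the dual update, obtain $\mathbf{u}^*$ (equivalently $\rho\mathbf{u}^*$) in the normal cone of the ball from the projection step, and derive stationarity with $\lambda^*=\rho\|\mathbf{u}^*\|$ from first-order optimality of the exact $\mathbf{x}$-subproblem. The differences are minor: you get the normal-cone form by inverting the radial-shrink formula case by case rather than citing $\partial\iota_{\mathcal{C}}=N_{\mathcal{C}}$, and that inversion already forces $c=\|\mathbf{v}^*+\mathbf{u}^*-\mathbf{y}\|/\varepsilon-1>0$, so your separate normal-cone inequality is redundant. You also check explicitly that the constraint is differentiable on the active set because $\varepsilon>0$, which the paper leaves implicit.
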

 
\begin{proof}[Proof of Proposition~\ref{prop:kkt-method}]
At a fixed point $(\mathbf{x}^*,\mathbf{v}^*,\mathbf{u}^*)$, the $\mathbf{u}$-update satisfies $\mathbf{u}^*=\mathbf{u}^*+\mathcal{A}(\mathbf{x}^*)-\mathbf{v}^*$, hence primal feasibility $\mathcal{A}(\mathbf{x}^*)-\mathbf{v}^*=\mathbf{0}$. The $\mathbf{v}$-update is the metric projection onto $\mathcal{C}$:
\[
\mathbf{v}^*=\Pi_{\mathcal{C}}(\mathcal{A}(\mathbf{x}^*)+\mathbf{u}^*),
\]
so $\mathbf{v}^*\in\mathcal{C}$ and the optimality condition of the projection reads
\[
\mathbf{0}\in \partial \iota_{\mathcal{C}}(\mathbf{v}^*) + \rho\big(\mathbf{v}^*-(\mathcal{A}(\mathbf{x}^*)+\mathbf{u}^*)\big)=\partial \iota_{\mathcal{C}}(\mathbf{v}^*) - \rho\,\mathbf{u}^*,
\]
i.e., $\rho\,\mathbf{u}^*\in \partial \iota_{\mathcal{C}}(\mathbf{v}^*)=N_{\mathcal{C}}(\mathbf{v}^*)$, the normal cone of $\mathcal{C}$ at $\mathbf{v}^*$. For the $\mathbf{x}$-subproblem, first-order optimality gives
\[
\mathbf{0}=\tfrac{1}{\gamma_t}(\mathbf{x}^*-\mathbf{x}_{0\mid t}) + \rho\,J_{\mathcal{A}}(\mathbf{x}^*)^\top\big(\mathcal{A}(\mathbf{x}^*)-\mathbf{v}^*+\mathbf{u}^*\big)=\tfrac{1}{\gamma_t}(\mathbf{x}^*-\mathbf{x}_{0\mid t}) + \rho\,J_{\mathcal{A}}(\mathbf{x}^*)^\top \mathbf{u}^*,
\]
using primal feasibility. The normal cone for the ball $\mathcal{C}=\{\mathbf{v}:\|\mathbf{v}-\mathbf{y}\|\le \varepsilon\}$ is
\[
N_{\mathcal{C}}(\mathbf{v}^*)=
\begin{cases}
\{\lambda \boldsymbol{\nu}^*:\lambda\ge 0\}, & \|\mathbf{v}^*-\mathbf{y}\|=\varepsilon,\\[3pt]
\{\mathbf{0}\}, & \|\mathbf{v}^*-\mathbf{y}\|<\varepsilon,
\end{cases}
\quad\text{with}\quad
\boldsymbol{\nu}^*=\dfrac{\mathbf{v}^*-\mathbf{y}}{\|\mathbf{v}^*-\mathbf{y}\|}.
\]
Thus $\rho\,\mathbf{u}^*=\lambda^*\boldsymbol{\nu}^*$ for some $\lambda^*\ge 0$ when the constraint is active and $\mathbf{u}^*=\mathbf{0}$ otherwise. Substituting into the $\mathbf{x}$-optimality condition yields
\[
\tfrac{1}{\gamma_t}(\mathbf{x}^*-\mathbf{x}_{0\mid t}) + \lambda^* J_{\mathcal{A}}(\mathbf{x}^*)^\top \boldsymbol{\nu}^*=\mathbf{0}.
\]
Complementarity $\lambda^*(\|\mathcal{A}(\mathbf{x}^*)-\mathbf{y}\|-\varepsilon)=0$ follows by construction of the normal cone. Hence $(\mathbf{x}^*,\lambda^*)$ satisfies the KKT conditions of Equation~\ref{eq:app-constrained-prox}.
\end{proof}
 
\begin{proposition}[Mode-substitution re-annealing under a local Gaussian surrogate]
\label{prop:kl-bound-method}
Fix $\mathbf{x}_t$ and assume the conditional admits a local Gaussian (Laplace) surrogate
$p(\mathbf{x}_0\mid \mathbf{x}_t,\mathbf{y}) \approx \mathcal{N}(\boldsymbol{m}_t,\Sigma_t)$ with $\Sigma_t\succeq 0$,
where $\boldsymbol{m}_t$ is the local mode/mean of the surrogate.
Let $\mathbf{x}_{0\mid t}^{\mathrm{corr}}$ denote the \emph{deterministic corrected estimate used by \textsc{fast-dips}}
as the center of the re-annealing step in Equation~\ref{eq:anneal-sampling}
(for analysis, take $\mathbf{x}_{0\mid t}^{\mathrm{corr}}$ to be the solution of Equation~\ref{eq:constrained-prox}).
For $\sigma_{t-1}>0$, consider the two per-step \emph{Gaussian-injected} distributions:
(i) sample $\mathbf{x}_0\sim \mathcal{N}(\boldsymbol{m}_t,\Sigma_t)$ and inject $\mathcal{N}(\mathbf{0},\sigma_{t-1}^2 I)$, giving
$\mathcal{N}(\boldsymbol{m}_t,\Sigma_t+\sigma_{t-1}^2 I)$; versus
(ii) inject $\mathcal{N}(\mathbf{0},\sigma_{t-1}^2 I)$ centered at $\mathbf{x}_{0\mid t}^{\mathrm{corr}}$, giving
$\mathcal{N}(\mathbf{x}_{0\mid t}^{\mathrm{corr}},\sigma_{t-1}^2 I)$.
Then
\begin{equation}
\mathrm{KL}\!\left(
\mathcal{N}(\boldsymbol{m}_t,\Sigma_t+\sigma_{t-1}^2 I)
\;\big\|\;
\mathcal{N}(\mathbf{x}_{0\mid t}^{\mathrm{corr}},\sigma_{t-1}^2 I)
\right)
\;\le\;
\frac{\|\boldsymbol{m}_t-\mathbf{x}_{0\mid t}^{\mathrm{corr}}\|^2}{2\sigma_{t-1}^2}
+\frac{\|\Sigma_t\|_F^2}{4\sigma_{t-1}^4}.
\end{equation}
\emph{Consequences.} The bound quantifies the approximation incurred by \textsc{fast-dips} replacing sampling from the local
Gaussian surrogate by mode-centered re-annealing. It is small early when $\sigma_{t-1}^2$ is large, and it is small late
when the surrogate conditional is concentrated (small $\|\Sigma_t\|_F$) and the corrected estimate
$\mathbf{x}_{0\mid t}^{\mathrm{corr}}$ is close to the surrogate mode/mean $\boldsymbol{m}_t$.
The first term also makes explicit that an arbitrary substitution can be poor; the approximation is meaningful only insofar as
$\mathbf{x}_{0\mid t}^{\mathrm{corr}}$ acts as a good mode proxy under the local surrogate.
\end{proposition}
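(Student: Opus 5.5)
The plan is to compute the KL divergence between the two Gaussians in closed form and then bound the covariance-dependent part by a scalar inequality applied eigenvalue by eigenvalue. Write $d=CHW$, $s:=\sigma_{t-1}>0$, $P=\mathcal{N}(\boldsymbol{m}_t,\Sigma_t+s^2 I)$ and $Q=\mathcal{N}(\mathbf{x}_{0\mid t}^{\mathrm{corr}},s^2 I)$. Both covariances are positive definite because $\Sigma_t\succeq 0$ and $s>0$, so the standard Gaussian KL formula applies:
\[
\mathrm{KL}(P\|Q)=\tfrac12\Big[\operatorname{tr}\!\big((s^2I)^{-1}(\Sigma_t+s^2I)\big)-d+\tfrac{\|\boldsymbol{m}_t-\mathbf{x}_{0\mid t}^{\mathrm{corr}}\|^2}{s^2}+\log\tfrac{\det(s^2 I)}{\det(\Sigma_t+s^2I)}\Big].
\]
This simplifies to
\[
\mathrm{KL}(P\|Q)=\frac{\|\boldsymbol{m}_t-\mathbf{x}_{0\mid t}^{\mathrm{corr}}\|^2}{2s^2}+\frac12\Big[\operatorname{tr}(\Sigma_t/s^2)-\log\det(I+\Sigma_t/s^2)\Big].
\]
The mean term already matches the first term of the claimed bound exactly.

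For the covariance term, diagonalize the symmetric positive semidefinite matrix $\Sigma_t/s^2$ with eigenvalues $\lambda_1,\dots,\lambda_d\ge 0$. Then the bracket equals $\sum_i\big(\lambda_i-\log(1+\lambda_i)\big)$. It therefore suffices to prove the scalar inequality $\lambda-\log(1+\lambda)\le \lambda^2/2$ for all $\lambda\ge 0$. Given this, the covariance term is at most $\tfrac14\sum_i\lambda_i^2=\tfrac14\|\Sigma_t/s^2\|_F^2=\|\Sigma_t\|_F^2/(4\sigma_{t-1}^4)$, which is the second term of the bound.

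The only step needing care is this scalar inequality, and I would prove it by a monotonicity argument. Set $h(\lambda)=\lambda^2/2-\lambda+\log(1+\lambda)$. Then $h(0)=0$ and $h'(\lambda)=\lambda-1+1/(1+\lambda)=\lambda^2/(1+\lambda)\ge 0$ on $[0,\infty)$, so $h\ge 0$ there. Adding the mean and covariance bounds gives the stated inequality.

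The consequences follow by inspecting the two terms: both vanish as $s\to\infty$ with the other quantities held fixed, and both are small when $\|\Sigma_t\|_F$ is small and $\mathbf{x}_{0\mid t}^{\mathrm{corr}}$ is close to $\boldsymbol{m}_t$. I expect no step to be a real obstacle. The only subtlety is to use the eigenvalue form, rather than, say, an operator-norm bound, so that the final term comes out exactly as $\|\Sigma_t\|_F^2$.
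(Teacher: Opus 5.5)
Your proposal is correct and matches the paper's proof essentially step for step: the closed-form Gaussian KL, eigen-decomposition of the covariance term, and the termwise bound $x-\log(1+x)\le x^2/2$ that yields the $\|\Sigma_t\|_F^2/(4\sigma_{t-1}^4)$ term. Your monotonicity argument via $h'(\lambda)=\lambda^2/(1+\lambda)\ge 0$ proves the scalar inequality, which the paper only asserts.
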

 
\begin{proof}[Proof of Proposition~\ref{prop:kl-bound-method}]
Let $P=\mathcal{N}(\boldsymbol{m}_t,\Sigma_t+\sigma^2 I)$ and $Q=\mathcal{N}(\mathbf{x}_{0\mid t}^{\mathrm{corr}},\sigma^2 I)$ in $\mathbb{R}^d$. The Gaussian KL formula gives
\[
\mathrm{KL}(P\|Q)=\tfrac{1}{2}\!\left(\mathrm{tr}(\Sigma_Q^{-1}\Sigma_P) + (\boldsymbol{\mu}_Q-\boldsymbol{\mu}_P)^\top \Sigma_Q^{-1} (\boldsymbol{\mu}_Q-\boldsymbol{\mu}_P) - d + \log\frac{\det \Sigma_Q}{\det \Sigma_P}\right).
\]
With $\Sigma_Q=\sigma^2 I$, $\Sigma_P=\sigma^2 I+\Sigma_t$, $\boldsymbol{\mu}_Q-\boldsymbol{\mu}_P=\mathbf{x}_{0\mid t}^{\mathrm{corr}}-\boldsymbol{m}_t$, we get
\[
\mathrm{KL}(P\|Q)=\frac{\|\mathbf{x}_{0\mid t}^{\mathrm{corr}}-\boldsymbol{m}_t\|^2}{2\sigma^2} + \frac12\!\left(\mathrm{tr}(I+\tfrac{1}{\sigma^2}\Sigma_t)-d - \log\det(I+\tfrac{1}{\sigma^2}\Sigma_t)\right).
\]
Diagonalize $\Sigma_t=U\Lambda U^\top$ with eigenvalues $\lambda_i\ge 0$. Then
\[
\mathrm{KL}(P\|Q)=\frac{\|\mathbf{x}_{0\mid t}^{\mathrm{corr}}-\boldsymbol{m}_t\|^2}{2\sigma^2} + \frac12\sum_{i=1}^d\!\left(\frac{\lambda_i}{\sigma^2}-\log\!\Big(1+\frac{\lambda_i}{\sigma^2}\Big)\right).
\]
Use $x-\log(1+x)\le x^2/2$ for $x\ge 0$ termwise to obtain
\[
\mathrm{KL}(P\|Q)\le \frac{\|\mathbf{x}_{0\mid t}^{\mathrm{corr}}-\boldsymbol{m}_t\|^2}{2\sigma^2} + \frac{1}{4}\sum_{i=1}^d\frac{\lambda_i^2}{\sigma^4}
=\frac{\|\mathbf{x}_{0\mid t}^{\mathrm{corr}}-\boldsymbol{m}_t\|^2}{2\sigma^2} + \frac{\|\Sigma_t\|^2_F}{4\sigma^4}.
\]
\paragraph{Tightness regimes.} The second term vanishes as $\sigma^2\!\to\!\infty$ (early in the schedule) and as $\|\Sigma_t\|_F\!\to\!0$ (late in the schedule); the first term quantifies bias between the mode $\mathbf{x}_{0\mid t}^{\mathrm{corr}}$ and the posterior mean $\boldsymbol{m}_t$.
\end{proof}

\subsubsection{Latent-space counterparts}
\label{app:proofs-latent}
 
\noindent\paragraph{Why the substitution $\mathcal{A}\mapsto \mathcal{A}\!\circ\!\mathcal{D}$ is valid.}
If $\mathcal{A}$ and the decoder $\mathcal{D}$ are $C^1$ in a neighborhood of the iterates and have locally Lipschitz Jacobians, then the composite $\mathcal{A}\!\circ\!\mathcal{D}$ is $C^1$ with locally Lipschitz Jacobian on the same neighborhood. Consequently, all arguments that rely on local linearization (and VJP/JVP computations) transfer verbatim to $\mathcal{A}\!\circ\!\mathcal{D}$ via the chain rule, while the projection step is unchanged because feasibility is enforced in measurement space.
 
\noindent\paragraph{Latent proofs.}
The latent descent guarantee and the latent fixed-point $\Rightarrow$ KKT statement follow from the pixel-space proofs by replacing $\mathcal{A}$ with $\mathcal{A}\!\circ\!\mathcal{D}$ and $\mathbf{x}$ with $\mathbf{z}$, and applying the chain rule to the VJP/JVP terms. Since these arguments are structurally identical, we omit the redundant proofs for brevity.
 
\begin{remark}[Mode-substitution transport in latent space]
Proposition~\ref{prop:kl-bound-method} applies verbatim to the latent re-annealing step $\mathbf{z}_{t-1}=\mathbf{z}_{0\mid t}^{\mathrm{corr}}+\sigma_{t-1}\boldsymbol{\xi}$ by replacing $\mathbf{x}$ with $\mathbf{z}$, since the transport kernel in latent space is Gaussian. If one compares the decoded distributions in pixel space, deterministic decoding $\mathcal{D}$ cannot increase KL (data processing), so the latent KL bound also upper-bounds the discrepancy after decoding.
\end{remark}
 
\subsubsection{Derivation of analytic step sizes and autodiff computation}
\label{app:alpha-derivation}
 
\paragraph{Pixel space:}
Recall
\[
F(\mathbf{x})=\frac{1}{2\gamma_t}\|\mathbf{x}-\mathbf{x}_{0\mid t}\|^2 + \frac{\rho}{2}\|\mathcal{A}(\mathbf{x})-\mathbf{b}\|^2,\quad
\mathbf{s}=\mathbf{x}-\mathbf{x}_{0\mid t},\quad \mathbf{r}=\mathcal{A}(\mathbf{x})-\mathbf{b}.
\]
Then $\mathbf{g}=\tfrac{1}{\gamma_t}\mathbf{s}+\rho\,J_\mathcal{A}(\mathbf{x})^\top \mathbf{r}$. For the trial $\mathbf{x}(\alpha)=\mathbf{x}-\alpha \mathbf{g}$,
\[
\mathcal{A}(\mathbf{x}(\alpha))\approx \mathcal{A}(\mathbf{x})-\alpha J_\mathcal{A}(\mathbf{x})\mathbf{g}
\]
gives the scalar quadratic model
\[
\tilde{F}(\alpha)=\frac{1}{2\gamma_t}\|\mathbf{s}-\alpha \mathbf{g}\|^2 + \frac{\rho}{2}\|\mathbf{r}-\alpha J_\mathcal{A}(\mathbf{x})\mathbf{g}\|^2,
\]
whose derivative is
\[
\tilde{F}'(\alpha)=-\tfrac{1}{\gamma_t}\langle \mathbf{s},\mathbf{g}\rangle - \rho\langle \mathbf{r},J_\mathcal{A}(\mathbf{x})\mathbf{g}\rangle + \alpha\!\left(\tfrac{1}{\gamma_t}\|\mathbf{g}\|^2+\rho\|J_\mathcal{A}(\mathbf{x})\mathbf{g}\|^2\right).
\]
Setting $\tilde{F}'(\alpha)=0$ yields $\alpha^\star$ in Equation~\ref{eq:alphastar-method}. The curvature $\tilde{F}''(\alpha)=\tfrac{1}{\gamma_t}\|\mathbf{g}\|^2+\rho\|J_\mathcal{A}(\mathbf{x})\mathbf{g}\|^2\ge 0$ shows uniqueness unless $\mathbf{g}=\mathbf{0}$.
 
\emph{Autodiff computation recipe (pixel):}
\begin{enumerate}
\item Evaluate $\mathcal{A}(\mathbf{x})$ to get $\mathbf{r}=\mathcal{A}(\mathbf{x})-\mathbf{b}$.
\item Compute the VJP $J_\mathcal{A}(\mathbf{x})^\top \mathbf{r}$ (reverse-mode autodiff) and form $\mathbf{g}$.
\item Obtain the directional Jacobian $J_\mathcal{A}(\mathbf{x})\mathbf{g}$ either
\begin{itemize}
\item by forward-mode autodiff (preferred when available), or
\item by a single forward-difference probe
\[
J_\mathcal{A}(\mathbf{x})\mathbf{g}\;\approx\;\frac{\Delta \mathcal{A}}{\eta},\qquad \Delta \mathcal{A}:=\mathcal{A}(\mathbf{x}+\eta \mathbf{g})-\mathcal{A}(\mathbf{x}),
\]
in which case it is numerically convenient to assemble the FD-stabilized closed form Equation~\ref{eq:alphastar-fd} (equivalent to substituting $\Delta \mathcal{A}/\eta$ into Equation~\ref{eq:alphastar-method} but avoiding division by small $\eta$).
\end{itemize}
\item Assemble the numerator/denominator and perform Armijo backtracking.
\end{enumerate}
 
\paragraph{Latent space:}
With
\[
F_z(\mathbf{z})=\frac{1}{2\gamma_z}\|\mathbf{z}-\mathbf{z}_{0\mid t}\|^2 + \frac{\rho_z}{2}\|\mathcal{A}(\mathcal{D}(\mathbf{z}))-\mathbf{b}\|^2,\quad
\mathbf{g}_z=\frac{1}{\gamma_z}(\mathbf{z}-\mathbf{z}_{0\mid t}) + \rho_z\,J_{\mathcal{A}\circ \mathcal{D}}(\mathbf{z})^\top (\mathcal{A}(\mathcal{D}(\mathbf{z}))-\mathbf{b}),
\]
linearize $\mathcal{A}\!\circ\!\mathcal{D}$ to obtain
\[
\tilde{F}_z(\alpha)=\frac{1}{2\gamma_z}\|\mathbf{z}-\mathbf{z}_{0\mid t}-\alpha \mathbf{g}_z\|^2 + \frac{\rho_z}{2}\|\mathcal{A}(\mathcal{D}(\mathbf{z}))-\mathbf{b}-\alpha\,J_{\mathcal{A}\circ \mathcal{D}}(\mathbf{z})\mathbf{g}_z\|^2,
\]
whose minimizer is Equation~\ref{eq:alphastar-latent}. The VJP/JVP are computed end-to-end through $\mathcal{D}$ and $\mathcal{A}$ by autodiff; a single finite-difference through the composition is a valid JVP fallback:
\[
J_{\mathcal{A}\circ \mathcal{D}}(\mathbf{z})\mathbf{g}_z \;\approx\; \frac{\mathcal{A}(\mathcal{D}(\mathbf{z}+\delta \mathbf{g}_z))-\mathcal{A}(\mathcal{D}(\mathbf{z}))}{\delta}.
\]
 
\paragraph{Complex-valued measurements.}
When measurements are complex, we work with real--imaginary stacking (dimension $2m$) and the Euclidean norm; all expressions remain valid verbatim, with $J_\mathcal{A}$ denoting the real Jacobian.
 
\paragraph{Backtracking and safeguards.}
We use a monotone backtracking line search: starting from an initial candidate step (e.g., $\alpha^\star$ or $\alpha_{\mathrm{FD}}$), repeatedly shrink $\alpha\leftarrow \tau \alpha$ (e.g., $\tau=\tfrac12$) until acceptance, i.e.,
\[
F(\mathbf{x}-\alpha \mathbf{g}) \le F(\mathbf{x})
\quad \text{(up to a small numerical tolerance).}
\]
 
\subsubsection{Additional remarks}
\label{app:add-remarks}
 
\begin{remark}[Trust-region scaling along the schedule]
Setting $\gamma_t=\sigma_t^2$ ties the proximal weight to the diffusion noise: early (large $\sigma_t$) the correction can move farther from the denoiser anchor, while late (small $\sigma_t$) the local trust region tightens.
\end{remark}
 
\begin{remark}[Constraint and closed-form projection in implementation]
Under the AWGN setting adopted throughout, the measurement-space constraint is an $\ell_2$ (Euclidean) ball and the $\mathbf{v}$-update is the closed-form radial shrink in Equation~\ref{eq:projection-ball}. The remaining steps follow standard scaled ADMM-style updates, with the $\mathbf{x}$-update handled approximately by adjoint-free gradient steps with analytic step sizing and backtracking.
\end{remark}
 
\begin{remark}[Empirical choice: FD in pixel, JVP in latent]
Our analytic step size depends on a directional Jacobian term along the current descent direction: $J_{\mathcal{A}}(\mathbf{x})\mathbf{g}$ in pixel space and $J_{\mathcal{A}\circ\mathcal{D}}(\mathbf{z})\mathbf{g}_z$ in latent space. In pixel space, a single forward-difference probe provides a sufficiently accurate estimate of this term at lower overhead than an autodiff JVP in our implementation, yielding similar performance with reduced run-time. In latent space, however, the composed map $\mathcal{A}\circ\mathcal{D}$ is typically much more curved, so the FD estimate becomes sensitive to the perturbation scale: too large yields a biased directional derivative (poor local linearization), while too small suffers from numerical cancellation (especially with mixed precision), leading to unstable step-size estimates and degraded reconstructions. We therefore use autodiff JVPs through $\mathcal{A}\circ\mathcal{D}$ for the latent variant. In both cases, backtracking guarantees descent of the current subproblem objective (with ADMM auxiliary variables fixed).
\end{remark}

\subsection{Ablation studies}
\label{app:ablation}

\paragraph{Goal and tasks.}
We assess the impact of measurement‑space feasibility via projection and the choice of step size inside the \(\mathbf{x}\)‑update. Experiments use 10 FFHQ images on two representatives: Gaussian blur in pixel space and HDR in latent space, with PSNR/SSIM/LPIPS and average per‑image run-time.

\paragraph{Baseline and objective.}
To isolate projection, we evaluate an unsplit penalized baseline that optimizes the same quadratic objective as the \(\mathbf{x}\)-subproblem inside ADMM, but without variable splitting or projection:
\[
\min_{\mathbf{x}\in\mathbb{R}^{CHW}}
\;\frac{1}{2\gamma_t}\,\|\mathbf{x}-\mathbf{x}_{0\mid t}\|^2
\;+\; \frac{1}{2\beta^2}\,\|\mathcal{A}(\mathbf{x})-\mathbf{y}\|^2,
\]
which we refer to as QDP (no splitting, no proj.). In all runs we match the ADMM instantiation by setting \(\gamma_t= \sigma_t^2\) identically to \textsc{fast-dips} and choosing the data‑penalty weight so that \(\tfrac{\rho}{2}=\tfrac{1}{2\beta^2}\). 

\paragraph{Compute‑matched fairness.}
Each \(\mathbf{x}\)‑gradient step entails one forward pass of \(\mathcal{A}\), one VJP, and one JVP (or a single forward probe for FD); projection and dual updates are negligible. With \(K\) ADMM iterations and \(S\) gradient steps per iteration, \textsc{fast-dips} (ADMM + proj.) spends \(K\!\times\!S\) such triplets per level, so QDP is allotted \(K\!\times\!S\) gradient steps per level to match compute. Step‑size mechanisms are kept identical between solvers: constant \(\alpha\), analytic/JVP \(\alpha^\star\), and finite‑difference \(\alpha_{\text{FD}}\). 

\paragraph{Findings.}
As shown in Table~\ref{tab:ablation-appendix}, in pixel space, \(\alpha_{\text{FD}}\) is competitive with \(\alpha^\star\) at lower cost; in latent space, \(\alpha^\star\) provides the stability needed for the nonlinear decoder–forward composition, while \(\alpha_{\text{FD}}\) lags. Under the matched budget, enforcing feasibility via projection improves quality over the unsplit penalty path; latent run-times primarily reflect decoder backprop.

\begin{table}[t]
\centering
\caption{Ablation of step‑size selection inside two per‑level solvers. Left: Gaussian blur (pixel). Right: HDR (latent). We compare constant \(\alpha\), analytic/JVP \(\alpha^*\), and forward‑only \(\alpha_{\text{FD}}\) within QDP (no splitting, no proj.) and \textsc{fast-dips} (ADMM + proj.). For fairness, compute is matched by allocating \(K\!\times\!S\) gradient steps per level to QDP when \textsc{fast-dips} uses \(K\) ADMM iterations with \(S\) gradient steps each; projection/dual updates are negligible. All results are evaluated on FFHQ256 10 samples.}
\label{tab:ablation-appendix}
\begin{adjustbox}{width=1.0\textwidth}
\begin{tabular}{c|c ccccc |c|c ccccc@{}}
\toprule
\multirow{2}{*}{Solver}&\multicolumn{6}{c|}{Gaussian Blur (Pixel)} & \multirow{2}{*}{Solver} & \multicolumn{6}{c}{High Dynamic Range (Latent)} \\
&  \multicolumn{2}{c}{Step Size Method} & PSNR & SSIM & LPIPS & Run-time (s) & 
& \multicolumn{2}{c}{Step Size Method} & PSNR & SSIM & LPIPS & Run-time (s) \\
\midrule
\multirow{5}{*}{QDP (no splitting, no proj.)}
& \multirow{3}{*}{constant} 
    & $\alpha = 10^{-4}$ & 22.854 & 0.665 & 0.429 & 1.893 &
  \multirow{5}{*}{QDP (no splitting, no proj.)} &
  \multirow{3}{*}{constant} 
    & $\alpha = 10^{-5}$ & 22.113 & 0.671 & 0.459 & 21.827 \\
& & $\alpha = 10^{-3}$ & 28.028 & 0.796 & 0.314 & 1.867 &
    & & $\alpha = 10^{-4}$ & 23.486 & 0.769 & 0.356 & 22.078 \\
& & $\alpha = 10^{-2}$ & 2.687 & 0.162 & 0.779 & 1.955 &
    & & $\alpha = 10^{-3}$ & 16.296 & 0.614 & 0.555 & 22.044 \\
\cmidrule(lr){2-7}\cmidrule(lr){9-14}
& \multicolumn{2}{c}{JVP} & 29.480 & 0.830 & 0.271 & 2.356 &
  & \multicolumn{2}{c}{JVP} & 24.356 & 0.757 & 0.357 & 60.963 \\
\cmidrule(lr){2-7}\cmidrule(lr){9-14}
& \multicolumn{2}{c}{FD} & 29.577 & 0.832 & 0.268 & 2.018 &
  & \multicolumn{2}{c}{FD} & 23.196 & 0.750 & 0.364 & 31.158 \\
\midrule
\multirow{5}{*}{FAST\mbox{-}DIPS (ADMM + proj.)} 
& \multirow{3}{*}{constant}
    & $\alpha = 10^{-4}$ & 24.829 & 0.714 & 0.391 & 1.988 &
  \multirow{5}{*}{FAST\mbox{-}DIPS (ADMM + proj.)} &
  \multirow{3}{*}{constant}
    & $\alpha = 10^{-6}$ & 21.522 & 0.641 & 0.496 & 25.011 \\
& & $\alpha = 10^{-3}$ & 28.647 & 0.811 & 0.296 & 2.029 &
    & & $\alpha = 10^{-5}$ & 25.021 & 0.768 & 0.339 & 25.110 \\
& & $\alpha = 10^{-2}$ & 3.851 & 0.151 & 0.772 & 1.993 &
    & & $\alpha = 10^{-4}$ & 23.328 & 0.797 & 0.298 & 25.159 \\
\cmidrule(lr){2-7}\cmidrule(lr){9-14}
& \multicolumn{2}{c}{JVP} & 29.762 & 0.829 & 0.273 & 2.502 &
  & \multicolumn{2}{c}{JVP (Ours)}  & 25.530 & 0.811 & 0.273 & 63.952 \\
\cmidrule(lr){2-7}\cmidrule(lr){9-14}
& \multicolumn{2}{c}{FD (Ours)} & 29.632 & 0.819 & 0.287 & 2.053 &
  & \multicolumn{2}{c}{FD} & 21.041 & 0.736 & 0.355 & 34.197 \\
\bottomrule
\end{tabular}
\end{adjustbox}
\end{table}
\subsection{Hyperparameters Overview}
\begin{table}[!ht]
    \centering
    \caption{The hyperparameters of experiments in paper for all tasks.}
    \label{tab:hyper_setting}
    \resizebox{\linewidth}{!}{
    \begin{tabular}{c|c|ccccc|ccc}
    \toprule
       \multirow{2}{*}{Algorithm} & \multirow{2}{*}{Parameter}  & \multicolumn{5}{c|}{Linear task} & \multicolumn{3}{c}{Non Linear task}\\
       & & Super Resolution 4× & Inpaint (Box) & Inpaint (Random) & Gaussian deblurring & Motion deblurring & Phase retrieval & Nonlinear deblurring & High dynamic range\\
       \midrule
       \multirow{5}{*}{FAST-DIPS}&$T$& 75 & 75 & 75 & 50 & 50 & 150 & 150 & 150\\
        &$K$&3&3&3&3&3&2&2&2\\
        &$S$&1&1&1&2&2&5&5&5 \\
        &$\rho$&200&200&200&200&200&200&200&5\\
        &$\varepsilon$&0.05&0.05&0.05&0.05&0.05&0.05&0.05&0.05\\
    \midrule
       \multirow{6}{*}{Latent FAST-DIPS} 
        &$T$&50&50&50&50&50&25&25&25\\
        &$(K_x,K_z)$&(5,5)&(5,5)&(5,5)&(5,5)&(5,5)&(10,10)&(10,10)&(10,10)\\
        &$(S_x,S_z)$&(3,3)&(3,3)&(3,3)&(3,3)&(3,3)&(3,3)&(3,3)&(3,3)  \\
        &$(\rho_x,\rho_z)$&(200,200)&(200,200)&(200,200)&(200,200)&(200,200)&(200,200)&(200,200)&(200,200)\\
        &$\sigma_\text{switch}$&1&1&1&1&1&5&5&5\\
        &$\varepsilon$&0.05&0.05&0.05&0.05&0.05&0.05&0.05&0.05\\
    \bottomrule
    \end{tabular}}

\end{table}

Throughout our experiments, hyperparameter settings are summarized in Table~\ref{tab:hyper_setting}. We fix $\eta = 10^{-3}$ for all experiments. In the annealing process, we set $\sigma_\text{max}=100$ in pixel space and 10 in latent space, with $\sigma_\text{min}=0.1$ in both, to enhance robustness to measurement noise.
 
\subsection{Experimental Details}
\label{appendix:exp_detail}
\paragraph{Validation set information.}
For reproducibility, we explicitly specify the indices of the samples used for validation.  
\textbf{FFHQ ($256 \times 256$).} We use 100 images corresponding to dataset indices \texttt{00000}–\texttt{00099}.  
\textbf{ImageNet ($256 \times 256$).} We use 100 images from the ImageNet validation set corresponding to indices \texttt{49000}–\texttt{49099}.

\paragraph{run-time measurement.}
The run-times reported in Table~\ref{main_table} and Figure~\ref{fig:plot} correspond to the pure sampling time per generated sample.
Since we sample with batch size $1$, we measure the sampling time for each individual sample, sum over $N$ samples,
and report the average run-time as $\left(\sum_{j=1}^{N} t_j\right)/N$.
We exclude data loading, saving, and metric evaluation from the timing, and measure only the time spent inside the sampling loop.
The same timing protocol is applied to \textsc{fast-dips} and all baselines.
For phase retrieval (best-of-4), we report the mean per-run run-time over four independent runs:
$\left(\sum_{i=1}^{4} \tau_i\right)/4$.

\subsection{Baseline Implementation Details}
\label{appendix:baselines}

All baselines were run using the authors' public repositories:
\begin{itemize}
  \item \textbf{DAPS/LatentDAPS}: \href{https://github.com/zhangbingliang2019/DAPS}{github.com/zhangbingliang2019/DAPS}
  \item \textbf{SITCOM}: \href{https://github.com/sjames40/SITCOM}{github.com/sjames40/SITCOM}
  \item \textbf{HRDIS}: \href{https://github.com/deng-ai-lab/HRDIS}{github.com/deng-ai-lab/HRDIS}
  \item \textbf{C-$\Pi$GDM}: \href{https://github.com/mandt-lab/c-pigdm}{github.com/mandt-lab/c-pigdm}
  \item \textbf{PSLD}: \href{https://github.com/LituRout/PSLD}{github.com/LituRout/PSLD}
  \item \textbf{ReSample}: \href{https://github.com/soominkwon/resample}{github.com/soominkwon/resample}
\end{itemize}

We followed each method's original paper and default repository settings. Additionally, for phase retrieval we applied a best-of-four protocol uniformly across all compared baselines.

\paragraph{Measurement noise setting.}
Because the SVD-based operator becomes unstable when noise is injected in super-resolution and Gaussian deblurring, we evaluate HRDIS with noise on the remaining tasks, whereas C-$\Pi$GDM is evaluated only in the noiseless setting across all tasks.

\paragraph{Details of Figure~\ref{fig:plot}.}
For the run-time–quality trade-off in Figure~\ref{fig:plot}, we varied only the number of solver steps/iterations per method, keeping all other hyperparameters at their recommended defaults:
\begin{itemize}
  \item \textbf{DAPS:} The number of ODE steps was fixed at 4, while the number of annealing steps was swept over \(\{2, 5, 10, 15, 20, 25\}\).
  \item \textbf{SITCOM:} We swept pairs of diffusion steps \(N\) and inner iterations \(K\) over \((N,K)\in\{(3,2), (5,3), (5,5), (5,10), (5,15),(5,20)\}\).
  \item \textbf{HRDIS:} We varied the number of diffusion steps over \(\{10, 15, 50, 80, 100, 130\}\).
  \item \textbf{C-$\Pi$GDM:} We varied the number of diffusion steps over \(\{20, 50, 75, 100, 150, 200\}\).
\end{itemize}
\paragraph{Automatic Differentiation Primitives.}  To implement the adjoint-free analytic updates without manually deriving gradients for the forward operator $\mathcal{A}$, we leverage the automatic differentiation capabilities of PyTorch. Specifically, the Vector-Jacobian Product (VJP) term $J_{\mathcal{A}}(\mathbf{x})^\top \mathbf{r}$, which is necessary for computing the gradient of the data-consistency term, is obtained via standard reverse-mode differentiation using \texttt{torch.autograd.grad}. For the Jacobian-Vector Product (JVP) term $J_{\mathcal{A}}(\mathbf{x})\mathbf{g}$, we utilize the functional transformation API, specifically \texttt{torch.func.jvp}. This allows us to efficiently compute the directional derivative required for the local quadratic model in a fully differentiable manner.

\subsection{Additional Experiments}
\begin{figure}[t]
  \centering
  \begin{minipage}{0.45\linewidth}
    \centering
    \captionof{table}{Quantitative results under Poisson measurement noise ($\lambda_\text{poisson}=1$). FAST-DIPS remains accurate and perceptually faithful across tasks.}
    \label{tab:poisson}
    \scriptsize
    \begin{tabular}{lccc}
      \toprule
      Task & PSNR & SSIM & LPIPS \\
      \midrule
      Gaussian deblurring & 28.730 & 0.814 & 0.273 \\
      Random Inpainting   & 30.806 & 0.878 & 0.192 \\
      Nonlinear deblurring& 27.016 & 0.781 & 0.266 \\
      \bottomrule
    \end{tabular}
  \end{minipage}\hfill
  \begin{minipage}{0.55\linewidth}
    \centering
    \includegraphics[width=\linewidth]{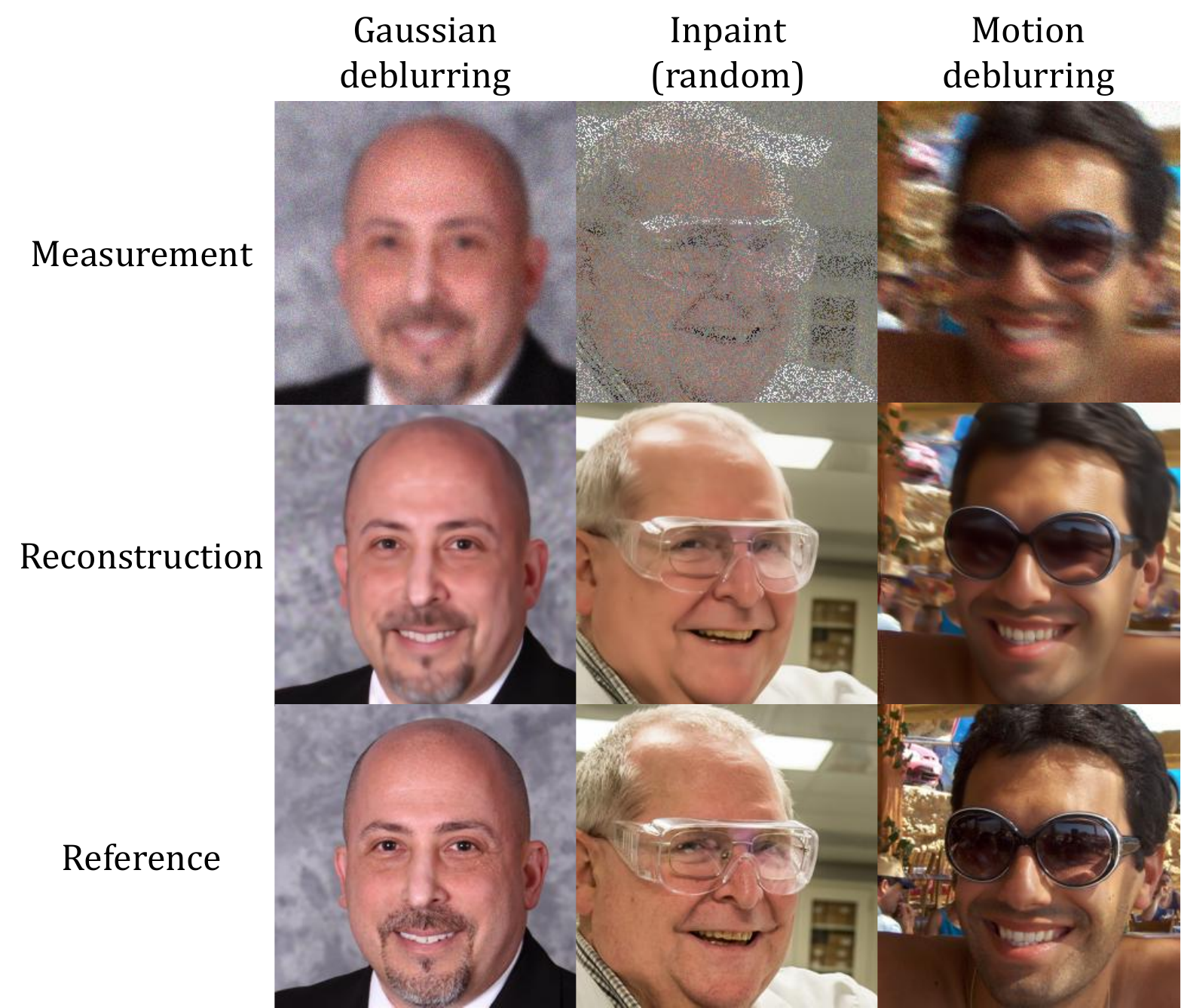}
    \captionof{figure}{Qualitative reconstructions under Poisson measurement noise ($\lambda_{\text{poisson}}=1$): FAST-DIPS preserves edges and textures across tasks.}
    \label{fig:apdix_poisson}
  \end{minipage}
\end{figure}
\label{appendix:add_exp}
\begin{table}[h]
\centering
\caption{The trade-off between quality and cost in the x-update step. For complex nonlinear tasks such as nonlinear deblurring, increasing the number of gradient steps improves reconstruction quality but also increases computational cost. All experiments were conducted on 10 samples using an RTX 6000 Ada GPU.}
\label{tab:multi_x}
\begin{adjustbox}{width=0.7\textwidth}
\begin{tabular}{c|c| cccc| cccc}
\toprule
\multirow{2}{*}{$K$} & \multirow{2}{*}{$S$} & \multicolumn{4}{c}{\textbf{Super Resolution 4×}} & \multicolumn{4}{c}{\textbf{Nonlinear Blur}} \\
& & PSNR & SSIM & LPIPS & Run-time (s) & PSNR & SSIM & LPIPS & Run-time (s) \\
\midrule
\multirow{3}{*}{2}&1 & 29.627 & 0.837 & 0.260 & 1.444 & 25.101 & 0.711 & 0.367 & 8.711 \\
&3                   & 29.675 & 0.838 & 0.259 & 1.778 & 27.275 & 0.780 & 0.302 & 17.284 \\
&5                   & 29.678 & 0.838 & 0.259 & 2.113 & 27.914 & 0.800 & 0.285 & 26.766 \\
\midrule
\multirow{3}{*}{3}&1 & 29.740 & 0.839 & 0.256 & 1.569 & 25.491 & 0.725 & 0.347 & 10.937 \\
&3                   & 29.734 & 0.839 & 0.256 & 2.092 & 27.079 & 0.785 & 0.287 & 26.888 \\
&5                   & 29.734 & 0.839 & 0.256 & 2.608 & 27.546 & 0.801 & 0.274 & 38.314 \\
\midrule
\multirow{3}{*}{5}&1 & 29.736 & 0.838 & 0.252 & 1.794 & 25.501 & 0.734 & 0.323 & 15.707 \\
&3                   & 29.737 & 0.838 & 0.252 & 2.641 & 26.706 & 0.777 & 0.267 & 39.492 \\
&5                   & 29.737 & 0.838 & 0.252 & 3.418 & 27.006 & 0.786 & 0.265 & 63.044 \\

\bottomrule
\end{tabular}
\end{adjustbox}
\end{table}
\begin{table}[ht]
\centering
\caption{Sensitivity analysis of the main hyperparameters for Super resolution 4×, evaluated on 10 FFHQ images. The table shows the performance while sweeping the ADMM penalty $\rho$\, and the constraint radius $\varepsilon$. The results demonstrate that our method is robust, with performance remaining remarkably stable across a wide range of values, which reduces the need for extensive hyperparameter tuning.
}
\label{apdix_hyperparams}
\hspace*{-2em}
\begin{adjustbox}{width=0.3\textwidth}
\begin{tabular}{cccc}
\hline
$\rho$  & PSNR    & SSIM  & LPIPS \\ \hline
10   & 27.546 & 0.783 & 0.339 \\
100   & 29.614 & 0.836 & 0.267 \\
200  & 29.740 & 0.839 & 0.256 \\
500  & 29.565 & 0.828 & 0.260 \\
1000 & 29.363 & 0.816 & 0.276 \\ \hline
\end{tabular}
\end{adjustbox}
\hspace{0.3em}
\hspace{0.3em}
\begin{adjustbox}{width=0.295\textwidth}
\begin{tabular}{cccc}
\hline
$\varepsilon$ & PSNR  & SSIM  & LPIPS \\ \hline
0    & 29.739 & 0.839 & 0.255 \\
0.01 & 29.739 & 0.839 & 0.255 \\
0.05  & 29.740 & 0.839 & 0.256 \\
0.1  & 29.740 & 0.839 & 0.256 \\
1    & 29.726 & 0.839 & 0.258 \\ \hline
\end{tabular}
\end{adjustbox}
\end{table}
\begin{table}[!htb]
\centering
\caption{Performance of the hybrid pixel–latent schedule with varying $\sigma_{\text{switch}}$ values for 4× super-resolution on 10 FFHQ images. The schedule performs correction in pixel space when $\sigma_t > \sigma_{\text{switch}}$ and in latent space otherwise. The results, measured on an RTX 6000 Ada GPU, show that a balanced approach ($\sigma_{\text{switch}} = 1.0$ for the linear task) is more effective than a purely pixel-space ($<0.0$) or purely latent-space ($>10.0$) correction strategy.}
  \label{tab:switch_sweep}
\begin{minipage}[t]{0.43\textwidth}
  \captionsetup{type=table}
  \centering
  \begin{adjustbox}{width=\textwidth}
    \begin{tabular}{lcccc}
    \toprule
    $\sigma_{\text{switch}}$ & PSNR & SSIM & LPIPS & Run-time (s) \\
    \midrule
    $<0.0$ & 24.283 & 0.553 & 0.469 & 3.082 \\
    0.2    & 27.185 & 0.681 & 0.374 & 11.727 \\
    1    & 28.809 & 0.793 & 0.302 & 38.014 \\
    5 & 28.828 & 0.791 & 0.306 & 73.138 \\
    $>$10.0    & 28.819 & 0.791 & 0.307 & 90.646 \\
    \bottomrule
    \end{tabular}
  \end{adjustbox}
\end{minipage}
\end{table}

\begin{figure}[t]
        \centering
        \includegraphics[width=\linewidth]{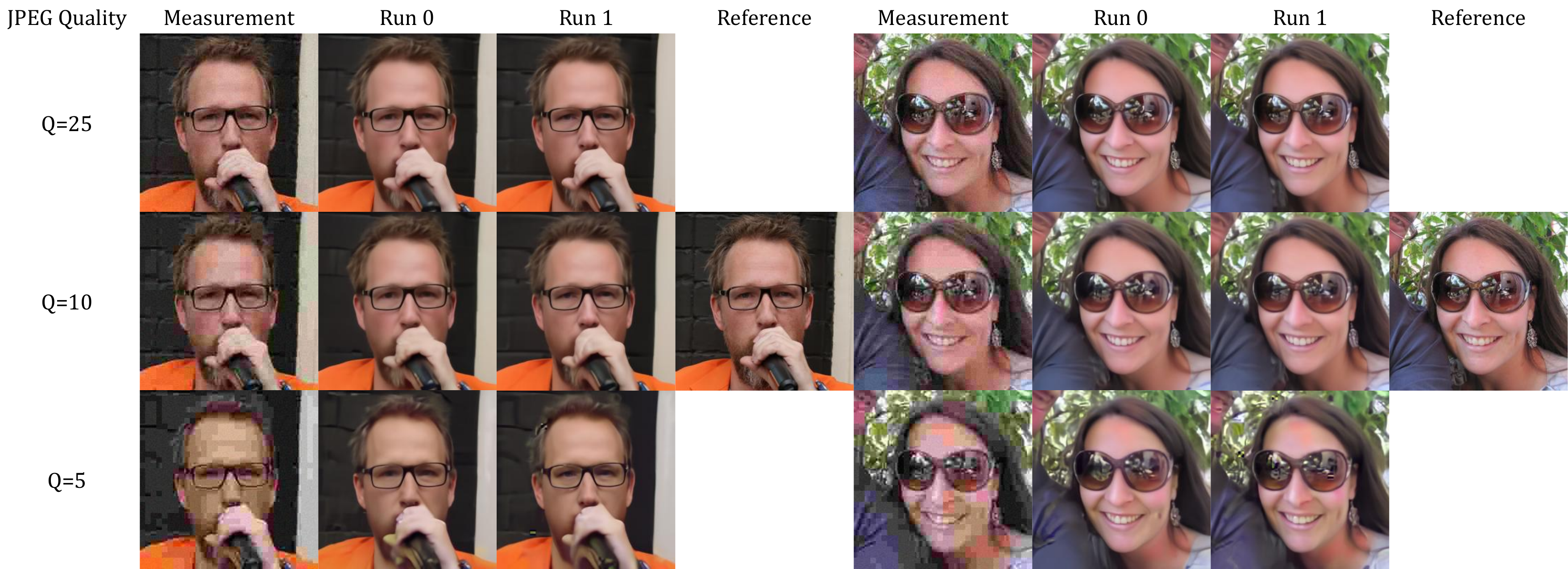}
        \caption{Qualitative results for JPEG restoration on FFHQ using FAST-DIPS with a differentiable surrogate operator. We display the measurement, reconstruction, and the ground-truth reference across three compression levels: JPEG Quality 5, 10, and 25.}
\label{fig:jpeg_qualitative}
        \label{fig:apdix_jpeg}
\end{figure}

\begin{figure}[t]
  \centering
  \begin{minipage}{0.4\linewidth}
  \centering
  \captionof{table}{Quantitative evaluation of JPEG restoration on FFHQ across JPEG quality factors 5, 10, and 25.}
    \label{tab:apdix_jpeg}
    \scriptsize
    \begin{tabular}{cccc}
    \toprule
    \text{JPEG Quality}& PSNR & SSIM & LPIPS \\
    \midrule
    25& 31.175 & 0.869 & 0.229  \\
    10& 29.343 & 0.834 & 0.267 \\
    5 & 26.749 & 0.788 & 0.338 \\
    \bottomrule
    \end{tabular}
  \end{minipage}\hfill
  \begin{minipage}{0.55\linewidth}
  \centering
  \captionof{table}{Quantitative evaluation of high-resolution ($512{\times}512$) Gaussian deblurring on FFHQ using 10 samples, conducted on an RTX 6000 Ada GPU.}
    \label{tab:ffhq512_latent}
    \scriptsize
    \begin{tabular}{lcccc}
    \toprule
    \textbf{Method} & PSNR & SSIM  & LPIPS & Runtime (s) \\
    \midrule
    Latent-DAPS & 28.308 & 0.809 & 0.428 & 580.664 \\
    \textbf{Ours(Latent)} & \textbf{31.438} & \textbf{0.852} & \textbf{0.356} & \textbf{247.399} \\
    \bottomrule
    \end{tabular}
  \end{minipage}
\end{figure}

\paragraph{Effectiveness of \(K\) ADMM iterations and \(S\) gradient steps.}
We performed an ablation study to evaluate the influence of both the number of ADMM iterations (\(K\)) and the number of gradient steps (\(S\)) in the $\mathbf{x}$-update stage. We tested \(K \in \{2,3,5\}\) and \(S \in \{1,3,5\}\) across both linear and nonlinear tasks. As shown in Table~\ref{tab:multi_x}, increasing \(K\) consistently improves performance for both linear tasks (e.g., super-resolution) and nonlinear tasks (e.g., nonlinear deblurring), though naturally at the cost of higher computation. This suggests that the refinement offered by additional ADMM iterations is broadly beneficial regardless of task difficulty.

In contrast, the effect of increasing the number of gradient steps \(S\) is task-dependent. For linear tasks, the performance gain is marginal relative to the additional run-time. However, for nonlinear tasks, the reconstruction metrics steadily improve as \(S\) increases, indicating that additional gradient refinement helps the solver locate more accurate correction points in challenging settings. Overall, both \(K\) and \(S\) present a clear quality--cost trade-off, with \(K\) providing general improvements and \(S\) offering additional benefits especially for complex nonlinear problems.

\paragraph{Hyperparameter Robustness.}
We investigate the robustness of our method to its main hyperparameters. Table~\ref{apdix_hyperparams} shows the results for the super-resolution task when sweeping the ADMM penalty $\rho$, and the constraint radius $\varepsilon$. The performance remains stable across a wide range of values for each parameter. This highlights a key advantage of \textsc{fast-dips}: it is not sensitive to fine-tuning and delivers strong results with default settings, enhancing its practicality and ease of use.

\paragraph{Hybrid Schedule Trade-off.}
In our hybrid pixel-latent framework, the $\sigma_{\text{switch}}$ parameter determines the point at which the correction process transitions from pixel space to latent space. Table~\ref{tab:switch_sweep} illustrates the resulting trade-off between performance and run-time. Performing the initial correction steps in pixel space ($\sigma_{\text{switch}} > 0$) provides a fast and effective rough update, significantly reducing the overall computation time. The subsequent switch to latent-space updates allows for more stable, fine-grained corrections that respect the generative manifold. This hybrid strategy proves highly effective, and an intermediate $\sigma_{\text{switch}}$ value offers an optimal balance between speed and reconstruction fidelity.

\paragraph{Experiments with non-Gaussian noise.} Figure~\ref{fig:apdix_poisson} and Table~\ref{tab:poisson} evaluate \textsc{fast-dips} under Poisson measurement noise with rate $\lambda_{\text{poisson}}=1$, showing that our method remains accurate and perceptually faithful beyond the additive white Gaussian noise (AWGN) setting. The robustness arises from replacing a parametric likelihood with a set-valued surrogate: at each diffusion level, we solve a denoiser-anchored, hard-constrained proximal problem that enforces feasibility within a measurement-space feasibility ball. This set-valued constraint is robust to noise miscalibration and remains effective empirically beyond the AWGN setting. Our analytic step-size rules yield stable optimization across tasks, supporting practical insensitivity to corruption type.

\paragraph{Extension to Non-Differentiable Operators.}

To address the applicability of \textsc{fast-dips} to non-differentiable degradations, we evaluated our framework on JPEG restoration. While the standard JPEG compression pipeline involves a non-differentiable quantization step, it can be effectively handled using a differentiable surrogate (\cite{Reich2024JPEG}).

We applied \textsc{fast-dips} using this differentiable surrogate to guide the restoration process under measurement noise $\beta=0.05$. Qualitative results are presented in Figure~\ref{fig:apdix_jpeg}, demonstrating that our method effectively suppresses blocking artifacts and restores high-frequency details. Quantitative metrics in Table~\ref{tab:apdix_jpeg} further confirm competitive reconstruction performance. These results suggest that \textsc{fast-dips} remains highly effective for formally non-differentiable problems, provided a differentiable proxy of the forward operator is available.

\paragraph{High-resolution image data.}
We further conducted high-resolution experiments on the FFHQ dataset at $512{\times}512$ resolution in the latent setting, going beyond the standard $256{\times}256$ regime. As shown in Table~\ref{tab:ffhq512_latent}, our method improves PSNR, SSIM, and LPIPS compared to Latent DAPS—the most recent state-of-the-art latent diffusion–based inverse problem solver—while also achieving approximately $2.3\times$ faster run-time. These results suggest that our hybrid approach can handle high-resolution inputs effectively and benefit from efficient computations in the latent space.

\paragraph{Qualitative Results.}
Figures~\ref{fig:apdix_traj}-\ref{fig:apdix_imagenet_hdr} provide additional qualitative samples for a comprehensive set of eight problems on FFHQ and ImageNet datasets. These results visually demonstrate the high-quality and consistent reconstructions achieved by both the pixel-space (\textsc{fast-dips}) and latent-space (Latent \textsc{fast-dips}) versions of our method.

\subsection{Future work and Limitations}
\label{app:limit_future}

Our proposed method, \textsc{fast-dips}, provides a robust framework for solving inverse problems, and its hyperparameter stability opens up several promising directions for future work. The framework is defined by a few key hyperparameters ($\rho$, $\varepsilon, \sigma_{\text{switch}}$), and as shown in the additional experiments (Tables~\ref{apdix_hyperparams} and \ref{tab:switch_sweep}), it exhibits robustness across a wide range of their values, enhancing its practical usability. Among these, the ADMM penalty parameter $\rho$ can be considered the most influential. While our experiments show stable performance with a fixed value, integrating adaptive penalty selection strategies could further improve convergence and robustness. Similarly, exploring an optimal or adaptive schedule for the hybrid switching point $\sigma_{\text{switch}}$ remains another interesting avenue for research.

Despite these strengths and opportunities, we also acknowledge a primary limitation of the current framework: its dependency on differentiable forward operators. \textsc{fast-dips} is ``adjoint-free'' in the sense that it does not require a hand-coded adjoint operator. However, its efficiency heavily relies on automatic differentiation to compute VJP and JVP needed for the analytic step size $\alpha^{\star}$. This implicitly assumes that the forward operator $\mathcal{A}$ is (at least piecewise) differentiable. For problems involving non-differentiable operators or black-box simulators where gradients are unavailable, our current approach cannot be directly applied. Future work could explore extensions using zeroth-order optimization techniques or proximal gradient methods that can handle non-differentiable terms.

\newpage
\begin{figure}
\centering
\makebox[\textwidth][c]{%
    \includegraphics[width=\linewidth]{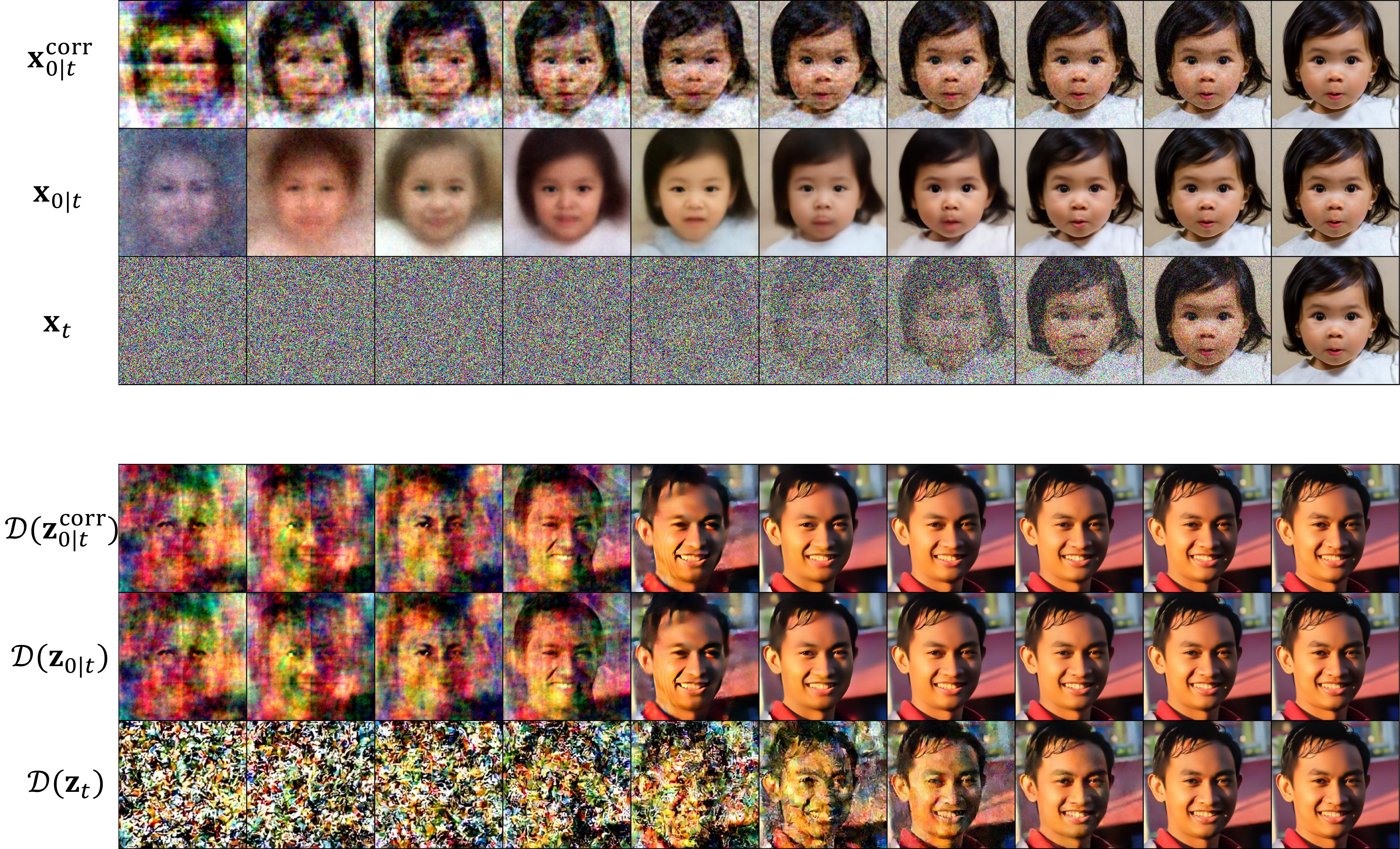}
  }
\caption{Phase Retrieval trajectory under \textsc{fast-dips} and Latent \textsc{fast-dips} with intermediate iterates along the diffusion schedule.}
\label{fig:apdix_traj}
\end{figure}
\clearpage
\begin{figure}
\centering
\makebox[\textwidth][c]{%
    \includegraphics[width=\linewidth]{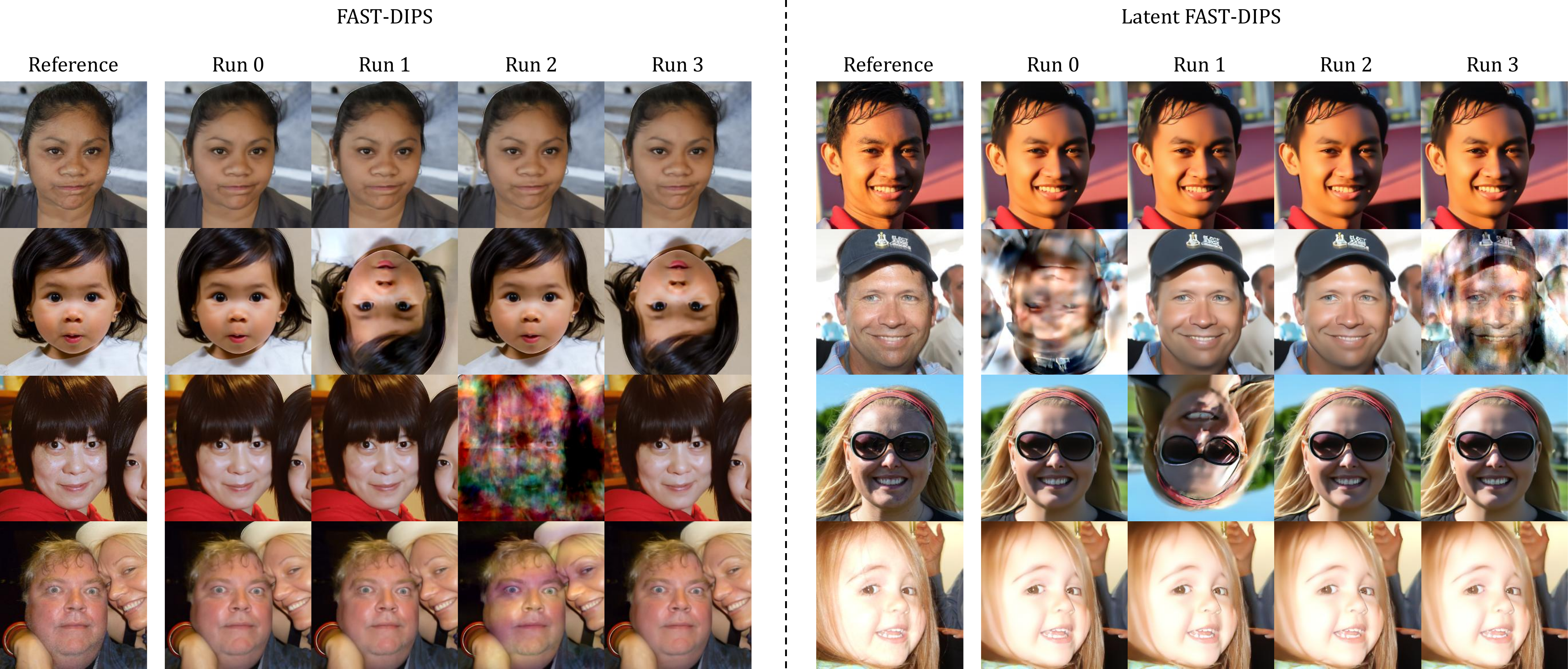}
  }
\caption{Additional qualitative results for \textbf{Phase Retrieval}. We show Measurement, Reconstruction, and Reference for both \textsc{fast-dips} and Latent \textsc{fast-dips} across four runs (Run 0–3).}
\label{fig:apdix_pr}
\end{figure}

\begin{figure}
\centering
\makebox[\textwidth][c]{%
    \includegraphics[width=\linewidth]{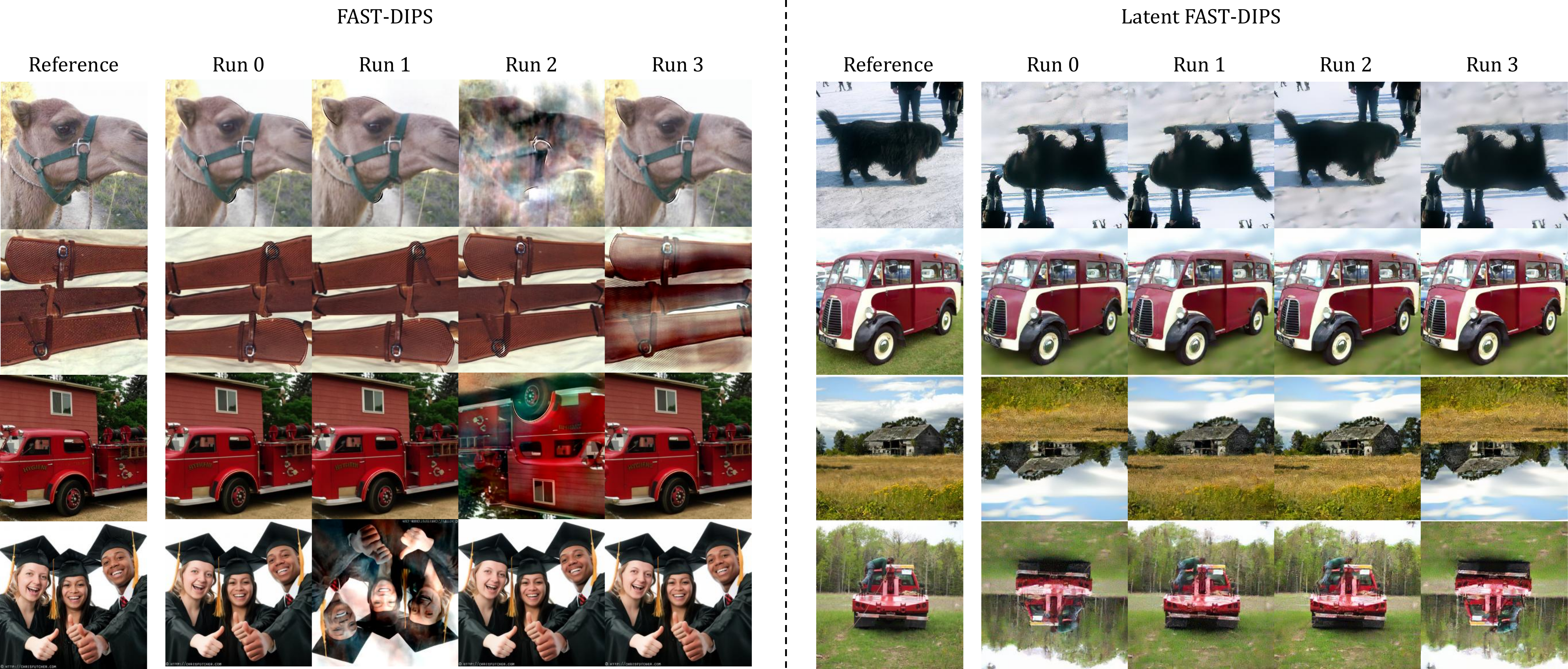}
  }
\caption{Qualitative results for \textbf{Phase Retrieval} on ImageNet dataset. We show Measurement, Reconstruction, and Reference for both \textsc{fast-dips} and Latent \textsc{fast-dips} across four runs (Run 0–3).}
\label{fig:apdix_imagenet_pr}
\end{figure}

\clearpage

\begin{figure}
\centering
\makebox[\textwidth][c]{%
    \includegraphics[width=0.9\linewidth]{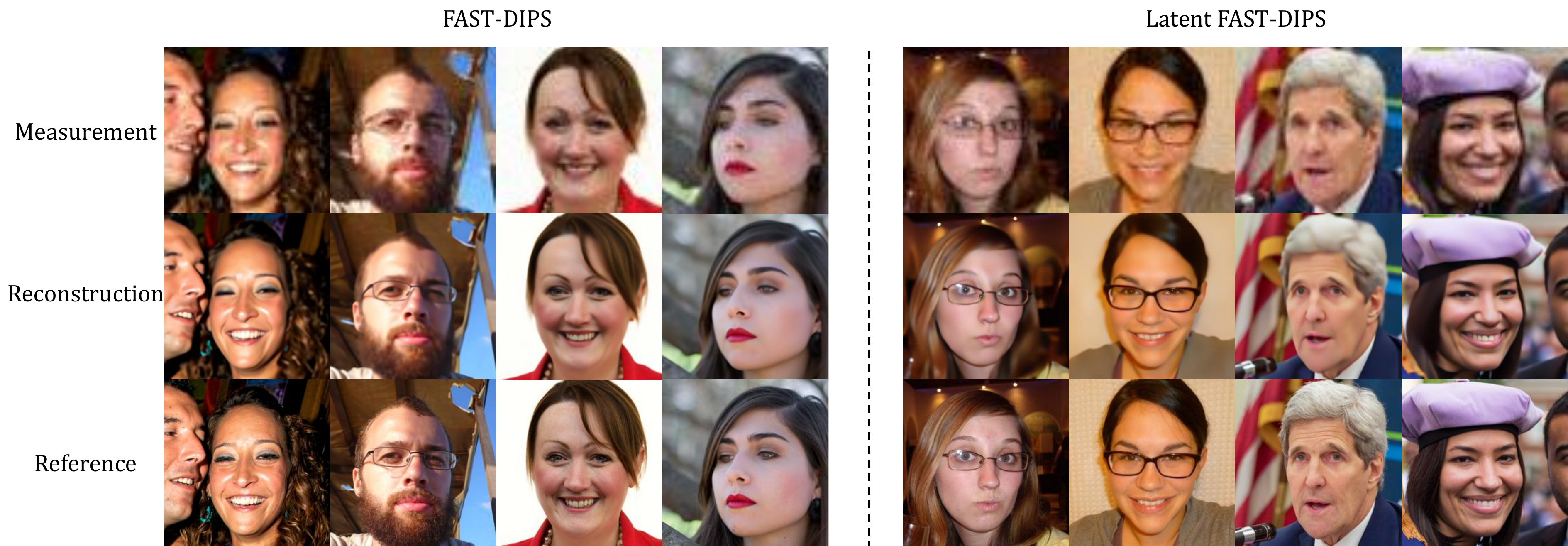}
  }
\caption{Additional qualitative results for \textbf{Super-Resolution} $\times$4. Measurement, Reconstruction, and Reference are shown for \textsc{fast-dips} and Latent \textsc{fast-dips}.}
\vspace*{-1em}
\label{fig:apdix_sr}
\end{figure}

\begin{figure}
\centering
\makebox[\textwidth][c]{%
    \includegraphics[width=0.9\linewidth]{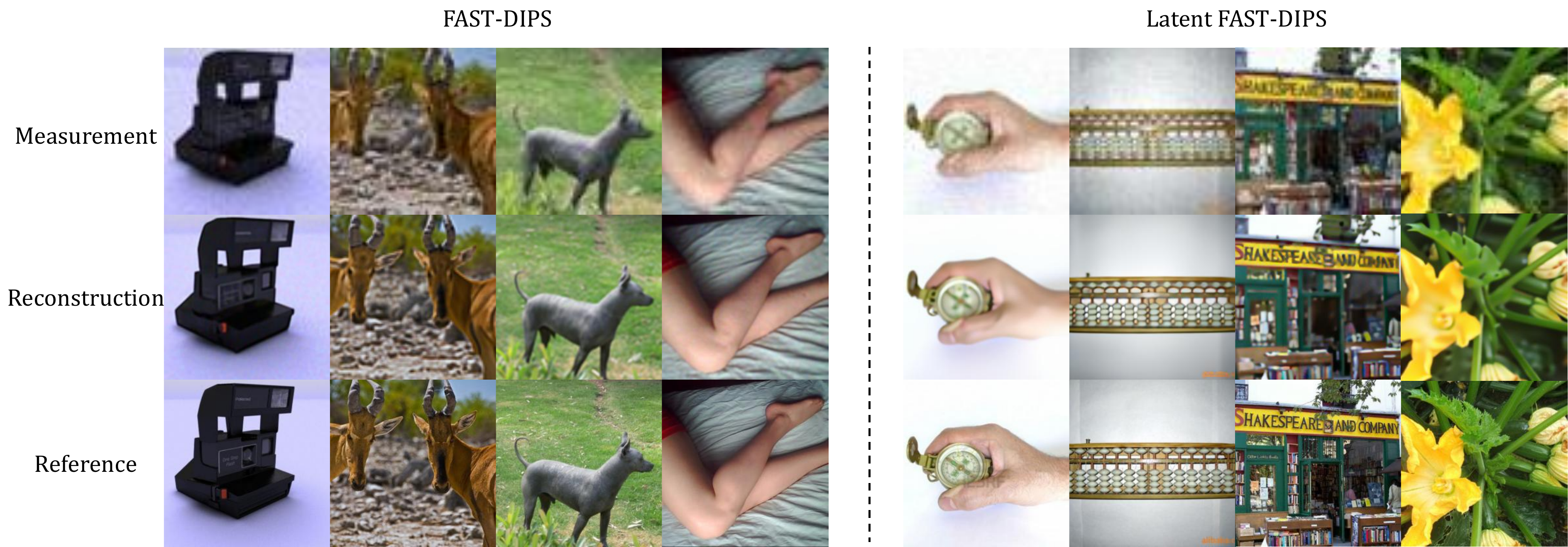}
  }
\caption{Qualitative results for \textbf{Super-Resolution} $\times$4 on ImageNet dataset. Measurement, Reconstruction, and Reference are shown for \textsc{fast-dips} and Latent \textsc{fast-dips}.}
\vspace*{-1em}
\label{fig:apdix_imagenet_sr}
\end{figure}

\begin{figure}
\centering
\makebox[\textwidth][c]{%
    \includegraphics[width=0.9\linewidth]{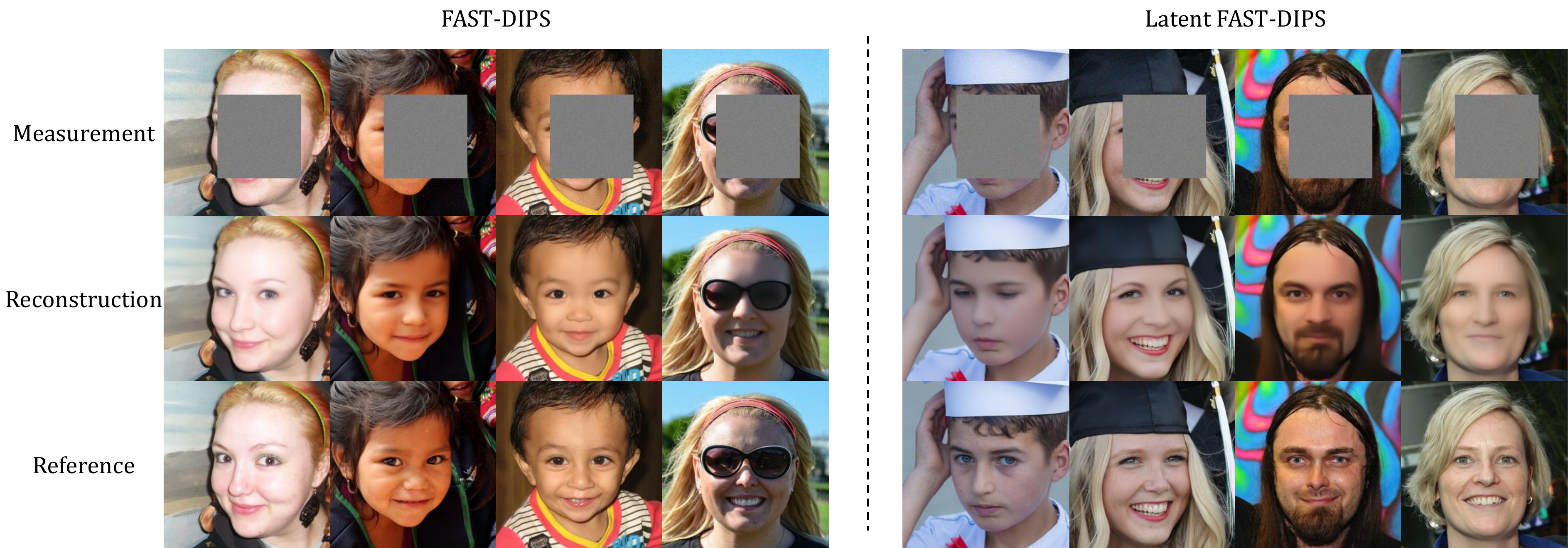}
  }
\caption{Additional qualitative results for \textbf{Inpaint(box)}. We display Measurement, Reconstruction, and Reference for \textsc{fast-dips} and Latent \textsc{fast-dips}.}
\label{fig:apdix_box_inp}
\vspace*{-1em}
\end{figure}

\begin{figure}
\centering
\makebox[\textwidth][c]{%
    \includegraphics[width=0.9\linewidth]{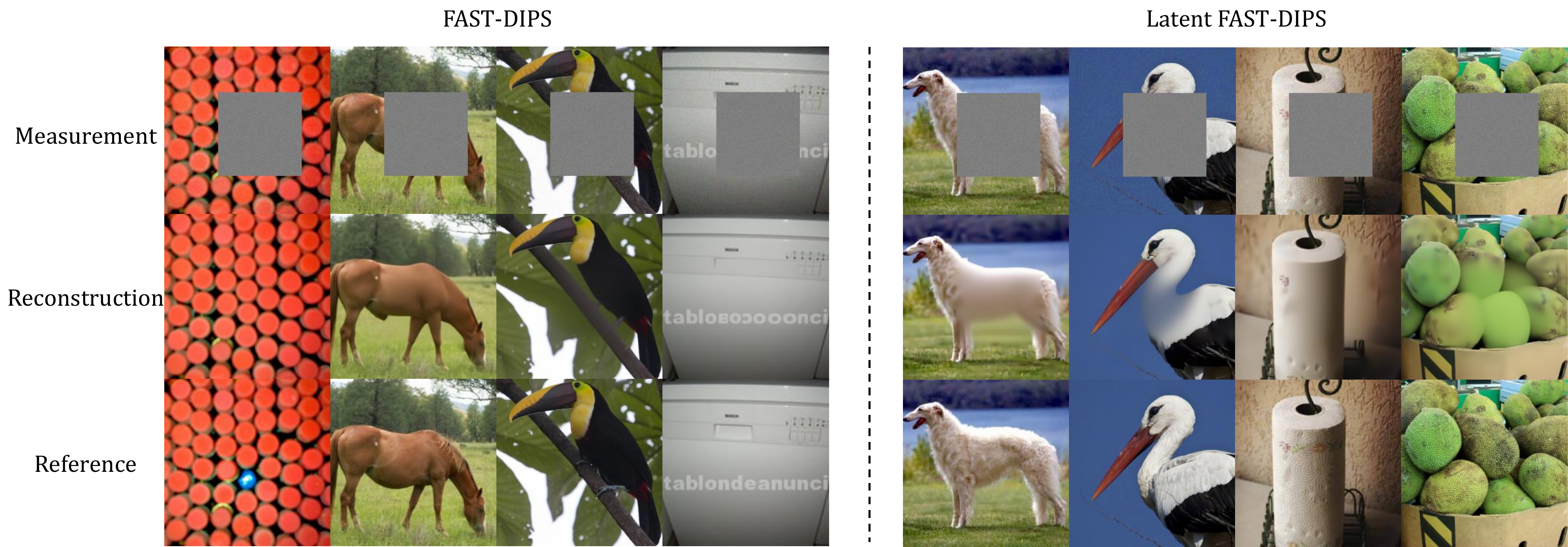}
  }
\caption{Qualitative results for \textbf{Inpaint(box)} on ImageNet dataset. We display Measurement, Reconstruction, and Reference for \textsc{fast-dips} and Latent \textsc{fast-dips}.}
\label{fig:apdix_imagenet_box_inp}
\end{figure}

\clearpage

\begin{figure}
\centering
\makebox[\textwidth][c]{%
    \includegraphics[width=0.9\linewidth]{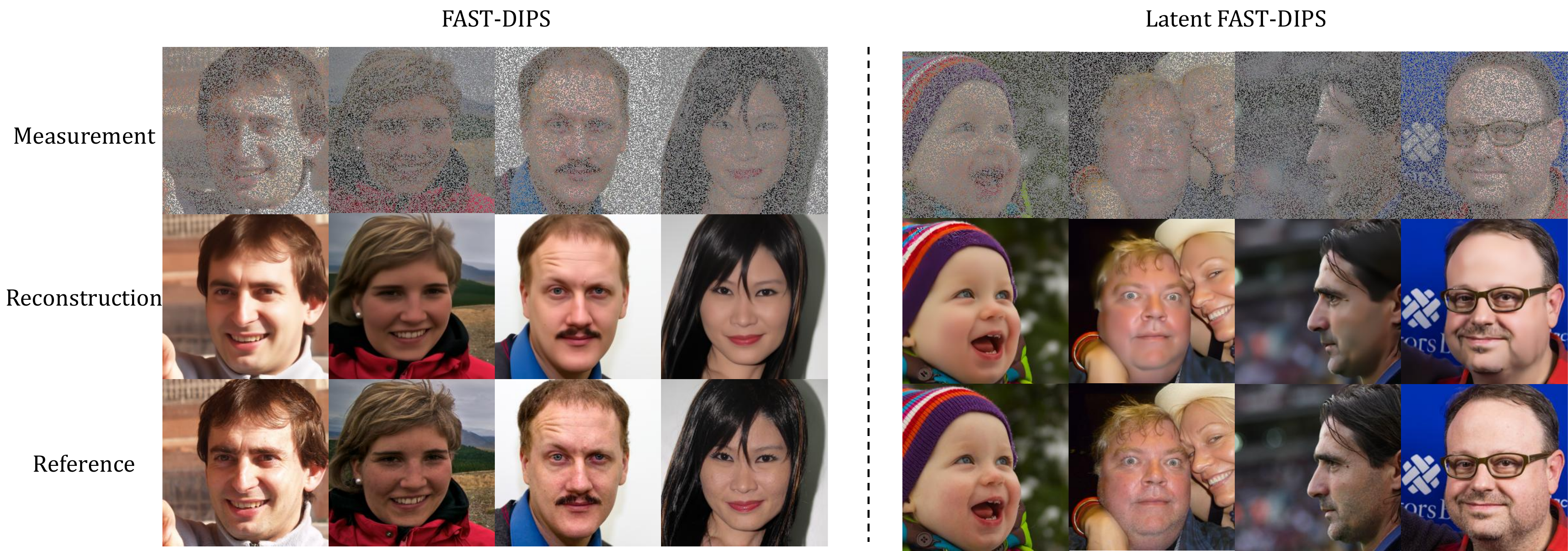}
  }
\caption{Additional qualitative results for \textbf{Inpaint(random)}. Measurement, Reconstruction, and Reference with \textsc{fast-dips} and Latent \textsc{fast-dips}.}
\vspace*{-1em}
\label{fig:apdix_rand_inp}
\end{figure}

\begin{figure}
\centering
\makebox[\textwidth][c]{%
    \includegraphics[width=0.9\linewidth]{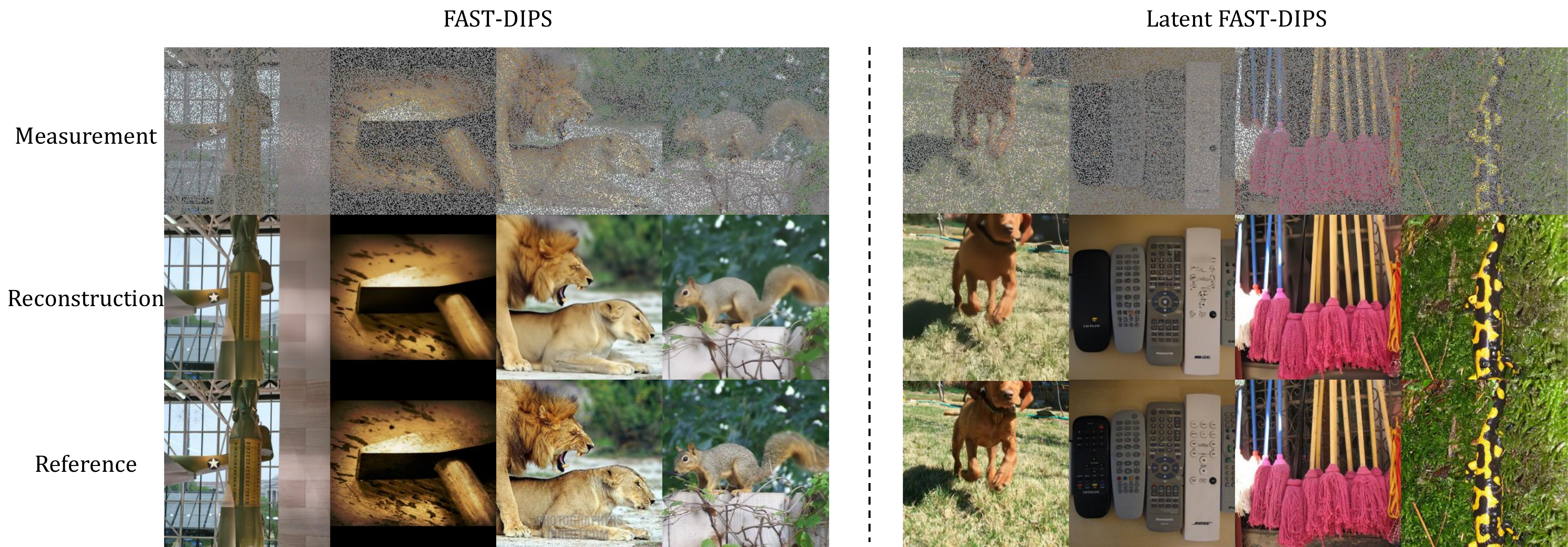}
  }
\caption{Qualitative results for \textbf{Inpaint(random)} on ImageNet dataset. Measurement, Reconstruction, and Reference with \textsc{fast-dips} and Latent \textsc{fast-dips}.}
\vspace*{-1em}
\label{fig:apdix_imagenet_rand_inp}
\end{figure}

\begin{figure}
\centering
\makebox[\textwidth][c]{%
    \includegraphics[width=0.9\linewidth]{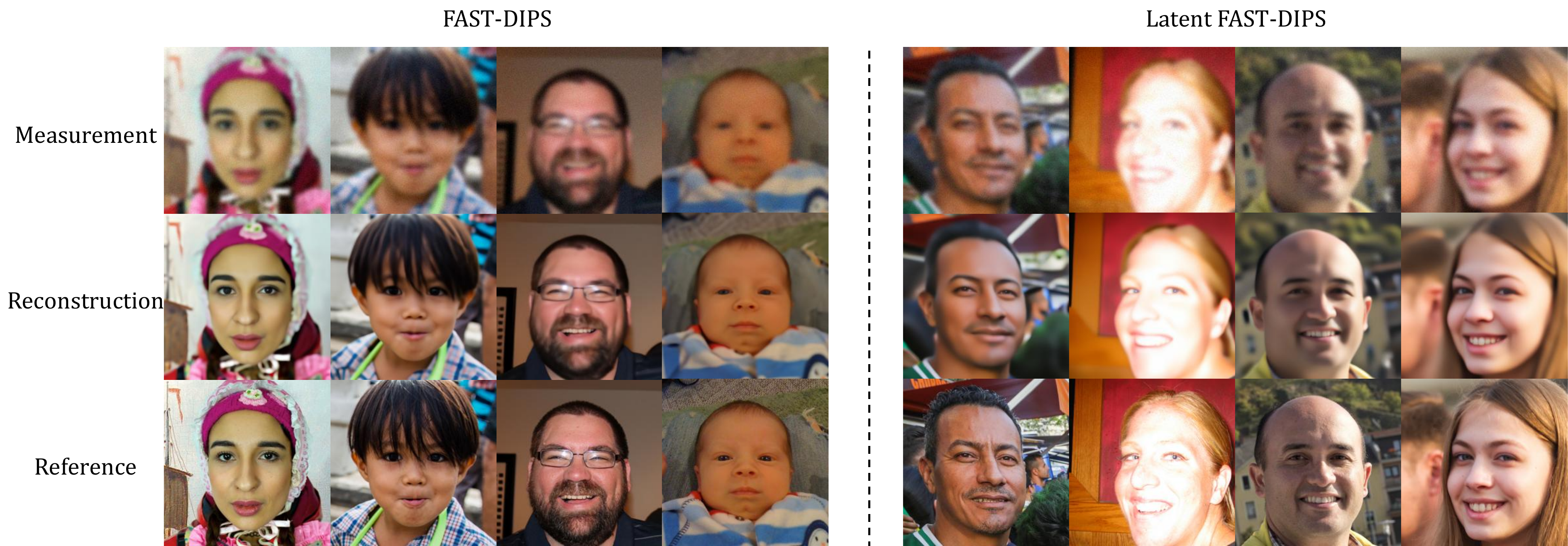}
  }
\caption{Additional qualitative results for \textbf{Gaussian deblurring}. We show Measurement, Reconstruction, and Reference for \textsc{fast-dips} and Latent \textsc{fast-dips}.}
\vspace*{-1em}
\label{fig:apdix_gb}
\end{figure}

\begin{figure}
\centering
\makebox[\textwidth][c]{%
    \includegraphics[width=0.9\linewidth]{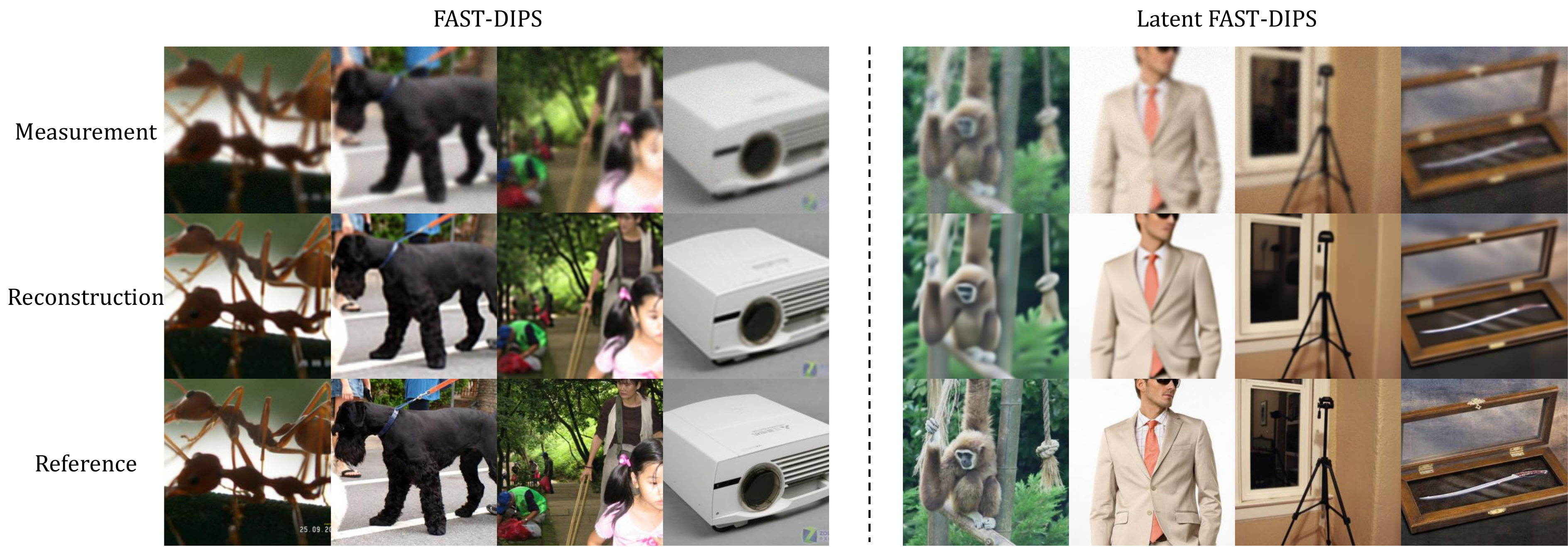}
  }
\caption{Qualitative results for \textbf{Gaussian deblurring} on ImageNet dataset. We show Measurement, Reconstruction, and Reference for \textsc{fast-dips} and Latent \textsc{fast-dips}.}
\label{fig:apdix_imagenet_gb}
\end{figure}

\clearpage

\begin{figure}
\centering
\makebox[\textwidth][c]{%
    \includegraphics[width=0.9\linewidth]{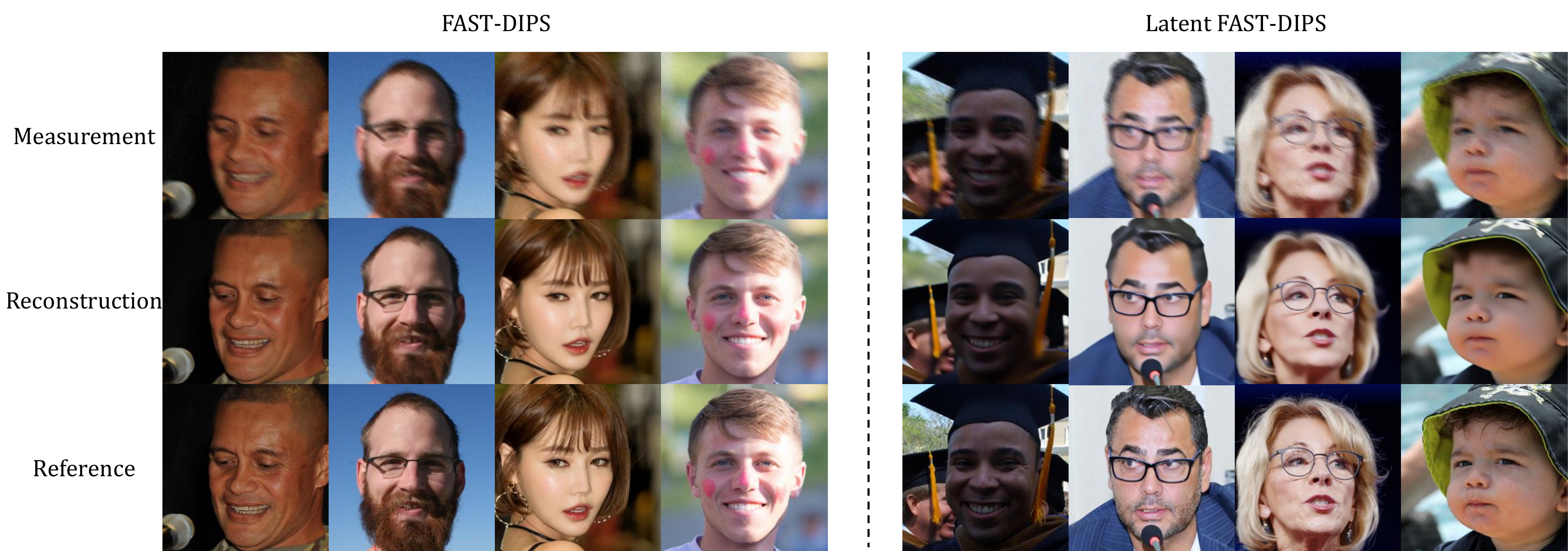}
  }
\caption{Additional qualitative results for \textbf{Motion deblurring}. Measurement, Reconstruction, and Reference are provided for \textsc{fast-dips} and Latent \textsc{fast-dips}.}
\vspace*{-1em}
\label{fig:apdix_mb}
\end{figure}

\begin{figure}
\centering
\makebox[\textwidth][c]{%
    \includegraphics[width=0.9\linewidth]{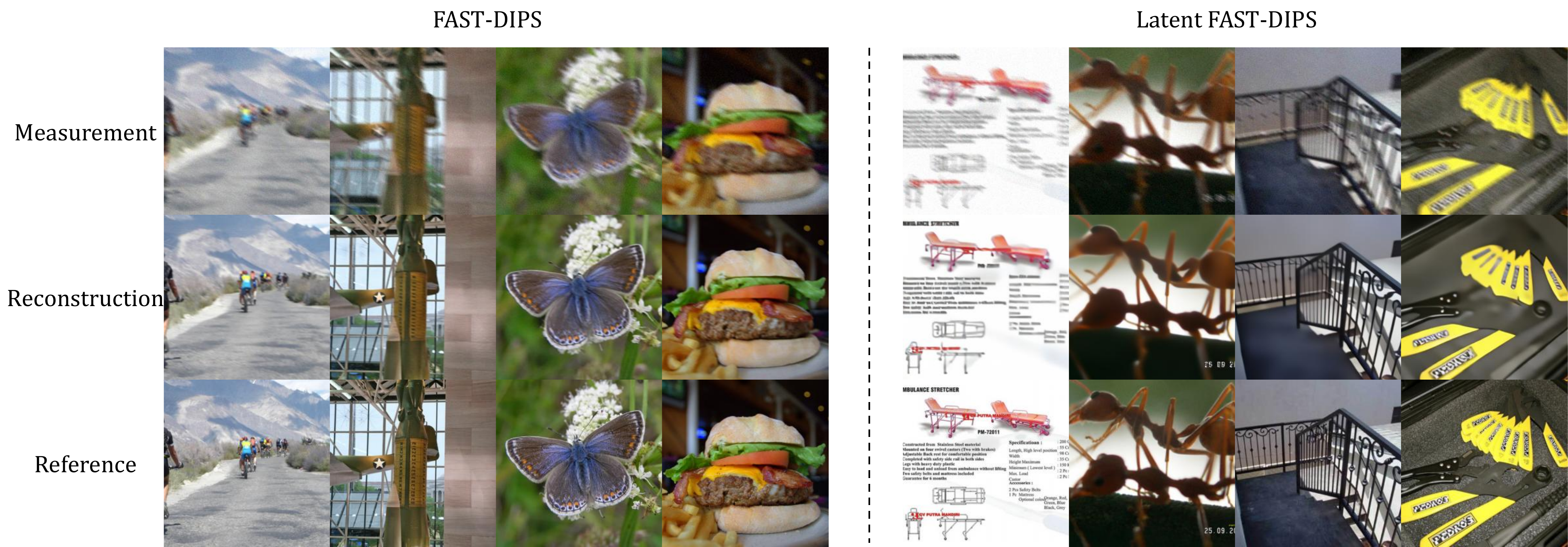}
  }
\caption{Qualitative results for \textbf{Motion deblurring} on ImageNet dataset. Measurement, Reconstruction, and Reference are provided for \textsc{fast-dips} and Latent \textsc{fast-dips}.}
\vspace*{-1em}
\label{fig:apdix_imagenet_mb}
\end{figure}

\begin{figure}
\centering
\makebox[\textwidth][c]{%
    \includegraphics[width=0.9\linewidth]{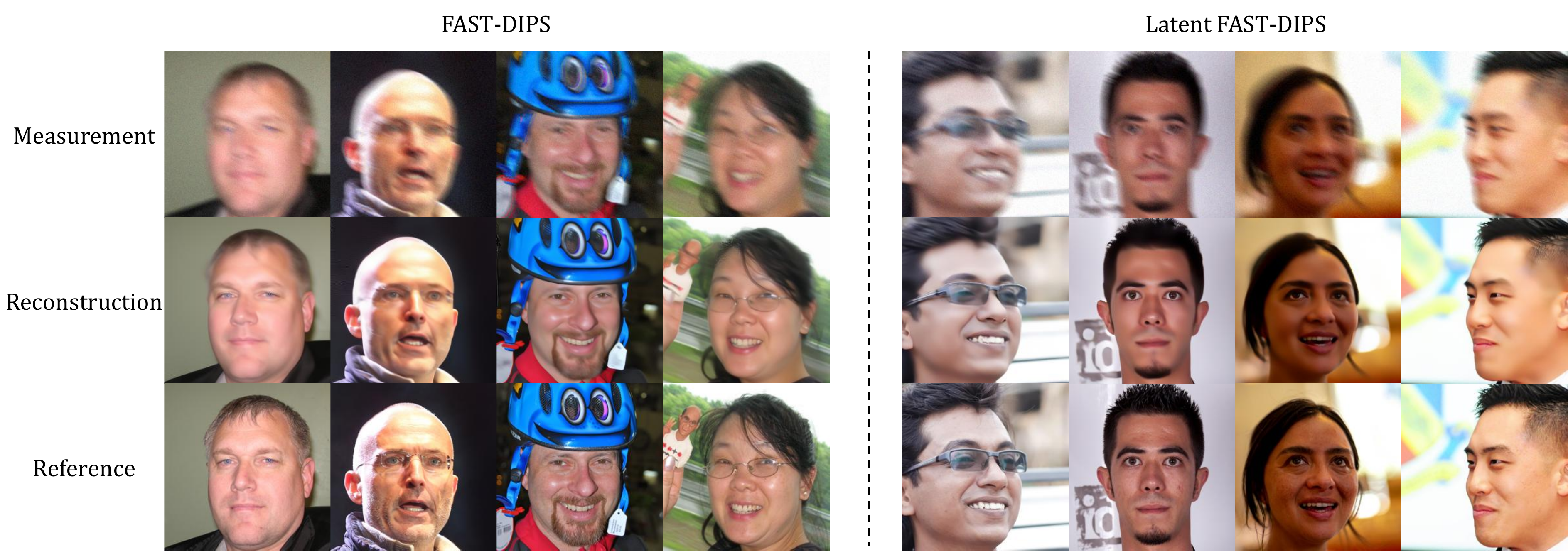}
  }
\caption{Additional qualitative results for \textbf{Nonlinear deblurring}. We present Measurement, Reconstruction, and Reference for \textsc{fast-dips} and Latent \textsc{fast-dips}.}
\label{fig:apdix_nb}
\end{figure}

\begin{figure}
\centering
\makebox[\textwidth][c]{%
    \includegraphics[width=0.9\linewidth]{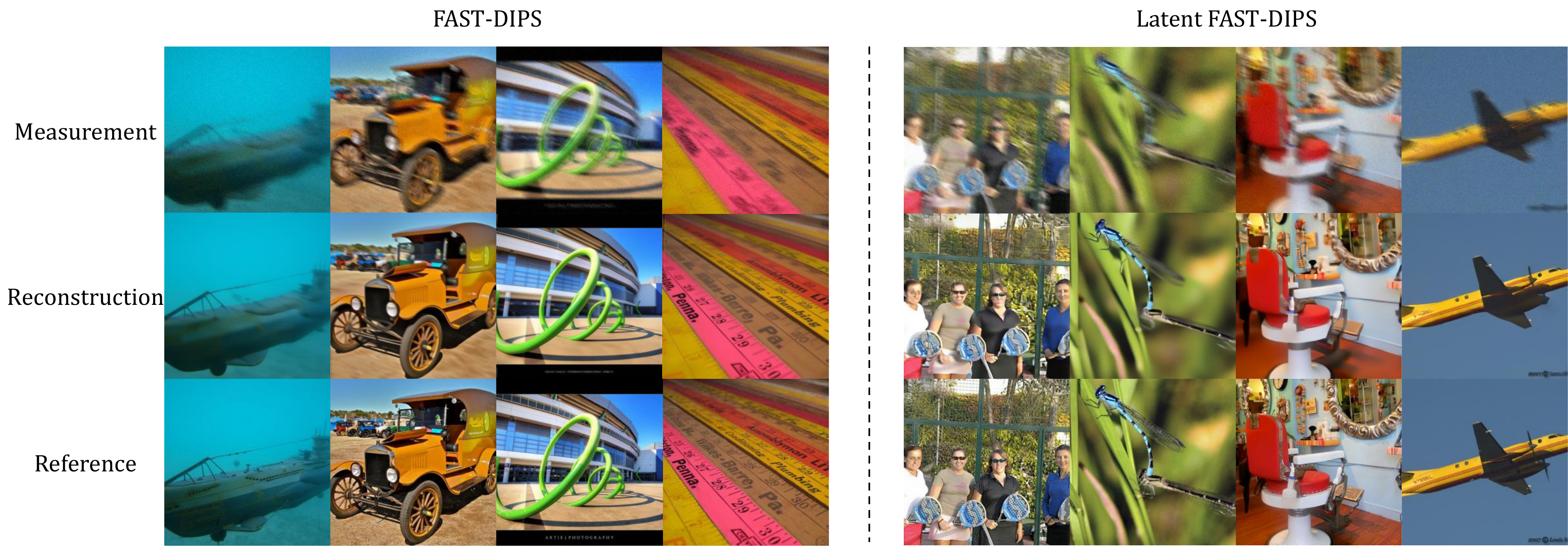}
  }
\caption{Qualitative results for \textbf{Nonlinear deblurring} on ImageNet dataset. We present Measurement, Reconstruction, and Reference for \textsc{fast-dips} and Latent \textsc{fast-dips}.}

\label{fig:apdix_imagenet_nb}
\end{figure}

\clearpage

\begin{figure}
\centering
\makebox[\textwidth][c]{%
    \includegraphics[width=0.9\linewidth]{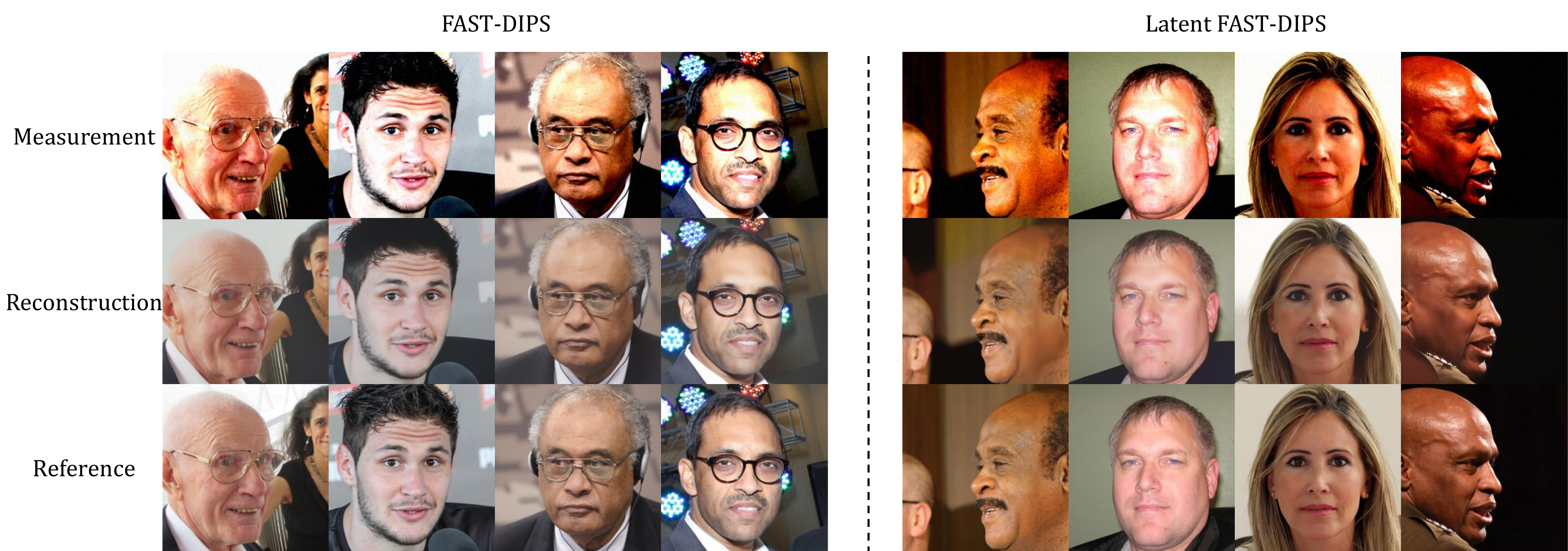}
  }
\caption{Additional qualitative results for \textbf{High Dynamic Range}. Measurement, Reconstruction, and Reference for \textsc{fast-dips} and Latent \textsc{fast-dips}.}
\label{fig:apdix_hdr}
\end{figure}

\begin{figure}
\centering
\makebox[\textwidth][c]{%
    \includegraphics[width=0.9\linewidth]{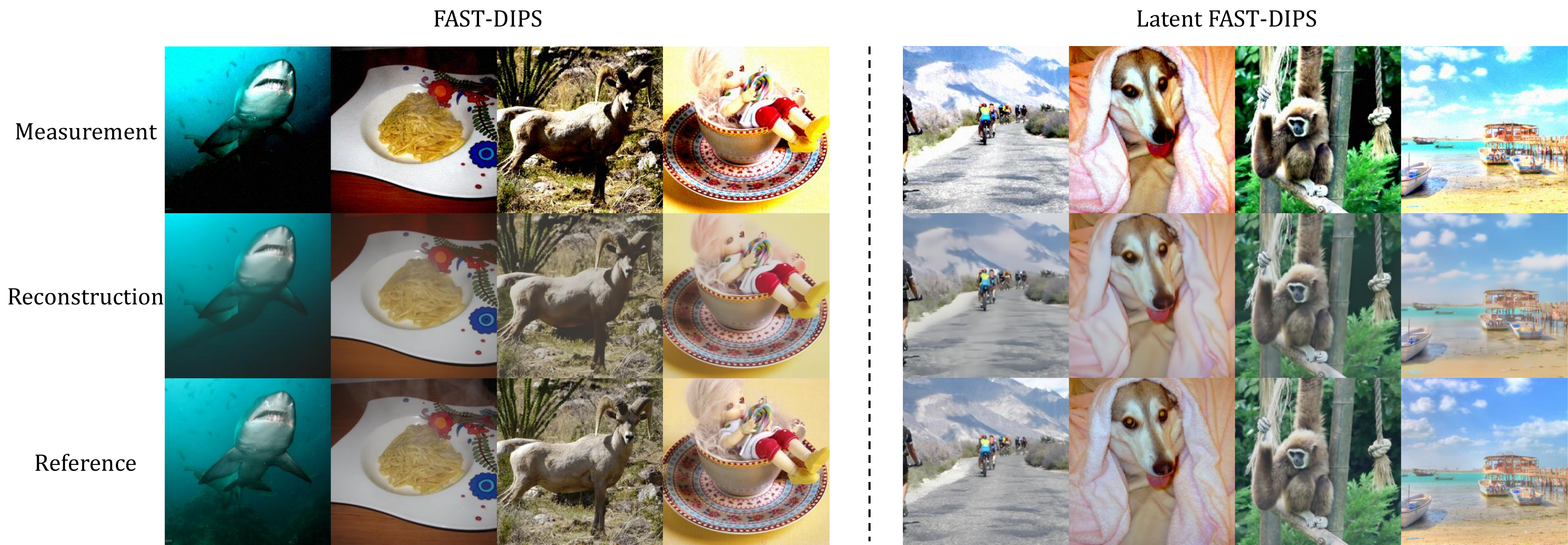}
  }
\caption{Qualitative results for \textbf{High Dynamic Range} on ImageNet dataset. Measurement, Reconstruction, and Reference for \textsc{fast-dips} and Latent \textsc{fast-dips}.}
\label{fig:apdix_imagenet_hdr}
\end{figure}

\end{document}